\setlist[enumerate]{leftmargin=*}
\setlist[description]{leftmargin=*}
\newcommand{\Rea}{\mathbb{R}}
\newcommand{\R}{\mathbb{R}}
\newcommand{\sign}{\operatornamewithlimits{sign}}
\newcommand{\norm}[1]{\left\lVert#1\right\rVert}
\newcommand{\Span}{\operatornamewithlimits{span}}
\newcommand{\innerprod}[2]{\left \langle#1,#2\right\rangle}
\newcommand{\Nat}{\mathbb{N}}
\newcommand{\wassone}[2]{\wassone \left( #1, #2\right)}
\newcommand{\relu}{\operatorname{ReLU}}
\newcommand{\lrelu}{\operatorname{LReLU}}
\newcommand{\set}[1]{\left \{#1\right\}}
\newcommand{\abs}[1]{\left | #1\right |}
\def\mathcolor#1#{\@mathcolor{#1}}
\def\@mathcolor#1#2#3{%
  \protect\leavevmode
  \begingroup
    \color#1{#2}#3%
  \endgroup
}
\newcommand\restr[2]{{
  \left.\kern-\nulldelimiterspace 
  #1 
  \vphantom{
  \big|
  } 
  \right|_{#2} 
  }}
\newcommand{\idmat}{\mathrm{Id}}
\newcommand{\infnet}{I_\mathrm{net}}
\newcommand{\numconv}{\mathfrak{n}_c}
\newcommand{\paren}[1]{\left ( #1\right)}
\newcommand{\mattinnorm}[1]{\norm{#1}_n}
\newcommand{\mattidnorm}[1]{\norm{#1}_D}
\newcommand{\mattimnorm}[1]{\norm{#1}_m}
\newcommand{\mattidifmat}{\mathscr{D}}
\newcommand{\nnet}{\mathcal{NN}}
\newcommand{\Prob}{\mathbb{P}}
\renewcommand{\vec}[1]{\boldsymbol{#1}}
\newtheorem{lemma}{Lemma}
\newtheorem{theorem}{Theorem}
\newtheorem{corollary}{Corollary}
\newtheorem{proposition}{Proposition}
\theoremstyle{definition}
\newtheorem{definition}{Definition}
\newtheorem{example}{Example}
\theoremstyle{remark}
\newtheorem{remark}{Remark}
\newcommand{\MAPrm}[1]{}
\newcommand{\bmat}[1]{\begin{bmatrix} #1 \end{bmatrix}}
\definecolor{udpink}{rgb}{0.98431373,0.60392157,0.6}
\definecolor{udgreen}{rgb}{0.69803922,0.8745098,0.54117647}
\definecolor{udblue}{rgb}{0.65098039,0.80784314,0.89019608}
\definecolor{udbrown}{rgb}{0.69411765,0.34901961,0.15686275}
\definecolor{udviolet}{rgb}{0.41568627, 0.23921569, 0.60392157}
\definecolor{udorange}{rgb}{1.0,0.49803922,0.0}
\title{Globally Injective ReLU Networks}
\author{%
  Michael Puthawala \\
  Rice University \\
  \texttt{map19@rice.edu} \\
  \And
  Konik Kothari \\
  University of Illinois at\\
  Urbana-Champaign \\
  \texttt{kkothar3@illinois.edu} \\
  \AND
  Matti Lassas \\
  University of Helsinki \\
  \texttt{matti.lassas@helsinki.fi} \\
  \And
  Ivan Dokmani\'c \\
  University of Basel \\
  \texttt{ivan.dokmanic@unibas.ch} \\
  \And
  Maarten de Hoop \\
  Rice University \\
  \texttt{mdehoop@rice.edu} \\
}
\begin{document}

\maketitle

\begin{abstract}
Injectivity plays an important role in generative models where it enables inference; in inverse problems and compressed sensing with generative priors it is a precursor to well posedness. We establish sharp characterizations of injectivity of fully-connected and convolutional ReLU layers and networks. First, through a layerwise analysis, we show that an expansivity factor of two is necessary and sufficient for injectivity by constructing appropriate weight matrices. We show that global injectivity with iid Gaussian matrices, a commonly used tractable model, requires larger expansivity between 3.4 and 10.5. We also characterize the stability of inverting an injective network via worst-case Lipschitz constants of the inverse. We then use arguments from differential topology to study injectivity of deep networks and prove that any Lipschitz map can be approximated by an injective ReLU network. Finally, using an argument based on random projections, we show that an end-to-end---rather than layerwise---doubling of the dimension suffices for injectivity. Our results establish a theoretical basis for the study of nonlinear inverse and inference problems using neural networks.
\end{abstract}

\section{Introduction}

Many applications of deep neural networks require inverting them on their range. Given a neural network $N : \cZ \to \cX$, where $\cX$ is often the Euclidean space $\R^m$ and $\cZ$ is a lower-dimensional space, the map $N^{-1} : N(\cZ) \to \cZ$ is only well-defined when $N$ is injective. The issue of injectivity is particularly salient in two applications: generative models and (nonlinear) inverse problems. 

Generative networks model a complicated distribution $p_X$ over $\cX$ as a pushforward of a simple distribution $p_Z$ through $N$. Given an $x$ in the range of $N$, inference requires computing $p_Z(N^{-1}(x))$ which is well-posed only when $N$ is injective. 
In the analysis of inverse problems~\citep{arridge2019solving}, uniqueness of a solution is a key concern; it is tantamount to injectivity of the forward operator. Given a forward model that is known to yield uniqueness, a natural question is whether we can design a neural network that approximates it arbitrarily well while preserving uniqueness. Similarly, in compressed sensing with a generative prior $N$ and a possibly nonlinear forward operator $A$ injective on the range of $N$, we seek a latent code $z$ such that $A(N(z))$ is close to some measured $y = A(x)$. This is again only well-posed when $N$ can be inverted on its range \citep{balestriero2020max}. Beyond these motivations, injectivity is a fundamental mathematical property with numerous implications. We mention a notable example: certain injective generators can be trained with sample complexity that is polynomial in the image dimension \citep{bai2018approximability}.

\subsection{Our Results}

In this paper we study injectivity of neural networks with ReLU activations. Our contributions can be divided into layerwise results and multilayer results.

\paragraph{Layerwise results.} For a ReLU layer $f : \R^n \to \R^m$ we derive sufficient and necessary conditions for invertibility on the range. For the first time, we construct deterministic injective ReLU layers with minimal expansivity $m = 2n$. We then derive specialized results for convolutional layers which are given in terms of filter kernels instead of weight matrices. We also prove upper and lower bounds on minimal expansivity of globally injective  layers with iid Gaussian weights. This generalizes certain existing pointwise results (Theorem \ref{theorem-injGauws} and Appendix \ref{sec:proof-of-thm-2}). We finally derive the worst-case inverse Lipschitz constant for an injective ReLU layer which yields stability estimates in applications to inverse problems. 

\paragraph{Multilayer results.} A natural question is whether injective models are sufficiently expressive. Using  techniques from differential topology we prove that injective networks are \textit{universal} in the following sense: if a neural network $N_1:\cZ\rightarrow \Rea^{2n+1}$ models the data, $\cZ\subset \Rea^n$, then we can approximate $N_1$ by an injective neural network $N_2:\cZ\rightarrow \Rea^{2n+1}$. As $N_2$ is injective, the image set $N_2(\cZ)$ is a Lipschitz manifold. We then use an argument based on random projections to show that an \emph{end-to-end} expansivity by a factor of $\approx 2$ is enough for injectivity in ReLU networks, as opposed to layerwise 2-expansivity implied by the layerwise analysis.

We conclude with preliminary numerical experiments to show that imposing injectivity improves inference in GANs while preserving expressivity.

\begin{figure}
    \centering
    \includegraphics[width=.7\linewidth]{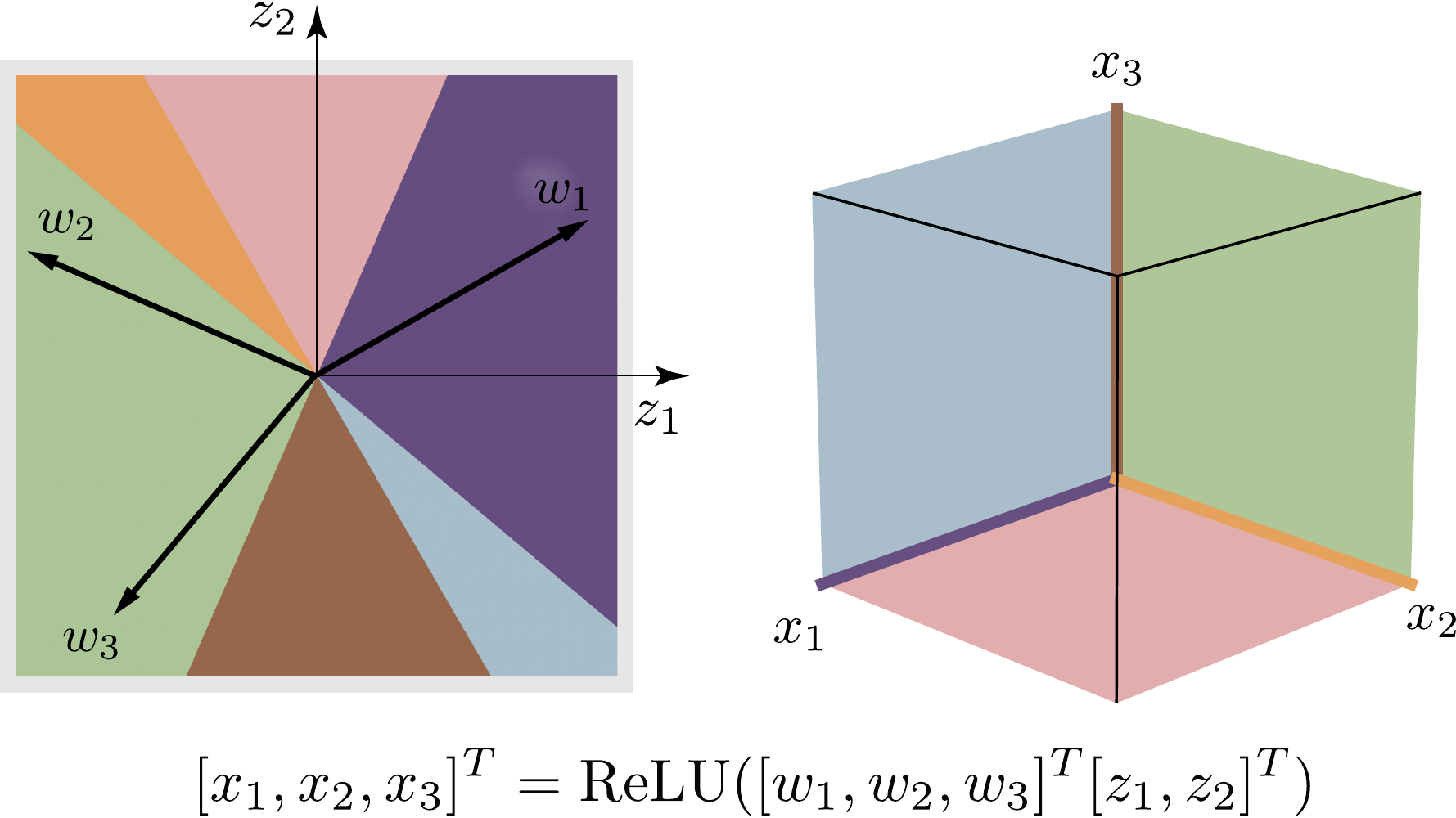}
    \caption{An illustration of a ReLU layer $N: \R^2 \to \R^3$, $x = N(z)$, that is not globally injective. Differently colored regions in the $z$-space are mapped to regions of the same color in the $x$-space. While $N$ is locally injective in the \textcolor{udpink}{pink}, \textcolor{udblue}{blue} and \textcolor{udgreen}{green} wedges in $z$-space, the \textbf{}\textcolor{udorange}{orange}, \textcolor{udbrown}{brown}, and \textcolor{udviolet}{violet} wedges are mapped to coordinate axes. $N$ is thus \emph{not} injective on these wedges. This prevents construction of an inverse in the range of $N$.}
    \vspace{-3mm}
    \label{fig:nonglobinj}
\end{figure}

\subsection{Why Global Injectivity?}

The attribute ``global'' relates to global injectivity of the map $N : \cZ \to \R^m$ on the low-dimensional latent space $\cZ$, but it does not imply global invertibility over $\R^m$, only on the range $N(\cZ) \subset \R^m$. If we train a GAN generator to map iid normal latent vectors to real images from a given distribution, we expect that any sampled latent vector generates a plausible image. We thus desire that any $N(z)$ be produced by a unique latent code $z \in \cZ$. This is equivalent to global injectivity, or invertibility on the range. Our results relate to the growing literature on using neural generative models for compressed sensing~\citep{bora2017compressed,daskalakis2020constant}. They parallel the related guarantees for sparse recovery where the role of the low-dimensional latent space is played by the set of all $k$-sparse vectors. One then looks for matrices which map \textit{all} $k$-sparse vectors to distinct measurements \citep{10.5555/2526243}. As an example, in the illustration in Figure \ref{fig:nonglobinj} images coresponding to latent codes in \textcolor{udorange}{orange}, \textcolor{udbrown}{brown}, and \textcolor{udviolet}{violet} wedges cannot be compressively sensed. Finally, a neural network is often trained to directly reconstruct an image $x$ from its (compressive) low-dimensional measurements $y = A(x)$ without introducing any generative models. In this case, whenever $A$ is Lipschitz, it is immediate that the learned inverse must be injective.

\subsection{Related Work}

Closest to our work are the papers of \citep{bruna2013signal}, \citep{hand2018phase} and \citep{lei2019inverting}. \citep{bruna2013signal} study injectivity of pooling motivated by the problem of signal recovery from feature representations. They focus on $\ell^p$ pooling layers; their Proposition 2.2 gives a criterion similar to the DSS (Definition \ref{def:directed-spanning-set}) and bi-Lipschitz bounds for a ReLU layer (similar to our Theorem \ref{thm:inv-lip:global}). Unlike Theorems \ref{thm:relu-w-injectivity} and \ref{thm:inv-lip:global}, their criterion and Lipschitz bound are in some cases not precisely aligned with injectivity; see Appendix \ref{sec:comparison-w-bruna}. 

Compressed sensing with GAN priors requires inverting the generator on its range \citep{bora2017compressed, shah2018solving, wu2019deep, mardani2018deep, hand2018phase, aberdam2020and,daskalakis2020constant}. \citep{lei2019inverting,daskalakis2020constant} replace the end-to-end inversion by the faster and more accurate layerwise inversion when each layer is injective. They show that with high probability a ReLU layer with an iid normal weight matrix can be inverted about a fixed point if the layer expands at least by a factor of 2.1, and that the inverse about a fixed point can be computed with the algorithm from \citep{bora2017compressed}. This result is related to our Theorem \ref{theorem-injGauws} which gives conditions for global injectivity or layers with random matrices. \citep{hand2017global} show that when the weights of a ReLU network obey a certain weighted distribution condition, the loss function for the inversion has a strict descent direction everywhere except in a small set. The condition is in particular satisfied by random matrices with expansivity $n_j = \Omega(n_{j - 1} \log n_{j - 1})$, where $n_{j}$ is the output dimension of layer $j$.

 A continuous analogy of our convolutional construction (Definition \ref{def:discrte-convolution}) was considered by \citep{mallat2018phase}. They show that $\relu$ acts as a phase filter and that the layer is bi-Lipschitz and hence injective when the filters have a diverse phase and form a frame. Discretizing their model gives a statement related to Corollary \ref{cor:minimal-expansivity} and Theorem \ref{thm:suff-conv-injectivity}.  

Injectivity is automatic in invertible neural networks such as normalizing flows \citep{grover2018flow,kingma2018glow,grathwohl2018ffjord}. Specialized architectures with simple Jacobians give easy access to the likelihood \citep{dinh2014nice, dinh2016density, gomez2017reversible}, which facilitates application to inverse problems \citep{ardizzone2018analyzing}. Inference with GANs can be achieved by jointly training a generative network and its inverse \citep{donahue2016adversarial,dumoulin2016adversarially}, which is well-defined when the generator is injective. Relaxed injective probability flows resemble GAN generators but are trained via approximate maximum likelihood \citep{kumar2020regularized}. Injectivity is promoted by keeping the smallest singular value of the Jacobian away from zero at the training examples, a necessary but not  sufficient condition. In general, Jacobian conditioning improves GAN performance \citep{heusel2017gans, odena2018generator}. Finally, lack of injectivity interferes with disentanglement \citep{chen2016infogan,lin2019infogan}. In this context, injectivity seems to be a natural heuristic to increase latent space capacity without increasing its dimension \citep{brock2016neural}.

\subsection{Notation}
\label{sec:notation}
Given a matrix $W \in \Rea^{m \times n}$ we define the notation $w \in W$ to mean that $w \in \Rea^{n}$ is a row vector of $W$. For a matrix $W \in \Rea^{m \times n}$ with rows $\{w_j\}_{j=1}^m$ and $x \in \Rea^n$, we write
\begin{align}
    S(x,W) \coloneqq \left\{ j \in [[m]] \colon \innerprod{w_j}{x} \geq 0 \right\}
\end{align}
and $S^c(x, W)$ for its complement, with $[[m]] = \{1, \ldots, m \}$. Further, given a matrix $W$ and index set $\cI$ we define the notation $\restr{W}{\cI}$ to be a matrix whose $i$'th row is the same as $W$ if $i \in \cI$, and a row of zeroes otherwise. We let $\nnet(n,m,L,\vec n)$ be the family of functions $N_{\theta}:\Rea^n\to \Rea^m$ of the form 
\begin{equation}
    \label{eqn:deep-network-definition}
    N(z) = W_{L + 1}\phi_L(W_L \cdots \phi_2(W_2\phi_1(W_1 z + b_1) + b_2) \cdots + b_L)
\end{equation}
Indices $\ell = 1,\ldots, L$ index the network layers,  $b_{\ell} \in \Rea^{n_{\ell+1}}$ are the bias vectors, $W_{\ell} \in \Rea^{n_{\ell+1} \times n_{\ell}}$ are the weight matrices with $n_1=n$, $n_{L}=m$, and $\phi_\ell$ are the nonlinear activation functions. We will denote $\relu(x) = \max(x, 0)$. We write $\vec n=(n_1,n_2,\dots n_{L-1})$ and $\theta=(W_1,b_1,\dots,W_L,b_L)$ for the parameters that determine the function $N_{\theta}$. We also write $\nnet(n,m,L)=\bigcup_{\vec n\in \mathbb{Z}^{L-1}}\nnet(n,m,L,\vec n)$ and  $\nnet(n,m)=\bigcup_{L\in \mathbb{Z}}\nnet(n,m,L)$.

In Section \ref{sec:concolutional-layer} we use multi-index notation. Multi-indices are $p$-tuples denoted by the capitol letters, such as $I, J, K, N$ and $M$. In the case that $p = 1$, the operations between multi-indices is equivalent to the equivalent operations for scalars. If $I$ is an $p$-dimensional multi-index, then
\begin{align}
    I \coloneqq (I_1,\dots,I_p) \in \Nat^p
\end{align}
where $I_p$ refers to the $p$'th index of $I$. We use the notation $c \in \Rea^N$ to refer to $c \in \Rea^{N_1} \times \dots \times \Rea^{N_p}$, and
\(
    \sum_{I = 1}^N c_I = \sum_{I_1 = 1}^{N_1}\dots\sum_{I_p = 1}^{N_p} c_{I_1,\dots,I_p},
\) for $I = (I_1, \dots, I_p)$. The symbol $1$ can refer to number $1$ or a $p$-tuple $(1,\dots, 1)$, and is clear from context. We likewise use the notation $I \leq J$ to mean that $I_k \leq J_k$ for all $k = 1,\dots,p$ and likewise for $\geq, <$ and $>$. If $I \not \leq J$, then there is at least one $k$ such that $I_k > J_k$.

We use the notation $\norm{\cdot}_{p,m}$ and $\norm{\cdot}_{p,n}$ to denote the standard $l_p$ norms for vectors in $\Rea^m$ and $\Rea^n$ respectively.

\section{Layerwise Injectivity of ReLU Networks}

For a one-to-one activation function such as a leaky $\relu$ or a $\tanh$, it is easy to see that injectivity of $x \mapsto W_i x$ implies the injectivity of the layer. We therefore focus on non-injective ReLU activations.

\subsection{Directed Spanning Set}
\label{sub:dss}

\begin{figure}
    \centering
    \begin{subfigure}{.6\linewidth}
        \centering
        \includegraphics[width=\linewidth]{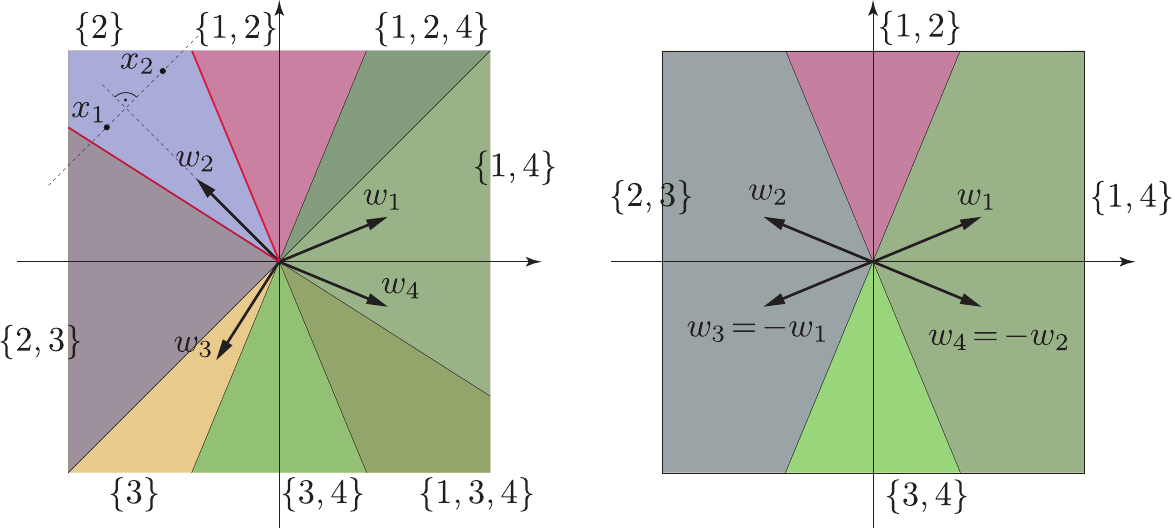}
        \label{fig:dss-illustration}
    \end{subfigure}
    \begin{subfigure}{.36\linewidth}
        \centering
        \includegraphics[width=\linewidth]{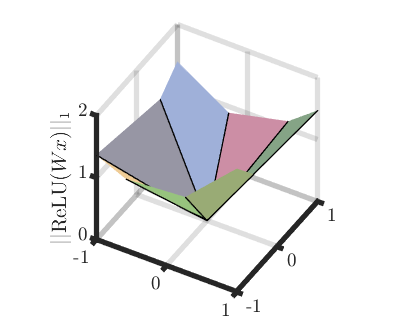}
        \label{fig:folding-landscape}
    \end{subfigure}
    \caption{Illustration of the DSS definition. \textit{Left:} A configuration of 4 vectors in $\R^2$ that do not have a DSS w.r.t. all $x\in \Rea^2$. In this case, the vectors do not generate an injective layer. The set of labels indicate which $w_j$ have positive inner product with vectors in the wedge; there are two wedges with only one such $\set{w_j}$; we have $\relu(Wx_1) = \relu(Wx_2)$. \textit{Center:} A configuration where four vectors have a DSS for all $x\in \Rea^2$. These vectors correspond to a minimally-expansive injective layer; see Corollary \ref{cor:minimal-expansivity}. \textit{Right:} A plot of $\norm{\relu(Wx)}_1$ where $W$ is given as in the left figure. Note that $x \mapsto \relu(Wx)$ is linear in every wedge.}
    \label{fig:inverse-lip-condition}
\end{figure}

Unlike in the case of a one-to-one activation, when $\phi(x) = \relu(x)$, $x \mapsto \phi(Wx)$ cannot be injective for all $x \in \Rea^n$ if $W$ is a square matrix. Noninjectivity of $\relu(Wx)$ occurs in some simple situations, for example if $Wx$ is not full rank or if $Wx \leq 0$ element-wise for a nonzero $x$, but these two conditions on their own are not sufficient to characterize injectivity. Figure \ref{fig:inverse-lip-condition} gives an example of a weight matrix that yields a noninjective layer that has these two properties. In order to facilitate the full characterization of injectivity of ReLU layers, we define a useful theoretical device:

\begin{definition} [Directed Spanning Set]\label{def:directed-spanning-set}
    Let $W \in \Rea^{n \times m}$. We say that $W$ has a directed spanning set (DSS) of $\Omega \subset \Rea^n$ with respect to a vector $x \in \Rea^n$ if there exists a $\hat W_x$ such that each row vector of $\hat W_x$ is a row vector of $W$,
    \begin{align}
        \innerprod{x}{w_i} \geq 0 \quad \text{for all}~w_i \in \hat W_x,
    \end{align}
    and $\Omega \subset \Span(\hat W_x)$. When omitted, $\Omega$ is understood to be $\Rea^n$.
\end{definition}

For a particular $x \in \Rea^n$, it is not hard to verify if a given $W = \{w_i\}_{i = 1,\dots,m}$ has a DSS of $\Omega$. One can simply let 
\(
    \hat W_x  = \{ w_i \in \R^n \colon \innerprod{x}{w_i} \geq 0\}.
\)
Then, $W$ has a DSS of $\Omega$ with respect to $x$ if and only if $\hat W_x$ spans $\Omega$, which can be checked efficiently. Note that having full rank is necessary for it to have a DSS of $\Rea^n$ w.r.t. \textit{any} $x \in \Rea^n$. It is however not sufficient. For example, $\idmat_n \in \Rea^{n \times n}$, the identity matrix in $\Rea^n$, is clearly full rank, but it doesn't have a DSS of $\Rea^n$ w.r.t. $x = (-1, 0, \dots, 0) \in \Rea^n$. We also note that in this paper we focus on globally injective networks, and so $\Omega$ is always taken to be either $\Rea^n$ or $\Rea^p$, where $p < n$. Further, this latter case is only discussed to eventually produce a statement showing that a matrix has a DSS on the full space $\Rea^n$. This only comes up in the context of our discussion on convolution layers in Section \ref{sec:concolutional-layer}, specifically Theorem \ref{thm:suff-conv-injectivity} and Lemma \ref{lem:domain-decomp}.

To check whether $W$ has a DSS for all $x \in \R^n$, note that $W$ partitions $\R^n$ into open wedges $S_k$,  $\R^n = \bigcup_k S_k$, with constant sign patterns. That is, for $x_1, x_2 \in S_k$, $\sign(Wx_1) = \sign(Wx_2)$. (See also the proof of Theorem \ref{theorem-injGauws} in Appendix \ref{sub:minexpGauss}.) Checking whether $W$ has a DSS for all $x$ is equivalent to checking that for every wedge there are at least $n$ vectors $W_k \subset W$ with positive sign, $\langle w, x \rangle > 0$ for $x \in S_k$, and that $W_k$ spans $\R^n$. Since the number of wedges can be exponential in $m$ and $n$ \citep{winder1966partitions} this suggests an exponential time algorithm. We also note the connection between DSS and spark in compressed sensing \citep{10.5555/2526243}, defined as the size of the smallest set of linearly dependent vectors in $W$. If every wedge has $n$ or more positive signs, then full spark $n + 1$ is sufficient for $W$ to have a DSS w.r.t all $x \in \R^n$. Computing spark is known to be NP-hard \citep{tillmann2013computational}; whether one can do better for DSS remains an open question.

\subsection{Fully Connected Layer}

The notion of a DSS immediately leads to our main result for fully connected layers.

\begin{theorem}[Conditions for Injectivity of $\relu(Wx)$] \label{thm:relu-w-injectivity}
    Let $W \in \Rea^{m \times n}$ where $1 < n \leq m$ be a matrix with row vectors $\{w_j\}_{j = 1}^m$, and $\relu(y) = \max(y, 0)$. The function $\relu(W(\cdot)) \colon \Rea^n \rightarrow \Rea^m$ is injective if and only if $W$ has a DSS w.r.t every $x \in \Rea^n$. 
\end{theorem}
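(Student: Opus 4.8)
The plan is to prove both implications directly from the definition, reducing everything to the monotonicity of the scalar ReLU together with the spanning requirement built into the DSS. I would treat necessity by contraposition and sufficiency by producing, from any hypothetical collision, a point at which the DSS provably fails.

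For \emph{necessity} ($\relu(W(\cdot))$ injective $\Rightarrow$ DSS everywhere), I would argue the contrapositive. Suppose $W$ has no DSS with respect to some $x_0$, so the active rows $\{w_j : \innerprod{w_j}{x_0} \geq 0\}$ span only a proper subspace $U \subsetneq \R^n$. I would pick any $v \in U^{\perp}\setminus\{0\}$ and compare $x_0$ with $x_0 + \epsilon v$ for small $\epsilon > 0$. The active rows are unchanged since $\innerprod{w_j}{v} = 0$, so those coordinates of the image agree; the strictly inactive rows (finitely many, each with $\innerprod{w_j}{x_0} < 0$) stay inactive by continuity for $\epsilon$ small enough, so those coordinates map to $0$ for both points. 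Hence $\relu(W(x_0+\epsilon v)) = \relu(Wx_0)$ while $x_0 + \epsilon v \neq x_0$, contradicting injectivity.

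For \emph{sufficiency} I would start from a hypothetical collision $\relu(Wx_1) = \relu(Wx_2)$ with $x_1 \neq x_2$ and set $v = x_1 - x_2$. The first step is a per-coordinate observation: for any row with $\innerprod{w_j}{v} \neq 0$, the equality $\relu(\innerprod{w_j}{x_1}) = \relu(\innerprod{w_j}{x_2})$ forces both arguments to be nonpositive, since the scalar ReLU is strictly increasing on $(0,\infty)$ and can only assign equal values to two distinct arguments when both lie on its flat part. The hard part is that this leaves open the boundary cases $\innerprod{w_j}{x_1}=0$ or $\innerprod{w_j}{x_2}=0$, so evaluating the DSS at $x_1$ or $x_2$ directly runs into active rows sitting exactly on a wedge boundary and is inconclusive.

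The crux, and the step I expect to carry the argument, is to evaluate instead at the midpoint $y = \tfrac{1}{2}(x_1 + x_2)$. For every row with $\innerprod{w_j}{v} \neq 0$, the value $\innerprod{w_j}{y}$ is the average of $\innerprod{w_j}{x_1}$ and $\innerprod{w_j}{x_2}$, two nonpositive numbers of which at least one is strictly negative, so $\innerprod{w_j}{y} < 0$ and the row is inactive at $y$. Therefore every active row at $y$ satisfies $\innerprod{w_j}{v} = 0$, i.e. lies in $v^{\perp}$, a proper subspace of $\R^n$; the active rows cannot span $\R^n$, so $W$ has no DSS with respect to $y$, contradicting the hypothesis. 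I would note in passing the equivalent and conceptually clean reformulation that a collision means the convex potential $F(x) = \tfrac{1}{2}\norm{\relu(Wx)}_{2}^{2}$ has $\nabla F(x_1) = W^{\top}\relu(Wx_1) = W^{\top}\relu(Wx_2) = \nabla F(x_2)$, which forces $F$ to be affine on the segment $[x_1,x_2]$ and yields the same midpoint conclusion; the elementary monotonicity argument above, however, keeps the proof self-contained.
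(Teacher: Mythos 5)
Your proposal is correct and follows essentially the same route as the paper: necessity via a perturbation of $x_0$ along the orthogonal complement of the active rows (the paper's $x^{\perp}\in\ker(W|_{S(x,W)})$ with a small step size keeping the strictly negative rows negative), and sufficiency via the observation that at an interior point of the segment $[x_1,x_2]$ every active row must have equal inner products with $x_1$ and $x_2$ — your midpoint formulation ``active rows at $y$ lie in $v^{\perp}$, so no DSS at $y$'' is precisely the contrapositive of the paper's ``$W|_{S(x_\alpha,W)}x_1 = W|_{S(x_\alpha,W)}x_2$ plus spanning gives $x_1=x_2$.'' No gaps.
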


The question of injectivity in the case when $b = 0$ and when $b \neq 0$ are very similar and, as Lemma \ref{lem:relu-w-plus-b-injectivity} shows, the latter question is equivalent to the former on a restricted weight matrix.

\begin{lemma}[Injectivity of $\relu(Wx + b)$] \label{lem:relu-w-plus-b-injectivity}
    Let $W \in \Rea^{m \times n}$ and $b \in \Rea^m$. The function $\relu(W(\cdot) + b) \colon \Rea^n \rightarrow \Rea^m$ is injective if and only if $\relu(W|_{b\geq 0}\cdot)$ is injective, where $W|_{b\geq 0} \in \Rea^{m \times n}$ is row-wise the same as $W$ where $b_i \geq 0$, and is a row of zeroes when $b_i < 0$.
\end{lemma}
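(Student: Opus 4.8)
The plan is to reduce (global) injectivity to the behaviour of the map along straight lines, where the piecewise-linear structure becomes transparent. Write $P = \set{i : b_i \geq 0}$ and $Q = \set{i : b_i < 0}$, so that $W|_{b\geq 0}$ keeps the rows $w_i$ with $i \in P$ and zeros out those with $i \in Q$; in particular $\relu(W|_{b\geq 0}\cdot)$ is injective iff the reduced biasless map $x \mapsto (\relu(\innerprod{w_i}{x}))_{i \in P}$ is injective, since zero rows never contribute to distinguishing points. The single observation driving everything is that for any base point $x_0$ and direction $d$, each coordinate $\tau \mapsto \relu(\innerprod{w_i}{x_0} + b_i + \tau\innerprod{w_i}{d})$ is ReLU composed with an affine function of $\tau$, hence \emph{monotone} in $\tau$. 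Consequently, if $F := \relu(W(\cdot)+b)$ takes equal values at $x_1 \neq x_2$, then each coordinate is monotone and agrees at the endpoints of the segment $[x_1,x_2]$, so $F$ is constant on the whole segment. This upgrades any collision to a one-parameter family and lets me read off, coordinate by coordinate, which rows are active.

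With this reduction I would first prove ``$\relu(W|_{b\geq 0}\cdot)$ injective $\Rightarrow$ $F$ injective'' by contraposition. Suppose $F(x_1)=F(x_2)$ with $d = x_2-x_1 \neq 0$; by the above $F$ is constant on $[x_1,x_2]$. For each $i$ with $\innerprod{w_i}{d}\neq 0$ the corresponding hinge sits on its flat piece, i.e. $\innerprod{w_i}{x_1}+b_i+\tau\innerprod{w_i}{d}\leq 0$ for all $\tau\in[0,1]$. The key algebraic point is that for $i\in P$ we have $b_i\geq 0$, so subtracting $b_i$ preserves the sign and yields $\innerprod{w_i}{x_1+\tau d}\leq -b_i\leq 0$ on $[0,1]$; hence the biasless coordinate $\relu(\innerprod{w_i}{x_1+\tau d})$ is identically zero there, while for $i\in P$ with $\innerprod{w_i}{d}=0$ it is trivially constant. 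Thus the reduced map is constant on $[x_1,x_2]$, exhibiting a collision. (Equivalently one may route this side through the DSS characterisation of Theorem \ref{thm:relu-w-injectivity} applied to $W|_{b\geq 0}$.)

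For the converse I would start from a collision of the reduced map and try to \emph{lift} it to a collision of $F$. By Theorem \ref{thm:relu-w-injectivity}, non-injectivity of $\relu(W|_{b\geq 0}\cdot)$ gives a direction $x_*$ for which the active rows $\set{w_i : i\in P,\ \innerprod{w_i}{x_*}\geq 0}$ fail to span $\Rea^n$, so there is $d\neq 0$ orthogonal to all of them; along $d$ these active $P$-rows never change. The natural candidate collision for $F$ is a short segment $p,\,p+\delta d$ based at a point $p = s x_*$ pushed far out ($s\to\infty$), so that the inactive $P$-rows are driven negative and stay off, while positive homogeneity of $\relu$ is used to rescale the reduced witness so its active part matches up.

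The step I expect to be the genuine obstacle is controlling the negative-bias rows $i\in Q$ in this lift. Moving out along $x_*$ tames the $P$-rows, but a row with $b_i<0$ whose correlation $\innerprod{w_i}{x_*}$ is positive is driven \emph{on} rather than off, and if additionally $\innerprod{w_i}{d}\neq 0$ it breaks the constancy of $F$ along $d$. The crux is therefore whether one can choose a base point $p$ at which every row of $Q$ (together with the inactive rows of $P$) that varies along $d$ is simultaneously inactive, i.e. whether the system $\innerprod{w_i}{p} < -b_i$ is feasible over the relevant index set. This is exactly where the bias signs interact nontrivially with the geometry of the $w_i$, and it is the heart of the matter: I would concentrate the work here, isolating the precise interaction of $Q$ with $d$ and verifying that a suitable $p$ always exists under non-injectivity of the reduced map, as otherwise a far-away negative-bias row can restore injectivity of $F$ even when the reduced map collides.
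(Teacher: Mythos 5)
Your forward direction (``$\relu(W|_{b\ge0}\cdot)$ injective $\Rightarrow$ $\relu(W\cdot+b)$ injective'') is correct and is, modulo packaging, the paper's own argument: the paper applies row-by-row the one-line fact that $\relu(s+z)=\relu(t+z)$ with $z\ge0$ forces $\relu(s)=\relu(t)$, which is exactly what your segment-monotonicity argument establishes. The genuine gap is the converse, which you explicitly leave open --- and your instinct about where the difficulty lies is not only correct but fatal: the ``only if'' direction of the lemma is false as stated. Take $n=1$, $m=3$, rows $w_1=w_2=1$, $w_3=-1$ and biases $b=(0,5,-3)$. Then $W|_{b\ge0}$ has rows $1,1,0$, which has no DSS w.r.t.\ $x=-1$ (only the zero row has nonnegative inner product), so $\relu(W|_{b\ge0}x)=(\relu(x),\relu(x),0)$ is not injective. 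Yet $F(x)=(\relu(x),\relu(x+5),\relu(-x-3))$ \emph{is} injective: by your own monotonicity observation, a collision $F(a)=F(b)$ with $a<b$ forces the second coordinate onto its flat piece, giving $b+5\le0$, and the third coordinate onto its flat piece, giving $-a-3\le0$; hence $a\ge-3>-5\ge b$, contradicting $a<b$. Near the origin the large-positive-bias row $\relu(x+5)$ is active and separates points; far out along the failure direction the negative-bias row $\relu(-x-3)$ takes over. So neither your ``push $p=sx_*$ to infinity'' lift nor any ``scale toward the origin'' lift can be repaired in general.

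For the record, the paper's own proof of this direction contains precisely the gap complementary to the one you flag: it shrinks $\beta$ so that the $b_i<0$ rows are inactive at both $\beta x$ and $\beta x+\alpha x^\perp$, and then asserts by ``component-wise analysis'' that the $b_i\ge0$ rows also agree at the two points. That holds for rows in $S(x,W|_{b\ge0})$ (which are orthogonal to $x^\perp$) and for rows with $b_i=0$, but a row with $b_i>0$ and $\innerprod{w_i}{x}<0$ has pre-activation close to $b_i>0$ near the origin, so $\relu$ is locally the identity there and the two outputs differ by $\alpha\innerprod{w_i}{x^\perp}$, which need not vanish --- exactly what the row $(w_2,b_2)=(1,5)$ does in the example above. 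So: your forward direction stands; your converse is incomplete; and the obstruction you correctly isolate is not a technicality to be worked around but a genuine failure of the statement, whose correct version would have to account separately for rows with strictly positive bias (active near the origin) and for the behaviour of negative-bias rows at infinity.
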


\begin{remark}[Injectivity of $\relu(Wx)$, Positive $x$]
    Without normalization strategies, inputs of all but the first layer are element-wise non-negative. One can ask whether $\relu(Wx)$ is injective when $x$ is restricted to be element-wise non-negative. Following the same argument as in Theorem \ref{thm:relu-w-injectivity}, we find that $W$ must have a DSS w.r.t. $x \in \Rea^n$ for every $x$ that is element-wise non-negative.
    With normalizations strategies however (for example batch renormalization \citep{ioffe2017batch}) the full power of Theorem \ref{thm:relu-w-injectivity} and Lemma \ref{lem:relu-w-plus-b-injectivity} may be necessary. In Section \ref{sec:norm-training-runtime} we show that common normalization strategies such as batch, layer, or group normalization do not interfere with injectivity.
\end{remark}

\begin{remark}[Global vs Restricted Injectivity]
    Even when the conditions of Theorem \ref{thm:relu-w-injectivity} and Lemma \ref{lem:relu-w-plus-b-injectivity} are not satisfied the network might be injective in some $\cX \subset \Rea^n$. Indeed, $N(x) = \relu(\idmat_n \cdot)$ is in general not injective, but it is injective in $\cX = \{ x \in \Rea^n \colon x_i >0 \text{ for } i = 1,\dots,n\}$, a convex and open set. Theorem \ref{thm:relu-w-injectivity} and Lemma \ref{lem:relu-w-plus-b-injectivity} are the precise criteria for single-layer injectivity for all $x \in \Rea^n$.
\end{remark}

The above layerwise results imply a sufficient condition for injectivity of deep neural networks. If Theorem \ref{thm:relu-w-injectivity} applies to each layer of a network then the entire network must be injective end-to-end.

Note that layer-wise injectivity is sufficient but not necessary for injectivity of the entire network. Consider for example an injective layer $\relu(W)$, and 
\(
    N(x) = \relu( \idmat_m \cdot \relu(W(x))),
\)
where $I$ is the identity matrix. Clearly,
\begin{align}
    \relu( \idmat_m \cdot \relu(W(x))) = \relu(\relu(W(x))) = \relu(W(x))
\end{align}
so $N$ is injective, but it fails to be injective at the $\relu(I \cdot)$ layer.

An important implication of Theorem \ref{thm:relu-w-injectivity} is the following result on minimal expansivity.

\begin{corollary}[Minimal Expansivity] \label{cor:minimal-expansivity}
    For any $W \in \Rea^{m \times n}$, $\relu(W\cdot)$ is non-injective if $m < 2\cdot n$.  If $W \in \Rea^{2n \times n}$ satisfies Theorem \ref{thm:relu-w-injectivity} and Lemma \ref{lem:relu-w-plus-b-injectivity}, then up to row rearrangement $W$ can be written as 
    \begin{align}\label{eqn:cor:twice-expansive-factorization}
        W = 
        \begin{bmatrix}
            B\\
            -DB
        \end{bmatrix}
    \end{align}
    where $B, D \in \Rea^{n \times n}$ and $B$ is a basis and $D$ a diagonal matrix with strictly positive diagonal entries.
\end{corollary}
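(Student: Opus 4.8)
The plan is to treat the two assertions separately, both leaning on the characterization in Theorem~\ref{thm:relu-w-injectivity}: $\relu(W\cdot)$ is injective iff $\hat W_x = \{w_j : \innerprod{w_j}{x}\ge 0\}$ spans $\Rea^n$ for every $x$. The central elementary device I will use repeatedly is a counting observation obtained by comparing $x$ with $-x$. For a \emph{generic} $x$ (one with $\innerprod{w_j}{x}\ne 0$ for every nonzero $w_j$) each nonzero row lands on exactly one side, and the rows with $\innerprod{w_j}{x}>0$ are precisely the spanning rows of $\hat W_x$, while those with $\innerprod{w_j}{x}<0$ are the spanning rows of $\hat W_{-x}$.

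For the first claim, suppose $m<2n$ and choose a generic $x$ (the excluded set is a finite union of the hyperplanes $w_j^\perp$, hence has empty interior). At most $m<2n$ rows are nonzero, so the positive and negative sides cannot both contain $\ge n$ rows; replacing $x$ by $-x$ if necessary, assume the positive side has fewer than $n$ nonzero rows. Since $\Span(\hat W_x)$ equals the span of its positive nonzero rows, it has dimension $<n$ and fails to span $\Rea^n$. By Theorem~\ref{thm:relu-w-injectivity}, $\relu(W\cdot)$ is not injective.

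For the factorization, assume $W\in\Rea^{2n\times n}$ is injective and $n\ge 2$ (as in Theorem~\ref{thm:relu-w-injectivity}). Applying the counting observation at a generic $x$, both $\hat W_x$ and $\hat W_{-x}$ must span, so each of the two disjoint sides has $\ge n$ rows; since together they exhaust all $2n$ rows, each side has \emph{exactly} $n$ rows, which therefore form a basis. In particular every row is nonzero and at every generic $x$ exactly $n$ rows are strictly positive and independent. I then show the rows split into $n$ antiparallel pairs by a conservation-of-count argument over the arrangement $\{w_j^\perp\}$. Group the rows by the distinct hyperplane they are normal to; rows sharing a hyperplane $H=v^\perp$ are scalar multiples of a fixed $v$, say $|A|$ positive multiples and $|B|$ negative ones. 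Pick $p\in H$ avoiding every other hyperplane and with $\innerprod{w_j}{p}\ne 0$ for all rows not on $H$; such $p$ exists because $n\ge 2$ forces $\dim H = n-1 > n-2 \ge \dim(H\cap H')$ for the finitely many other hyperplanes $H'$. Moving transversally from $p+\epsilon v$ to $p-\epsilon v$ flips the sign of exactly the rows on $H$ (the others keep their sign for small $\epsilon$), so the $|A|$ positive multiples leave the positive side while the $|B|$ negative multiples enter it; since the positive side has size $n$ at both generic endpoints, $|A|=|B|$. Hence the rows on each hyperplane pair off into antiparallel pairs $\{b,\,-d\,b\}$ with $d>0$, and summing over hyperplanes gives exactly $n$ such pairs.

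To finish, I read off the factorization. At a generic $x_0$ the two members of any antiparallel pair have opposite-sign inner products with $x_0$, so the size-$n$ positive side contains exactly one representative per pair and is a basis; let its rows be $b_1,\dots,b_n$, assembled into $B$, with partners $-d_i b_i$, $d_i>0$. Reordering the rows as $(b_1,\dots,b_n,-d_1 b_1,\dots,-d_n b_n)$ and setting $D=\diag(d_1,\dots,d_n)$ yields $W=\bmat{B\\ -DB}$ with $B$ a basis and $D$ positive diagonal. I expect the main obstacle to be the middle step: rigorously ensuring the crossing of $H$ is transversal and isolated, so that \emph{only} the rows on $H$ change sign while the exactly-$n$-positive invariant holds at both endpoints — this is exactly where the hypothesis $n\ge 2$ and the genericity bookkeeping are essential.
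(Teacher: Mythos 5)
Your proof is correct, and the first claim ($m<2n$ precludes injectivity) is the same counting argument the paper uses: a hyperplane avoiding all rows forces two disjoint spanning sets of size $\geq n$ each. For the factorization, your route differs in execution from the paper's. The paper argues by contradiction: it assumes some row $w_i$ has no antiparallel partner, places a hyperplane through $w_i$ avoiding the other rows, observes one closed half-space then contains only $n$ rows counting $w_i$, and rotates the plane slightly to strand that side with $n-1$ rows, violating Theorem~\ref{thm:relu-w-injectivity}; it then asserts the form \eqref{eqn:cor:twice-expansive-factorization}. You instead prove a direct conservation law: at every generic test point each strict side has exactly $n$ rows forming a basis, and crossing a single hyperplane $w_j^\perp$ transversally flips exactly the rows normal to it, forcing the numbers of positive and negative multiples of each direction to agree. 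The two arguments are dual views of the same perturbation idea (the paper rotates the plane, you move the test point across it), but yours buys something concrete: it explicitly establishes that the pairing is a perfect matching into $n$ antiparallel pairs and that the positive side at a generic point is a basis, which is exactly the content of \eqref{eqn:cor:twice-expansive-factorization} — steps the paper's proof compresses into the final sentence without full justification. The genericity bookkeeping you flag as the main obstacle (transversal, isolated crossings for $n\geq 2$) is handled correctly by your choice of $p\in H$ off the finitely many lower-dimensional intersections.
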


\begin{figure}
    \begin{center}
        \includegraphics[width=0.5\linewidth]{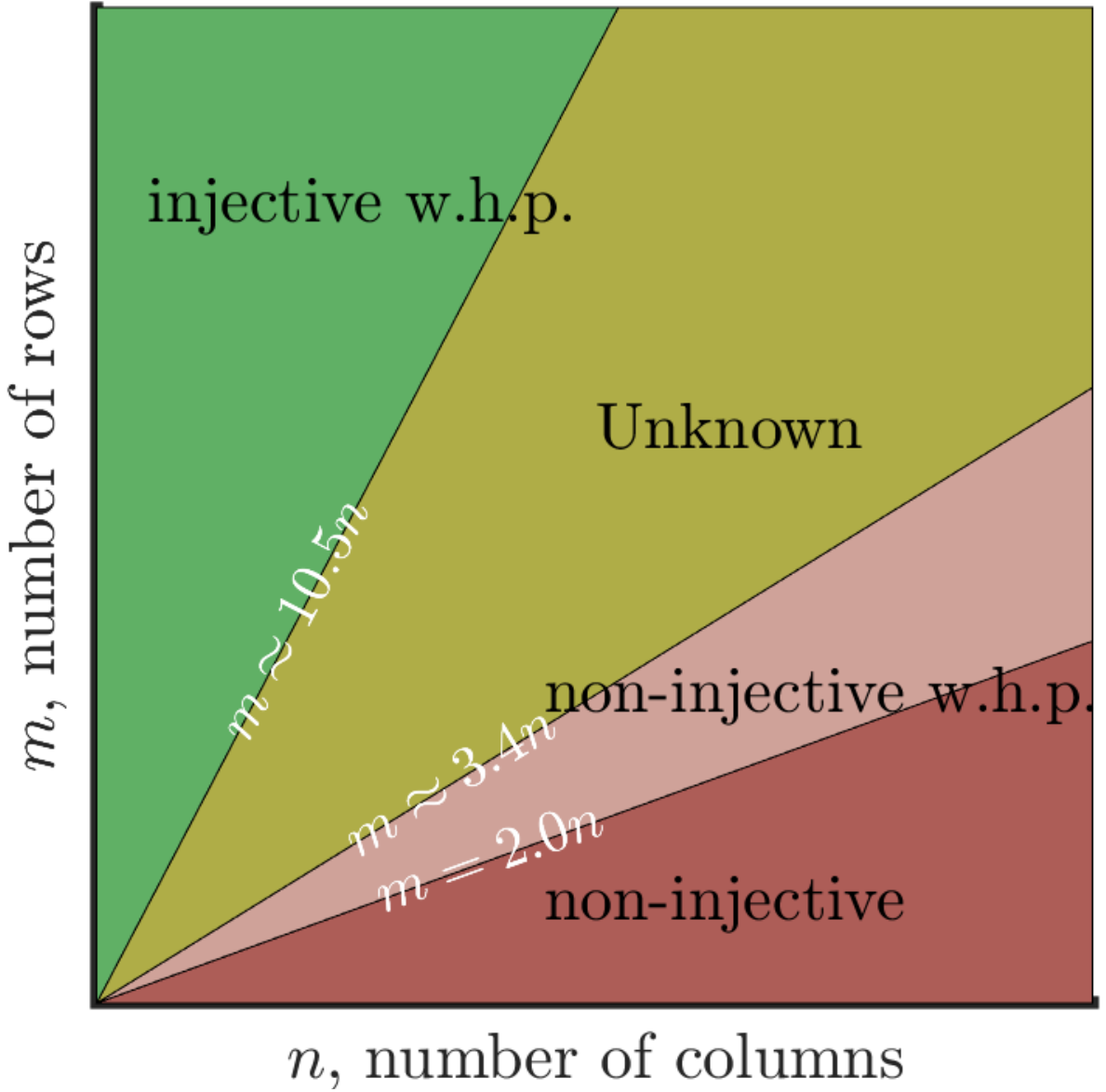}
    \end{center}
    \caption{A visualization of regions where asymptotics of $\cI(m,n)$ in Theorem \ref{theorem-injGauws} are valid.}
    \vspace{-6mm}
    \label{fig:injgausregion}
\end{figure} 

While $m < 2n$ immediately precludes injectivity, Corollary \ref{cor:minimal-expansivity} gives a simple recipe for the construction of minimally-expansive injective layers with $m = 2n$. To the best of our knowledge this is the first such result. Further, we can also use Corollary \ref{cor:minimal-expansivity} to build weight matrices for which $m > 2n$. For example, the matrix 
\begin{align*}
    W = \begin{bmatrix}
        B\\
        -DB\\
        M
    \end{bmatrix},
\end{align*}
where $B \in \Rea^{n \times n}$ has full rank, $D \in \Rea^{n \times n}$ is a positive diagonal matrix, and $M \in \Rea^{(m - 2n)\times n}$ is arbitrary, has a DSS w.r.t. every $x \in \Rea^n$. This method can be used in practice to generate injective layers by using standard spectral regularizers to ensure that $B$ is a basis \citep{cisse2017parseval}. In Section \ref{sec:universality} we will show that, in fact, an \textit{end-to-end} rather than layerwise doubling of dimension is sufficient for injectivity.

Previous work that sought injectivity uses random-weight models. \citep{lei2019inverting} show that a layer is invertible \emph{about a point} in the range provided that $m \geq 2.1 n$ and $W$ is iid Gaussian. In Appendix \ref{app:expansivity} we show that $m=2.1n$ is not enough to guarantee injectivity.  In fact, we show that with high probability an iid Gaussian weight matrix $W$ yields a \textit{globally} invertible ReLU layer, when $W$ is sufficiently expansive, and conversely does not satisfy Theorem \ref{thm:relu-w-injectivity} if it is not expansive by at least a factor of 3.4:

\begin{theorem}[Injectivity for Gaussian Weights] \label{theorem-injGauws}
    Let $\cI(m,n) = \mathbb{P}\{x \mapsto \relu(Wx)~\text{is injective} \}$ with the entries of $W$ iid standard normal and $c = m/n$ fixed. Then as $n \rightarrow \infty,$
    \begin{align*}
        \cI(m,n) \rightarrow 1 \text{ if } c \geq 10.5 \ \
        \text{and} \ \
        \cI(m,n) \rightarrow 0 \text{ if } c \leq 3.3
    \end{align*}
\end{theorem}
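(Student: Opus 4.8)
The plan is to convert the statement into a purely geometric event about the rows $w_1,\dots,w_m$ of $W$ and then estimate its probability by the first- and second-moment methods. By Theorem~\ref{thm:relu-w-injectivity}, $\relu(W\cdot)$ is injective if and only if the active set $\hat W_x=\{w_j:\innerprod{w_j}{x}\ge0\}$ spans $\Rea^n$ for every $x$. Since the entries of $W$ are iid Gaussian, almost surely every $n$ of the rows are linearly independent (finitely many $n$-subsets, each independent a.s.); conditioning on this full-measure event, $\hat W_x$ spans $\Rea^n$ precisely when $|\hat W_x|\ge n$. Hence $W$ is injective iff every $x$ has at least $n$ rows in the closed halfspace $\{\innerprod{\cdot}{x}\ge0\}$, equivalently iff no direction leaves $m-n+1$ or more rows strictly on one side. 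I would record the clean reformulation: $W$ is non-injective iff there is an $(n-1)$-element index set $I$ such that the rows $\{w_j:j\notin I\}$ all lie in a common open halfspace through the origin. (Pad the at-most-$(n-1)$ positive indices of a bad sign pattern up to size $n-1$; conversely such an $I$ exhibits a direction with at most $n-1$ positive rows.) Let $Y$ count these ``isolating'' subsets $I$, so that $\cI(m,n)=\Prob\{Y=0\}$.

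For the upper threshold ($c\ge10.5$) I would bound $\Prob\{Y\ge1\}\le\expec{Y}$ by the first moment. For a fixed $I$ the event that the $m-n+1$ rows outside $I$ lie in a common halfspace is, because $\pm w_j$ are again iid Gaussian, governed by Wendel's theorem: its probability is $P_{m-n+1,n}=2^{\,n-m}\sum_{k=0}^{n-1}\binom{m-n}{k}$, independent of $I$. Thus $\expec{Y}=\binom{m}{n-1}\,P_{m-n+1,n}$. Writing $m=cn$ and applying Stirling to $\binom{cn}{n}$ and to the Wendel tail, the exponential growth rate is $\tfrac1n\ln\expec{Y}\to h(c)+h(c-1)-(c-1)\ln2$, where $h(a)=a\ln a-(a-1)\ln(a-1)$. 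One checks that this rate is negative once $c$ exceeds the root of $h(c)+h(c-1)=(c-1)\ln2$, whose numerical value is $\approx10.3$; hence for all $c\ge10.5$ we get $\expec{Y}\to0$ and therefore $\cI(m,n)\to1$.

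For the lower threshold ($c\le3.3$) I would instead show $\Prob\{Y\ge1\}\to1$. A crude version already gives something: for a single fixed $I$ the outside rows number $(c-1)n$, and by Wendel's theorem they lie in a halfspace with probability $\Prob\{\mathrm{Bin}((c-1)n,\tfrac12)\le n-1\}$, which tends to $1$ as long as $(c-1)/2<1$, i.e. $c<3$. Pushing the constant from $3$ up to $3.3$ requires exploiting the $\binom{m}{n-1}$ available choices of $I$ simultaneously, which I would do with the Paley--Zygmund inequality $\Prob\{Y\ge1\}\ge\expec{Y}^2/\expec{Y^2}$, having already observed from the first-moment computation that $\expec{Y}\to\infty$ throughout the range $c<10.3$.

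The main obstacle is the second-moment estimate for necessity: $\expec{Y^2}=\sum_{I,I'}\Prob\{I\text{ and }I'\text{ both isolate}\}$ does not factor, and the joint probability that two overlapping families of rows each fall into a halfspace is a correlated Gaussian orthant-type quantity that must be controlled as a function of $|I\cap I'|$. It is precisely the slack in this correlation bound that prevents the argument from reaching the first-moment threshold and confines necessity to $c\le3.3$, leaving the gap $(3.3,10.5)$. Two routine but necessary side-tasks are the Stirling/Laplace asymptotics of the Wendel tail $\sum_{k<n}\binom{(c-1)n}{k}$ (legitimate once $c>3$, where $n-1$ sits below the mean) and the handling of the measure-zero boundary configurations $\innerprod{w_j}{x}=0$ in the genericity reduction, both of which I expect to be straightforward.
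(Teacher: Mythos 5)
Your upper-bound argument ($c\ge 10.5$) is essentially the paper's own proof: a union bound over the $\binom{m}{n-1}$ choices of the $(n-1)$-element exceptional set, combined with the Wendel/Winder wedge-count formula $2^{1-k}\sum_{i=0}^{n-1}\binom{k-1}{i}$ for the probability that the remaining $k=m-n+1$ rows lie in a common open halfspace, followed by Stirling-type asymptotics. Your sharper Laplace analysis of the rate function even recovers a threshold ($\approx 10.3$) slightly better than the paper's $10.5$, which comes from the lossier bounds $\binom{m}{n-1}\le(me/(n-1))^{n-1}$ and the binary-entropy bound on the binomial tail. This half is sound.

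The lower bound ($c\le 3.3$) is where there is a genuine gap. Your rigorous content there is only the single-subset Wendel bound, which gives non-injectivity for $c<3$; to reach $3.3$ you invoke Paley--Zygmund on the count $Y$ of isolating subsets, but you never carry out the second-moment estimate $\expec{Y^2}=\sum_{I,I'}\Prob\{I\text{ and }I'\text{ both isolate}\}$, and the assertion that the slack in the correlation bound ``confines necessity to $c\le 3.3$'' is not derived from anything --- the events for overlapping $I,I'$ are extremely strongly correlated (they share all but $O(n)$ of their $m-n+1$ rows), so $\expec{Y^2}/\expec{Y}^2$ could easily blow up, and there is no reason the resulting threshold would land at $3.3$ rather than somewhere else. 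The number $3.3$ in the theorem does not come from any such correlation computation. The paper's actual route is constructive and quite different: it exhibits an explicit witness direction, namely $-\overline{w}$ with $\overline{w}=\tfrac1m\sum_k w_k$ the row average, shows that the probability a given row has nonnegative inner product with $\overline{w}$ converges to $p=\tfrac12\,\mathrm{erfc}\bigl(-1/\sqrt{2c}\bigr)$ (Lemma \ref{lem:concentration}), proves $\Prob(E_i\cap E_j)\to p^2$ so that the count of such rows concentrates by Chebyshev (Lemmas \ref{lem:concentration-intersect} and \ref{lem:inner-product}), and concludes non-injectivity whenever $p>1-1/c$, which yields the critical value $c^*\approx 3.4$ as the root of $\tfrac12\,\mathrm{erfc}\bigl(1/\sqrt{2c}\bigr)=1/c$. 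To complete your proof you would either need to actually control the correlated orthant probabilities in $\expec{Y^2}$ (and accept whatever threshold that analysis produces), or replace the second-moment step with an explicit witness direction as the paper does.
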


The parameter regions defined in Theorem \ref{theorem-injGauws} are illustrated in Figure \ref{fig:injgausregion}. We note that the study of neural networks with random weights is an important recent research direction \citep{pennington2017nonlinear, benigni2019eigenvalue}. \cite{pennington2017nonlinear} articulate this clearly, arguing that large complex systems, of which deep networks are certainly an example, can be profitably studied by approximating their constituent parts as random variables.

\subsection{Inverse Lipschitz Constant for Networks}

Because $\norm{\relu(Wx) - \relu(Wy)} \leq \norm{W}\norm{x - y}$, it is clear that $\relu(W\cdot)$ is Lipschitz with constant $\norm{W}$; whether the inverse is Lipschitz, and if so, with what Lipschitz constant, is less obvious. We can prove the following result (see Appendix \ref{sec:proof-of-fun-approx}):

\begin{theorem}[Global Inverse Lipschitz Constant] \label{thm:inv-lip:global}
    Let $W \in \Rea^{m \times n}$ have a DSS w.r.t. every $x \in \Rea^n$. There exists a $C(W) > 0$ such that for any $x_0, x_1 \in \Rea^n$,
    \begin{align}
        \label{eqn:inv-lip:global}
        \norm{\relu(Wx_0) - \relu(Wx_1)}_2 \geq C(W) \norm{x_0 - x_1}_2
    \end{align}
    where $C(W) = \tfrac{1}{\sqrt{2m}} \min_{x \in \Rea^n}\sigma(W|_{S(x,W)}) $, and $\sigma$ denotes the smallest singular value.
\end{theorem}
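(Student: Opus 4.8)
The plan is to reduce the global bi-Lipschitz lower bound to a \emph{pointwise} singular-value estimate by integrating the layer's piecewise-constant Jacobian along the straight segment joining $x_0$ and $x_1$, and then to pay a single $\ell^1$-to-$\ell^2$ conversion factor at the very end. The crucial structural fact that makes this work is that, along a straight segment, every coordinate of $\relu(W\cdot)$ moves \emph{monotonically}, so there is no cancellation and the endpoint difference already accounts for the full guaranteed motion.

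First I would check that the constant is well defined and strictly positive. Since $W$ partitions $\Rea^n$ into finitely many sign-pattern wedges, the matrix $W|_{S(x,W)}$ takes only finitely many values as $x$ ranges over $\Rea^n$. The DSS hypothesis (Definition \ref{def:directed-spanning-set}) states that for every $x$ the active rows span $\Rea^n$, so each such $W|_{S(x,W)}$ has full column rank and hence strictly positive smallest singular value; the minimum over the finite collection is thus a genuine positive number $\sigma_{\min} := \min_{x}\sigma(W|_{S(x,W)}) > 0$, and $C(W)=\sigma_{\min}/\sqrt{2m}$.

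The core step is the monotonicity/integration identity. Write $d = x_1 - x_0$ and $x(t) = x_0 + t\,d$ for $t\in[0,1]$. For each row $w_i$ the scalar $t\mapsto \innerprod{w_i}{x(t)} = \innerprod{w_i}{x_0} + t\innerprod{w_i}{d}$ is affine, so $t\mapsto \relu(\innerprod{w_i}{x(t)})$ is monotone and absolutely continuous with a.e.\ derivative $\innerprod{w_i}{d}\,\mathbbm{1}[\innerprod{w_i}{x(t)}>0]$. Because this integrand has constant sign, the fundamental theorem of calculus gives the \emph{exact} identity $|\relu(\innerprod{w_i}{x_1}) - \relu(\innerprod{w_i}{x_0})| = |\innerprod{w_i}{d}|\int_0^1 \mathbbm{1}[\innerprod{w_i}{x(t)}>0]\,dt$. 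Summing over $i$ and exchanging sum and integral (all terms nonnegative) yields
\begin{align*}
    \norm{\relu(Wx_1)-\relu(Wx_0)}_1 = \int_0^1 \norm{W|_{S(x(t),W)}\,d}_1\,dt .
\end{align*}
For a.e.\ $t$ no row satisfies $\innerprod{w_i}{x(t)}=0$, so the active index set is exactly $S(x(t),W)$; using $\norm{\cdot}_1\ge\norm{\cdot}_2$ and the smallest-singular-value inequality gives $\norm{W|_{S(x(t),W)}d}_1 \ge \sigma(W|_{S(x(t),W)})\norm{d}_2 \ge \sigma_{\min}\norm{d}_2$ pointwise, and integrating over $t\in[0,1]$ produces $\norm{\relu(Wx_1)-\relu(Wx_0)}_1 \ge \sigma_{\min}\norm{x_1-x_0}_2$. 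Finally I convert to $\ell^2$ via $\norm{v}_2 \ge m^{-1/2}\norm{v}_1$ for $v\in\Rea^m$, obtaining $\norm{\relu(Wx_1)-\relu(Wx_0)}_2 \ge m^{-1/2}\sigma_{\min}\norm{x_1-x_0}_2 \ge C(W)\norm{x_1-x_0}_2$.

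The main obstacle is exactly the behavior across wedge boundaries, and it is worth stating why the naive estimate fails. A single-piece bound using only $W|_{S(x_0,W)}$ does not work: when a coordinate's activation switches sign between $x_0$ and $x_1$, the ReLU difference on that coordinate can be arbitrarily smaller than the linear difference $\innerprod{w_i}{x_0-x_1}$, so the smallest-singular-value inequality cannot be applied to any one active matrix. The integral representation circumvents this by distributing the guaranteed pointwise bound over the entire segment, and the per-coordinate monotonicity is what guarantees the endpoint $\ell^1$ difference equals the (non-cancelling) total variation and therefore inherits that bound. I note that this route in fact yields the slightly stronger constant $\sigma_{\min}/\sqrt{m}$; a symmetric two-endpoint bookkeeping that splits the contribution evenly between $x_0$ and $x_1$ reaches the stated constant with its explicit factor $\tfrac12$.
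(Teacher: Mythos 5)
Your proof is correct, and it in fact establishes the slightly stronger constant $\sigma_{\min}/\sqrt{m}$ in place of the stated $\sigma_{\min}/\sqrt{2m}$; every step checks out, including the measure-zero bookkeeping at activation boundaries (rows with $\innerprod{w_i}{x(t)}=0$ either occur at a single $t$ or have $\innerprod{w_i}{d}=0$ and contribute nothing to either side of your identity). Your route is genuinely different from the paper's. The paper decomposes the segment $\ell^{x_0,x_1}$ into the wedges it traverses, proves a two-point bound for face-adjacent wedges by showing that the rows whose activation pattern flips across a shared face must be mutually parallel (Lemma \ref{lem:inv-lip:adjacent:face-intersect}, where the $1/\sqrt{2}$ is incurred), telescopes along the segment at the cost of a $\sqrt{n_t}$ factor via a colinear-additivity lemma, bounds the number of traversed wedges by $n_t\le m$ using the same per-coordinate monotonicity you exploit, and finally disposes of segments passing through wedge ``corners'' by a perturbation-and-continuity argument. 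Your total-variation identity $\norm{\relu(Wx_1)-\relu(Wx_0)}_1=\int_0^1\norm{W|_{S(x(t),W)}(x_1-x_0)}_1\,dt$ collapses all of that into one line: the wedge combinatorics, the adjacency case analysis, and the corner degeneracies disappear, and the only price is a single $\ell^1$-to-$\ell^2$ conversion. What the paper's longer route buys is the localized intermediate estimate of Lemma \ref{lem:inv-lip:con-through-faces}, whose constant $\tfrac{1}{\sqrt{2n_t}}\min_{t\in[0,1]}\sigma(W|_{S(\ell^{x_0,x_1}(t),W)})$ depends only on the wedges actually crossed and can be sharper for particular pairs of points, together with some geometric structure (anti-parallel rows across faces) of independent interest; for the global statement as written, your argument is both shorter and quantitatively better.
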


This result immediately yields stability estimates when solving inverse problems by inverting (injective) generative networks. We note that the $(2m)^{-1/2}$ factor is essential for our estimate; the naive ``smallest singular value'' estimate is too optimistic.
\begin{remark}
     We note here that that Theorem \ref{thm:inv-lip:global} gives a bound on the stability of leaky $\relu$ layers. Layers of the form $\lrelu_\alpha(x) = \relu(Wx) - \alpha\relu(-Wx)$ are always injective, provided that $W$ is full rank, but may be unstable if $\alpha \approx 0$. To be explicit, the concept of a DSS still applies to these activation functions, and if a leaky relu layer satisfies the criterion of Theorem \ref{thm:relu-w-injectivity}, then it automatically inherits the stability estimate of Theorem \ref{thm:inv-lip:global} which is independent of $\alpha$.
\end{remark}

\subsection{Convolutional Layer}
\label{sec:concolutional-layer}

\begin{figure}
    \begin{center}
        \includegraphics[width=0.5\linewidth]{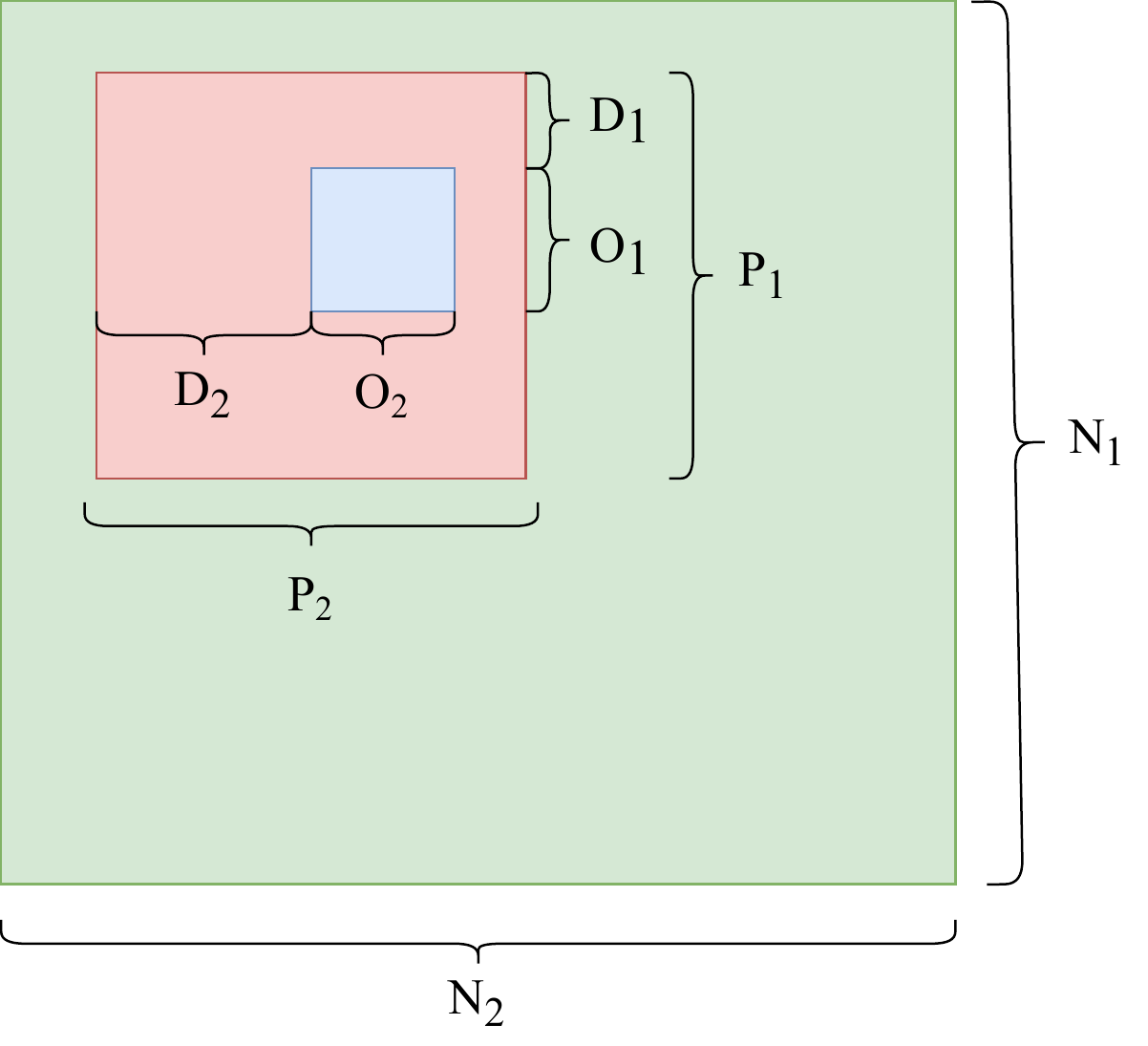}
    \end{center}
    \caption{A visualization of the indices in Definition \ref{def:discrte-convolution} in two dimensions. The \textcolor{udblue}{blue} region is a kernel $c$ of width $O$, the \textcolor{udpink}{pink} region is the zero-padded box of width $P$, and $D$ is the offset of the kernel $c$ in the \textcolor{udpink}{pink} box. The entire signal is the \textcolor{udgreen}{green} box of width $N$.}
    \label{fig:conv-proof-aid}
\end{figure} 

Since convolution is a linear operator we could simply apply Theorem \ref{thm:relu-w-injectivity} and Lemma \ref{lem:relu-w-plus-b-injectivity} to the convolution matrix. It turns out, however, that there exists a result specialized to convolution that is much simpler to verify. In this section we make heavy use of multi-index notation, indicated by capital letters, see Section \ref{sec:notation}. 

\begin{definition} [Convolution Operator] \label{def:conv-operator}
    Let $c \in \Rea^O$. We say that $c$ is a convolution kernel of width $O$. Given $c$ and $x \in \Rea^N$, we define the convolution operator $C \in \Rea^{N \times N}$ with stride 1 as,
    \begin{align} \label{eqn:conv-operator:def}
        (Cx)_{J} = \sum_{I = 1}^O c_{O-I+1}x_{J+I} = \sum_{I' = 1+J}^{O+J}c_{O+J-I'+1}x_{I'}.
    \end{align}
    When $1 \not \leq K$ or $K \not \leq O$ we will set $c_K = 0$. We do not specify the boundary condition on $x$ (zero-padded, periodic, or otherwise), as our results hold generally.
    
\end{definition}

\begin{definition} [Convolutional Layer] \label{def:discrte-convolution}
    We say that a matrix $W \in \Rea^{M \times N}$ is a convolution operator, with $\numconv$ convolutions, provided that $W$ can be written (up to row rearrangement) in the form
    \( 
        W = \begin{bmatrix}
        C_1^T & 
        C_2^T &
        \cdots &
        C_{\numconv}^T
        \end{bmatrix}^T
    \)
    where each $C_k$ is a convolution operator for $k = 1,\dots, \numconv$. A neural network layer for which $W$ is a convolution operator is called a convolution layer.
\end{definition}

Definitions \ref{def:conv-operator} and \ref{def:discrte-convolution} automatically model the standard multi-channel convolution used in practice. For 2D images of size $512 \times 512$, $n_c$ input channels, and $3 \times 3$ convolutions, we simply let $x \in \Rea^N$ with $N = (512, 512, n_c)$ and $c \in \Rea^O$ with $O = (3, 3, n_c)$.

To state our main result for convolutions we also need to define the set of zero-padded kernels for $c \in \Rea^O$. To aid understanding we first give an example of a 2-dimensional zero-padded kernel, as visualized by Figure \ref{fig:conv-proof-aid}. The task is to zero-pad a kernel $c = \bmat{1&-1\\-2&2}$ (the \textcolor{udblue}{blue} box) of dimensions $O = (O_1,O_2)= (2,2)$ so that it occupies the region (the \textcolor{udpink}{blue} box) of size $P = (P_1,P_2) = (3,3)$. This can be done in four different ways, which we call $\cP_{(3,3)}(c)$, given by
\begin{align}
    \label{eqn:zero-padding-aid}
    \cP_{(3,3)}\paren{\bmat{1&-1\\-2&2}} = \set{\bmat{1&-1&0\\-2&2&0\\0&0&0}, \bmat{0&0&0\\1&-1&0\\-2&2&0}, \bmat{0&1&-1\\0&-2&2\\0&0&0}, \bmat{0&0&0\\0&1&-1\\0&-2&2}}.
\end{align}
Further, there is a natural way to order the elements of $\cP_{P}(c)$ for any $c$ and $P$. We can let $D = (D_1,D_2)$ denote the offset of $c$ from the top-left corner. In this way, we denote the (respective) matrices on the r.h.s. of Eqn. \ref{eqn:zero-padding-aid} by $\hat c^{(0,0)}, \hat c^{(1,0)},\hat c^{(0,1)},\hat c^{(1,1)}$. We use the hat notation to remind ourselves that $\hat c^D$ is defined on a different domain than $c$. This example can be straight-forwardly extended to any dimension and kernels of any size as given below.

\begin{definition}[Zero-Padded Kernel] \label{def:zero-padding:kernal}
    Let $c$ be a convolution kernel of width $O$, and let $P$ be another multi-index. Given $O$ and $P$, let $\cD(O,P)$ be the set of admissible offsets $D$ where $D \in \cD(O,P)$ if $0 \leq D  \leq P - O$. Each different choice of $D \in \cD(O,P)$ corresponds to a different way to pad a kernel with zeros. Note that if $O \not \leq P$, then $\cD(O,P) = \emptyset$. Likewise, let $\cT(P)$ be the set of admissible indices $T$ where $T \in \cT(P)$ if $0 \leq T \leq P$. We define the set of zero-padded kernels\footnote{With many convolutional neural networks, padding refers to the act of padding the image (with e.g. zeroes), but the convolutional kernels are not padded. For our results, the variable $P$ refers to the padding of the kernels, not the image.} of $c$ as the set
    \begin{align}
        \cP_P(c) = \left \{ \hat c^D \in \Rea^P \colon D \in \cD(O,P) \text{ for each } T \in \cT(P),
        \paren{\hat c^D}_{T} = \begin{cases}
            c_{D - T+O+1} &\text{ if } 1 \leq T - D \leq O\\
            0 & \text{ otherwise}
        \end{cases}\right \}.
    \end{align}
\end{definition}

\begin{theorem}[Sufficient Condition for Injectivity of Convolutional Layer] \label{thm:suff-conv-injectivity}
    Suppose that $W \in \Rea^{M \times N}$ is a convolution layer with convolutions $\{C_k\}_{k = 1}^{\numconv}$, and corresponding kernels $\{c_k\}_{k = 1}^{\numconv}$. If for any $P$,
    \begin{align} \label{eqn:suff-conv-injectivity:restricted-set}
        W|_{\cP_P} \coloneqq \bigcup_{k = 1}^{\numconv} \cP_P(c_k)
    \end{align}
    has a DSS for $\Rea^P$ with respect to all $x \in \Rea^P$, then $\relu(W\cdot)$ satisfies Theorem \ref{thm:relu-w-injectivity}.
\end{theorem}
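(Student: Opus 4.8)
The plan is to show that the hypothesis forces $W$ itself to have a DSS with respect to every $x \in \Rea^N$, so that the conclusion follows directly from Theorem \ref{thm:relu-w-injectivity}. The engine of the argument is the \emph{translation structure} of a convolution layer: every row of $W$ is a kernel $c_k$ placed at some output location, so the rows of $W$ supported inside a fixed window of width $P$ are, after restriction to that window, exactly the zero-padded kernels in $\bigcup_k \cP_P(c_k)$ indexed by the admissible offsets $D \in \cD(O,P)$. Because the same local family of shifted kernels recurs around every location, verifying the DSS condition on a single window of size $P$ will be enough to control the sign pattern everywhere. I interpret the hypothesis as: there is a $P$ for which $W|_{\cP_P}$ has a DSS of $\Rea^P$ w.r.t. all $x \in \Rea^P$, and I fix such a $P$.

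Concretely, fix an arbitrary $x \in \Rea^N$. First I would cover the index set of $\Rea^N$ by windows of width $P$ and invoke Lemma \ref{lem:domain-decomp} to reduce the goal ``the positive rows of $W$ span $\Rea^N$'' to the local statement ``for each window, the positive rows of $W$ supported inside that window span the corresponding coordinate subspace $\cong \Rea^P$.'' For a single window, let $\tilde x \in \Rea^P$ be the restriction of $x$ to that window. The key bookkeeping step is the locality identity: if a row of $W$ is a shifted copy of $c_k$ lying entirely inside the window, then its inner product with $x$ equals $\innerprod{\tilde x}{\hat c_k^D}$ for the matching zero-padded kernel $\hat c_k^D \in \cP_P(c_k)$, since that kernel vanishes off the window. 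This is where the index matching of Definition \ref{def:zero-padding:kernal} (the offset $D$ together with the flip in Eqn. \ref{eqn:conv-operator:def}) must be tracked carefully; it is the one genuinely fiddly calculation.

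With the locality identity in hand, I would apply the hypothesis to $\tilde x$: since $W|_{\cP_P} = \bigcup_k \cP_P(c_k)$ has a DSS w.r.t. $\tilde x$, there is a subfamily of zero-padded kernels $\hat c_k^D$ with $\innerprod{\tilde x}{\hat c_k^D} \ge 0$ that spans $\Rea^P$. By the locality identity each such $\hat c_k^D$ lifts to a genuine row $w$ of $W$ with $\innerprod{x}{w} \ge 0$ and support inside the window, and these lifted rows span the window's coordinate subspace. Carrying this out for every window in the cover and collecting the resulting positive rows, Lemma \ref{lem:domain-decomp} assembles from the local spanning sets a DSS for $W$ w.r.t. $x$. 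As $x \in \Rea^N$ was arbitrary, $W$ has a DSS w.r.t. every $x$, and Theorem \ref{thm:relu-w-injectivity} yields injectivity of $\relu(W\cdot)$.

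I expect the main obstacle to be the locality/index-matching step combined with the correct formulation of the domain decomposition: one must choose the cover so that every coordinate of $\Rea^N$ is spanned by kernels fitting inside at least one window, and verify that the positivity constraints $\innerprod{x}{w}\ge 0$ are preserved \emph{exactly} under restriction, uniformly over interior and boundary windows and independently of the boundary condition imposed on $x$.
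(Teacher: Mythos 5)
Your proposal is correct and follows essentially the same route as the paper: decompose $\Rea^N$ into coordinate windows of width $P$, identify the rows of $W$ supported in each window with the zero-padded kernels in $W|_{\cP_P}$ (the paper's Lemma \ref{lem:augmented-conv-kernel}), use the vanishing of those rows off the window to transfer the sign condition from the restricted $\tilde x$ to the full $x$, and assemble the local spanning sets via Lemma \ref{lem:domain-decomp}. The ``locality identity'' you single out is exactly the observation $\innerprod{w_{k,\ell}}{P_{\Omega_k}(x)}=\innerprod{w_{k,\ell}}{x}$ inside the paper's proof of Lemma \ref{lem:domain-decomp}, and your reading of ``for any $P$'' as ``for some $P$'' matches the paper's stated intent.
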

We note that Theorem \ref{thm:suff-conv-injectivity} applies if $\bigcup_{k = 1}^{\numconv} \cP_P(c_k)$ is a DSS for $\Rea^P$ for \emph{any} $P$. The proof of the theorem shows that if it holds for any $P$ then the entire operator taken on the whole domain must be injective. See Example \ref{ex:conv-injectivity-1} in Appendix \ref{sec:conv-thm} for an example of applying Theorem \ref{thm:suff-conv-injectivity} in one-dimension.

Theorem \ref{thm:suff-conv-injectivity} applies to multi-channel convolution of width $(O, n_c) = (O_1, \dots, O_p, n_c)$ provided that we choose a $(P, n_c)$ such that $O \leq P$. This theorem shows that if a convolution operator $W$ has a DSS of the \textcolor{udpink}{pink} region, w.r.t. vectors with support in the \textcolor{udpink}{pink} region in Figure \ref{fig:conv-proof-aid}, then it has a DSS of the entire \textcolor{udgreen}{green} region w.r.t. vectors supported on the \textcolor{udgreen}{green} region as well.

To apply Theorem \ref{thm:suff-conv-injectivity} to a convolution layer with $\numconv$ kernels of width $O$, we must choose $P$ so that $W|_{\cP_P}$ has at least the minimal number $2 \prod_{j=1}^p P_j$ of vectors to have a DSS of a vector space of dimension $|P| = \prod_{j=1}^p P_j$. Some algebra gives that
\(
    \numconv \geq 2\prod^p_{j = 1} \frac{1}{1 - {O_j}/{P_j}} \text{ and } P_j \geq \frac{O_j}{1 - ({2}/{\numconv})^{1/p}},
\)
where the last inequality holds only when $\frac{P_j}{O_j}$ is independent of $j$.

In this section we also present a lemma which is key in the proof of Theorem \ref{thm:suff-conv-injectivity}. We believe that this lemma in particular will be useful for future work, as it simplifies the process of showing that a matrix is a DSS of a large domain, by showing it is DSS of smaller ones.

\begin{lemma}[Domain Decomposition] \label{lem:domain-decomp}
    Suppose that $\Rea^n = \Span\{\Omega_{1}, \dots, \Omega_{K}\}$ where\footnote{This is a slight abuse of traditional spanning notation. Here we mean that there is a set of vectors of $\Omega_1$, $\Omega_2$, $\dots$ such that their union spans $\Rea^n$.} each $\Omega_k$ is a subspace and for each $k = 1,\dots, K$ we have a 
    \begin{align}
        W_k = \begin{bmatrix}
            w_{k, 1}^T, \dots, w_{k, N_k}^T
        \end{bmatrix}^T 
        \quad \text{ and } \quad W = \begin{bmatrix} 
            W_1^T, \dots, W_K^T
        \end{bmatrix}
    \end{align}
    such that $w_{k, \ell} \in \Omega_k$ and $W_k$ has a DSS of $\Omega_k$ w.r.t. every $x \in \Omega_k$. Then $W$ has a DSS of $\Rea^n$ w.r.t. every  $x \in \Rea^n$.
\end{lemma}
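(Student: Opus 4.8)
The plan is to fix an arbitrary $x \in \Rea^n$ and to assemble a directed spanning set for it by gluing together the subspace-wise directed spanning sets supplied by the hypothesis. My candidate $\hat W_x$ will simply be the union, over $k = 1, \dots, K$, of those rows of $W_k$ that survive the sign test $\innerprod{x}{w_{k,\ell}} \geq 0$. I then need to verify two things: that every selected row really has non-negative inner product with $x$ (immediate by construction), and that the selected rows span all of $\Rea^n$.

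The key observation — the device that lets me invoke the per-subspace hypothesis even though $x$ need not lie in any $\Omega_k$ — is that for a row $w_{k,\ell} \in \Omega_k$ the inner product $\innerprod{x}{w_{k,\ell}}$ depends only on the orthogonal projection of $x$ onto $\Omega_k$. Concretely, writing $x_k = \proj_{\Omega_k} x \in \Omega_k$ and decomposing $x = x_k + (x - x_k)$ with $x - x_k \perp \Omega_k$, I get $\innerprod{x}{w_{k,\ell}} = \innerprod{x_k}{w_{k,\ell}}$ for every row of $W_k$. Hence the rows of $W_k$ selected by $x$ coincide exactly with those selected by $x_k$, i.e. $S(x, W_k) = S(x_k, W_k)$.

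First I would apply the hypothesis that $W_k$ has a DSS of $\Omega_k$ with respect to the point $x_k \in \Omega_k$: this yields a sub-collection $\hat W_k$ of the rows of $W_k$ with $\innerprod{x_k}{w} \geq 0$ and $\Omega_k \subset \Span(\hat W_k)$. By the previous paragraph these same rows satisfy $\innerprod{x}{w} \geq 0$, so they are legitimate members of $\hat W_x$. Setting $\hat W_x = \bigcup_{k=1}^K \hat W_k$, every row of $\hat W_x$ passes the sign test against $x$, and $\Span(\hat W_x) \supseteq \Span(\hat W_k) \supseteq \Omega_k$ for each $k$; therefore $\Span(\hat W_x) \supseteq \Span\{\Omega_1, \dots, \Omega_K\} = \Rea^n$. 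This exhibits the required DSS of $\Rea^n$ with respect to $x$, and since $x$ was arbitrary the lemma follows.

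The only point requiring a little care — the closest thing to an obstacle — is the degenerate case $x_k = 0$, which occurs precisely when $x \perp \Omega_k$. There the sign test against $W_k$ becomes vacuous (every row gives inner product $0 \geq 0$), so all of $W_k$ enters $\hat W_x$, and I must confirm the hypothesis still delivers a spanning set. This is fine because the DSS condition is assumed for \emph{every} $x \in \Omega_k$, including $x = 0$, and a DSS with respect to $0$ forces the full row set of $W_k$ to span $\Omega_k$; equivalently, one simply observes that including all rows of $W_k$ certainly spans $\Omega_k$. With this case absorbed, the projection argument goes through uniformly for all $x \in \Rea^n$.
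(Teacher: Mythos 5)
Your proposal is correct and follows essentially the same route as the paper: both arguments rest on the observation that $\innerprod{x}{w_{k,\ell}} = \innerprod{P_{\Omega_k}(x)}{w_{k,\ell}}$ because the rows of $W_k$ lie in $\Omega_k$, then invoke the per-subspace DSS hypothesis at the projected point and take the union of the resulting spanning sets. Your extra remark about the degenerate case $P_{\Omega_k}(x)=0$ is a harmless refinement the paper leaves implicit.
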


\subsection{Normalization During Runtime}
\label{sec:norm-training-runtime}

Normalization strategies such as batch \citep{ioffe2015batch}, layer \citep{ba2016layer}, instance \citep{ulyanov2016instance}, group \citep{wu2018group}, weight \citep{salimans2016weight} and spectral \citep{miyato2018spectral} promote convergence during training and encourage low generalization error.  Normalization is a many-to-one operation. In this section we show that batch, weight, and spectral normalization do not interfere with injectivity provided that the network is injective without normalization. In the interest of making this work self-contained, we will not concern ourselves here with the detailed questions of how a network is trained, thus we are only concerned with compatibility between injectivity of a \textit{trained} network. The question of how a network might be trained in such a was as to be injective after training is an important, but beyond the scope of this work.

Let $\{x_i\}_{i = 1,\dots,m}$ represent the inputs to a given layer over a mini-batch. The batch normalization adds two learnable parameters ($\gamma, \beta$) and transforms $x_i$ to $y_i$ as
\begin{align}
    \mu_\mathscr{B} = \frac{1}{m} \sum^m_{i = 1}x_i, \quad \quad
    \sigma^2_\mathscr{B} = \frac{1}{m} \sum^m_{i = 1}(x_i - \mu_\mathscr{B})^2, \quad \quad
    \hat{x}_i = \frac{x_i - \mu_\mathscr{B}}{\sqrt{\sigma_\mathscr{B}^2 + \epsilon}}, \quad \quad
    y_i = \gamma \hat{x}_i + \beta.
\end{align}
Although for a given $\gamma, \beta$ the relationship between $x_i$ and $y_i$ is not injective
, batch normalization is usually present during training but not at test time. Provided that the learned weights satisfy Theorem \ref{thm:relu-w-injectivity} and Lemma \ref{lem:relu-w-plus-b-injectivity}, batch normalization does not spoil injectivity.

In batch renormalization \citep{ioffe2017batch} it is still desirable to whiten the input into each layer. During run time there may be no mini-batch, so running averages of the $\sigma_i$ and $\mu_i$ (denoted $\hat{\sigma}, \hat{\mu})$ computed during training are used. Batch renormalization at test time is then $\hat{x} = \frac{x - \hat \mu}{\sqrt{\hat \sigma^2 + \epsilon}}$, $y = \gamma \hat{x} + \beta$. Since, importantly, $\hat \sigma$ and $\hat \mu$ are not functions of $x$, the mapping between $x$ and $y$ is one-to-one (when $\gamma \neq 0$), thus such normalization in an injective network does not spoil the injectivity.

In weight normalization \citep{salimans2016weight} the coefficients of the weight matrices are normalized to have a given magnitude. This normalization is not a function of the input or output signals, but rather of the weight matrices themselves. This plays no role in injectivity.

Layer, instance and group normalization all take place during both training and execution and, unlike weight and spectral normalization, the normalization is done on the inputs/outputs of layers instead of the weight matrix, thus injectivity is lost. In Appendix \ref{sec:layer-instance-group-normalization}, we present a more general notion of injectivity (Scalar-Augmented Injective Normalization) in order to preserve some of the properties of injectivity in networks that use these normalization strategies.

\section{Universality and Expansivity of Deep Injective Networks}

\label{sec:universality}

We now consider general properties of deep injective networks. Note that a neural network $N_{\theta}\in \mathcal N\mathcal N(n,m)$ is Lipschitz smooth: there is $L_0>0$  such that $|N_{\theta}(x)-N_{\theta}(y)|\le L_0 |x-y|$  for all $x,y\in \Rea^n$. If $N_{\theta}:\Rea^n\to \Rea^m$ is also injective, results of basic topology imply that for any bounded and closed set $B\subset \Rea^n$ the map $N_{\theta}:B\to N_\theta(B)$ has a continuous inverse $N_{\theta}^{-1}:N_\theta(B)\to B$ and the sets $B$ and its image $N_\theta(B)$ are homeomorphic. Thus, for example, if $Z$  is a random variable supported on the cube $[0,1]^n\subset \Rea^n$, the sphere $\mathbb S^{n-1} \subset \Rea^n$, or the torus $\mathbb T^{n-1} \subset \Rea^n$, we see that $N_\theta(Z)$ is a random variable supported on a set in $\Rea^m$ that is homeomorphic to an $n$-dimensional cube or an $n-1$ dimensional sphere or a torus, respectively. This means that injective neural networks can model random distributions on surfaces with prescribed topology. Moreover, if $N_{\theta}:\Rea^n\to \Rea^m$ is a decoder, all objects $x\in N_\theta(\Rea^n)$  correspond to the unique code $z$ such that $x=N_\theta(z)$.

A fundamental property of neural networks is that they can uniformly approximate any continuous function $f:\cC\to \Rea^m$
defined in a bounded set $\cC\subset \Rea^n$. For general dimensions $n$ and $m$,
the injective neural networks do not have an analogous property. For instance, if $f:[-\pi,\pi]\to \R$ is the trigonometric function $f(x)=\sin(x)$, it is easy to see that there 
is no injective neural network (or any other injective function) $N_\theta:[-\pi,\pi]\to \R$ 
such that $|f(x)-N_{\theta}(x)|<1$.
Consider, however, the following trick: add two dimensions in the output vector and consider the map $F:[-\pi,\pi]\to \Rea^3$ given by $F(x)=(0,0,\sin(x))\in \R^3$. When $f$ is approximated by a one-dimensional, non-injective ReLU network $f_\theta:[-\pi,\pi]\to \R$ and $\alpha>0$ is small, then the neural network
$N_\theta(x)=(\alpha\relu\paren{x},\alpha\relu\paren{-x},f_\theta(x))$ is an injective map that approximates $F$.

In general, as ReLU networks are piecewise affine maps, it follows in the case $m=n$ that if a ReLU network $N_\theta:\Rea^n\to \Rea^n$ is
injective, it has to be surjective \cite[Thm. 2.1]{scholtes1996homeomorphism}. This is a limitation of injective neural networks when $m=n$. Building on these ideas we show that when the dimension of the range space is sufficiently large, $m\ge 2n+1$, injective neural networks become sufficiently expressive to universally model arbitrary continuous maps.
In other words, we show that any continuous map $f:\Rea^n\to \Rea^m$, where the dimension of the image space  satisfies $m\ge 2n+1$, can be approximated uniformly in compact sets by injective neural networks. We emphasize that the map $f$ does not need to be injective.

\begin{theorem}[Universal Approximation with Injective Neural Networks]\label{thm:fun-approx-by-injective-nn}
    Let  $f:\Rea^n\to \Rea^m$ be a continuous function, where $m\ge 2n+1$, and $L\ge 1$. Then for any $\varepsilon>0$ and compact subset $\cC \subset \Rea^n$ there exists a  neural network $N_{\theta}\in \mathcal N\mathcal N(n,m)$  of depth $L$ such that $N_{\theta}:\Rea^n\to \Rea^m$ is injective and
    \begin{align}\label{approximation errror}
        \mattimnorm{f(x)-N_{\theta}(x)}\leq \varepsilon_1,\quad\hbox{for all }x\in \cC.
    \end{align}
    Further we have the following. Let $N_\theta$ be such that there exists open and disjoint $\Omega_j \subset \Rea^n$ for $j = 1,\dots,J$ where $\Rea^n = \bigcup^J_{j = 1}\overline{\Omega}_j$ and $\restr{N_\theta}{\Omega_j}$ is affine for each $j$. Then there exist $C_1,C_2, \varepsilon_2(n,m) > 0$, such that for any $\varepsilon_2 \in \epsilon_2(n,m)$ and $R > 0$ where $B(R) \subset \cC$ there exists a $N'_{\theta'} \in \cN\cN(n,m)$ where
    \begin{align}
        \mattimnorm{f(x)-N'_{\theta'}(x)}\leq \varepsilon_1 + \varepsilon_2,\quad\hbox{for all }x\in B(R)
    \end{align}
    and for any $x,y \in B(R)$,
    \begin{align}
        \mattimnorm{N'_{\theta'}(x) - N'_{\theta'}(y)} \geq C \mattinnorm{x - y}
    \end{align}
    where
    \begin{align}
        \label{formula for a}
        C= \frac {C_1^{(n+m)n}}{(2n + \sqrt{m+n})^n(\norm{N_\theta}_{C(B(R))}+R)^n}
         \frac {\varepsilon_2^n}{J^{2\ln( 1+n/(m-2n))+C_2}}.
    \end{align}
\end{theorem}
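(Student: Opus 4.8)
The plan is to establish the two assertions separately: the first (qualitative) statement produces a piecewise-affine injective network approximating $f$, and the second (quantitative) statement equips a nearby network with an explicit inverse-Lipschitz modulus.

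For the first part I would begin with the classical universal approximation theorem to obtain an ordinary ReLU network $f_\theta \in \nnet(n,m)$ with $\mattimnorm{f(x)-f_\theta(x)} \le \varepsilon_1$ on the compact set $\mathcal{C}$; as a ReLU network $f_\theta$ is piecewise affine with finitely many linear regions. The remaining task is to perturb $f_\theta$ into an injective network without destroying the approximation, and this is exactly where $m \ge 2n+1$ is used: $2n+1$ is the Whitney embedding dimension, and the set of affine maps that fail to be injective on a fixed finite union of $n$-dimensional regions has positive codimension in parameter space precisely when $m \ge 2n+1$. I would realize the injectifying perturbation as a ReLU network itself, using the tag trick already illustrated for $\sin$: introduce extra neurons computing a small multiple of the coordinatewise map $x_i \mapsto (\relu(x_i), \relu(-x_i))$, which is an injective (indeed bi-Lipschitz) map $\R^n \to \R^{2n}$, and mix these back into the $m \ge 2n+1$ outputs. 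A general-position argument then shows that for almost every sufficiently small such mixing the resulting $N_\theta$ is injective, while the perturbation is small enough to preserve the $\varepsilon_1$ bound. This already yields \eqref{approximation errror}.

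For the quantitative second part I would make the perturbation explicit and argue by quantitative general position. Writing the given injective piecewise-affine $N_\theta$ as $x \mapsto A_j x + b_j$ on each of the $J$ regions $\Omega_j$, I reduce a global inverse-Lipschitz bound on $B(R)$ to two families of local estimates: (i) \emph{within} each region, the perturbed affine map must be an immersion with a quantified smallest singular value---this is exactly the single-layer quantity controlled by Theorem \ref{thm:inv-lip:global}; and (ii) \emph{between} each ordered pair of regions, the affine images must stay separated, i.e.\ the secant vectors $N'_{\theta'}(x)-N'_{\theta'}(y)$ for $x \in \Omega_j$, $y \in \Omega_k$ must be bounded away from $0$. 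The key geometric observation is that for a fixed pair the secants lie in an affine subspace of dimension at most $2n$, so since $m > 2n$ a generic $\varepsilon_2$-perturbation of the pair keeps these secants off the origin in the $m-2n$ transverse directions.

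The stated constant then emerges from making these two steps quantitative. I would lower-bound the within-region inverse-Lipschitz contribution not by a single smallest singular value but by an $n$-dimensional Gram (Jacobian) volume divided by the product of the remaining singular values; this is where the uniform $n$-th powers arise, with the injectifying perturbation contributing a volume of order $\varepsilon_2^n$ and each competing factor bounded by $\norm{N_\theta}_{C(B(R))}+R$ and by the mixing norm $2n+\sqrt{m+n}$, yielding the numerator $C_1^{(n+m)n}\varepsilon_2^n$ over $(2n+\sqrt{m+n})^n(\norm{N_\theta}_{C(B(R))}+R)^n$. For the between-region step I would estimate, for a single pair, the fraction of $\varepsilon_2$-perturbations that bring the two $\le 2n$-dimensional secant subspaces within the chosen modulus; a union bound over the $\sim J^2$ ordered pairs then forces the admissible modulus to shrink with $J$, and the covering/tail exponent that measures how $2n$ secant dimensions sit inside $m$ output dimensions is precisely $\ln(1+n/(m-2n))$, producing the denominator $J^{2\ln(1+n/(m-2n))+C_2}$. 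The main obstacle is carrying out this general-position argument \emph{quantitatively and uniformly over all $\sim J^2$ pairs at once} while keeping the total perturbation of size $\varepsilon_2$, together with the boundary matching needed to upgrade the within- and between-region estimates to a single modulus valid globally on $B(R)$, including on the shared faces where $N'_{\theta'}$ switches affine pieces. A secondary but delicate point is calibrating the admissible range $\varepsilon_2 \in \varepsilon_2(n,m)$ so that the perturbation is simultaneously small enough to keep the approximation error at $\varepsilon_1+\varepsilon_2$ and large enough for the volume and union-bound estimates to furnish the explicit modulus $C$.
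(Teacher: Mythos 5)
Your first half is essentially the paper's argument: approximate $f$ by an ordinary ReLU network $F_\theta$, lift it to the injective map $H_\theta(x)=(\alpha x, F_\theta(x))\in\Rea^{n+m}$ (your $(\relu(x_i),\relu(-x_i))$ tagging realizes exactly this extra identity block inside a ReLU network), and then project generically back down to an $m$-dimensional subspace chosen close to the last $m$ coordinates. The reason a generic projection preserves injectivity is that the normalized secant set $s_\theta(x,y)=(H_\theta(x)-H_\theta(y))/\mattidnorm{H_\theta(x)-H_\theta(y)}$ has image of Hausdorff dimension at most $2n<m$ in the sphere, so one can successively remove $n+m-m=n$ directions avoiding it. At the level of a sketch this part is sound and matches the paper.

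The quantitative half has two genuine gaps. First, the exponent $J^{2\ln(1+n/(m-2n))+C_2}$ cannot be produced by a single union bound over the $\sim J^2$ ordered pairs of affine pieces with a fixed ``covering exponent.'' In the paper the projection $\Rea^{n+m}\to\Rea^m$ is factored into $n$ successive rank-one projections $Q_{w_1},\dots,Q_{w_n}$; at step $k$ a sphere-packing count in $\mathbb{S}^{D-k}$ against the union of $J^2$ great subspheres $L_{jk}$ of dimension $2n$ (which contain all secants, within-piece and cross-piece alike) yields a direction $w_k$ within $\delta$ of the $k$-th coordinate vector and at distance $\rho_k\gtrsim c_0 J^{-2/(m-n-k)}\delta$ from the current secant set, and each such step multiplies the inverse Lipschitz constant by $\rho_k/\sqrt{2}$. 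The stated exponent is exactly the telescoped harmonic sum $2\sum_{k=1}^{n}(m-n-k)^{-1}\le 2\ln\tfrac{m-n}{m-2n}+C_2$, and the $\varepsilon_2^n$ is the product of the $n$ factors $\delta$. A one-shot general-position/union-bound argument has no mechanism to generate this logarithm, so your route does not reach the stated constant $C$. Second, your within-region/between-region decomposition creates the boundary-matching problem you yourself flag as ``the main obstacle''; the paper avoids it entirely, because its key lemma---if $\mathrm{dist}(w,s_\theta(\mathscr{P}))\ge\rho$ and $\mattidnorm{H_\theta(x)-H_\theta(y)}\ge\alpha\mattinnorm{x-y}$, then $Q_w\circ H_\theta$ is $\tfrac{1}{\sqrt{2}}\rho\alpha$-inverse-Lipschitz---applies to all secants simultaneously, including pairs in different pieces and on shared faces, so no gluing is needed. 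Relatedly, the within-piece lower bound is not supplied by Theorem \ref{thm:inv-lip:global} (a statement about a single layer $\relu(W\cdot)$, not about a composed piecewise-affine network); it comes for free from the $\alpha x$ block of the lift, since $\mattidnorm{H_\theta(x)-H_\theta(y)}\ge\alpha\mattinnorm{x-y}$ by construction.
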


Before describing the proof, we note that this result also holds for leaky $\relu$ networks with no modification. The key fact used in the proof is that the network is a piecewise affine function.

To prove this result, we combine the approximation results for neural networks  (e.g., Pinkus's density result for shallow networks \citep[Theorem 3.1]{pinkus1999approximation} or Yarotsky's result for deep neural networks  \citep{yarotsky2017error}), with the Lipschitz-smooth version of the generic projector technique. This technique from differential topology is used for example to prove the easy version of the Whitney's embedding theorem \citep[Chapter 2, Thm.\ 3.5]{hirsch2012differential} and Whitney's injective immersion theorem \cite[Thm.\ 2.4.3]{Mukherjee}.
Related random projection techniques have been used earlier in machine learning and compressed sensing \citep{Broomhead2,Broomhead,Baraniuk-Wakin,Hedge-Wakin-Baraniuk,Iven-Maggioni}.

The proof of Theorem \ref{thm:fun-approx-by-injective-nn} is based
on applying the above classical results to locally approximate the function $f:\R^n\to \R^m$ by some (possibly non-injective) ReLU-based neural network
$F_\theta:\R^n\to \R^m$, and augment it by adding additional variables, so that
$H_\theta(x)=(x,F_\theta(x))$ is an injective map $H_\theta:\R^n\to \R^{n+m}$. The image of this map, $M=H_\theta(\R^n)$, is an $n$-dimensional, Lipschitz-smooth submanifold 
of $\R^{n+m}$. The dimension of $m+n$ is larger than $2n+1$, which implies that for a randomly chosen projector
$P_1$ that maps $\R^{n+m}$ to a subspace of dimension $m+n-1$, the restriction of $P_1$ on the submanifold $M$, that is, $P_1:M\to P_1(M)$, is injective. By applying $n$ random projectors, $P_1,\dots,P_n$, we have that $N_\theta=P_n\circ P_{n-1}\circ \dots \circ P_1\circ H_\theta$ is an injective map whose image is in a $m$  dimensional linear space. These projectors can be multiplied together, but are generated sequentially. By choosing the projectors $P_j$ in a suitable way, the obtained neural network $N_\theta$ approximates the map $f$.

We point out that in the proof of Theorem \ref{thm:fun-approx-by-injective-nn} it is crucial that we first approximate function $f$ by a neural network and then apply to it a generic projection to make the neural network injective.
Indeed, doing this in the opposite order may fail as an arbitrarily small deformation  (in $C(\Rea^n)$)
of an injective map may produce a map that is non-injective.

\begin{figure}
    \begin{center}
        \includegraphics[width=0.43\textwidth]{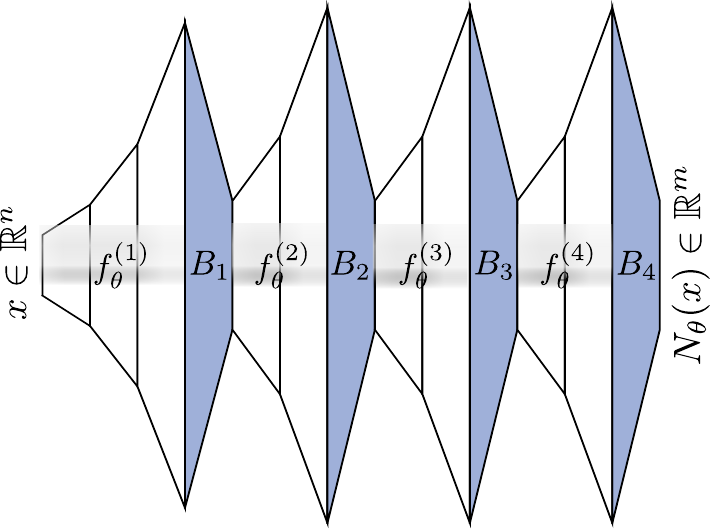}
    \end{center}
    \caption{An illustration of an injective deep neural network that avoids expansivity as described by Corollary \ref{cor: decreasing dimension}. White trapezoids are expansive weight matrices satisfying Theorem \ref{thm:relu-w-injectivity}, and the blue trapezoids are random projectors that reduce dimension while preserving injectivity.}
    \label{fig:xmas-tree}
\end{figure}

Note that $J$  is estimated in theory of hyperplane arrangements \cite[Section 3.11]{stanley2011enumerative}. Indeed, when $m$ hyperplanes $\mathcal A=\{T_1,\dots,T_m\}$ divide $\R^n$ to disjoint regions, the number of the regions $J=r(\mathcal A)$ is bounded by
\begin{align}
    J\leq 1+m+\binom{m}{2}+\dots+\binom{m}{n}.
\end{align}
The proof of Theorem \ref{thm:fun-approx-by-injective-nn}  implies the following corollary on cascaded neural networks where the dimensions of the hidden layers can both increase and decrease (see Figure \ref{fig:xmas-tree}):

\begin{corollary} \label{cor: decreasing dimension} 
  Let $n,m,d_{j}\in \mathbb Z_+$, $j=0,1,\dots,2k$ be such that $d_0=n$,
  $d_{2k}=m\ge 2n+1$ and $d_{j}\ge 2n+1$ for even indexes $j\ge 2$. Let
$$
F_k=B_k\circ  f^{(k)}_\theta\circ B_{k-1}\circ  f^{(k-1)}_\theta\circ\dots\circ B_1\circ  f^{(1)}_\theta
$$
where $f^{(j)}_\theta :\R^{d_{2j-2}}\to \R^{d_{2j-1}}$ are injective neural networks
and $B_j\in\Rea^{d_{2j-1}}\to\Rea^{d_{2j}}$ are random matrices whose joint  distribution is absolutely continuous with respect to the Lebesgue measure of $\prod_{j=1}^k(\Rea^{d_{2j}\times d_{2j-1}})$.
Then the neural network $F_k:\Rea^n\to \Rea^{m}$ is injective almost surely.
\end{corollary}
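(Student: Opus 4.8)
The plan is to strip off the random matrices one at a time, working from the innermost map outward, while maintaining the invariant that the partial composition is injective with image a Lipschitz image of $\R^n$, and to apply the generic-projection machinery underlying Theorem~\ref{thm:fun-approx-by-injective-nn} at each stage. Set $G_0=\mathrm{id}_{\R^n}$ and, for $j=1,\dots,k$, let $G_j=B_j\circ f^{(j)}_\theta\circ G_{j-1}$, so that $G_k=F_k$ and $M_j:=G_j(\R^n)\subset\Rea^{d_{2j}}$. I would prove by induction on $j$ that, almost surely, $G_j$ is injective and $M_j$ is a Lipschitz image of $\R^n$ (in particular an $n$-dimensional Lipschitz submanifold). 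The base case $G_0=\mathrm{id}$ is immediate.

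For the inductive step, suppose $G_{j-1}$ is injective and $M_{j-1}$ is a Lipschitz image of $\R^n$. Since $f^{(j)}_\theta$ is an injective piecewise-affine network, $\tilde M_{j-1}:=f^{(j)}_\theta(M_{j-1})\subset\Rea^{d_{2j-1}}$ is again a Lipschitz image of $\R^n$, of intrinsic dimension at most $n$; hence its set of secant directions $\{(x-y)/\norm{x-y}: x\neq y\in\tilde M_{j-1}\}$ is, up to closure, a Lipschitz image of a subset of $\R^{2n}$ and so has Hausdorff dimension at most $2n$. This is exactly the configuration treated by the Lipschitz version of the generic-projector technique used in the proof of Theorem~\ref{thm:fun-approx-by-injective-nn}: because the output dimension obeys $d_{2j}\ge 2n+1>2n$, the set of matrices $B_j\in\Rea^{d_{2j}\times d_{2j-1}}$ whose kernel meets this secant set --- equivalently, those that fail to be injective on $\tilde M_{j-1}$ --- is a Lebesgue-null subset of $\Rea^{d_{2j}\times d_{2j-1}}$. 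The threshold is sharp: writing $\dim\ker B_j=d_{2j-1}-d_{2j}$ in the dimension-reducing regime, a generic kernel misses the $2n$-dimensional secant set precisely when $(d_{2j-1}-d_{2j})+2n<d_{2j-1}$, i.e.\ $d_{2j}\ge 2n+1$. Thus for Lebesgue-a.e.\ $B_j$ the restriction $B_j|_{\tilde M_{j-1}}$ is injective, whence $G_j=B_j\circ f^{(j)}_\theta\circ G_{j-1}$ is a composition of maps each injective on the relevant set and is therefore injective, with image $M_j$ again a Lipschitz image of $\R^n$.

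It remains to pass from ``Lebesgue-a.e.\ $B_j$'' to the joint law. Let $\mu$ be the joint distribution of $(B_1,\dots,B_k)$ and $\lambda$ Lebesgue measure on $\prod_{j=1}^k\Rea^{d_{2j}\times d_{2j-1}}$; by hypothesis $\mu\ll\lambda$. Let $E_j$ be the event that $B_j$ fails to be injective on $\tilde M_{j-1}$, and note the crucial point that $\tilde M_{j-1}$ is a deterministic function of $B_1,\dots,B_{j-1}$ alone (and is a Lipschitz image of $\R^n$ regardless of whether those earlier matrices are ``good''). Fixing the coordinates $B_i$ for $i\neq j$, the slice of $E_j$ in the $B_j$-coordinate is the null set found above, so Fubini's theorem gives $\lambda(E_j)=0$, and absolute continuity gives $\mu(E_j)=0$. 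Since the induction shows $F_k$ is injective off $\bigcup_{j=1}^k E_j$, we conclude $\mu\{F_k\text{ not injective}\}\le\sum_{j=1}^k\mu(E_j)=0$.

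I expect the main obstacle to be the measure-theoretic core rather than the topology: one must verify that the exceptional set of matrices in the inductive step is genuinely Lebesgue-null (not merely meager), since only this survives the transfer to $\mu$ through absolute continuity, and this in turn relies on the Lipschitz --- as opposed to smooth --- secant dimension count, which is the technical heart of Theorem~\ref{thm:fun-approx-by-injective-nn}. The remaining points --- measurability of the events $E_j$, and the fact that compositions and images of injective piecewise-affine and linear maps remain Lipschitz images of $\R^n$ --- are routine.
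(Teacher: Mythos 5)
Your proposal is correct and follows essentially the same route as the paper: an induction over $j$ in which $B_j$ is shown to be almost surely injective on the Lipschitz image $f^{(j)}_\theta(G_{j-1}(\R^n))$ via the secant-set Hausdorff-dimension count underlying Lemma \ref{lem: projections}, with absolute continuity used to pass from the reference measure to the joint law. The only real difference is in the measure transfer: you assert directly that the bad set of matrices is Lebesgue-null in $\R^{d_{2j}\times d_{2j-1}}$ and invoke Fubini on the product of matrix spaces, whereas the paper first reduces to iid Gaussian entries, extracts the row space of $B_j$ by an SVD, and uses rotation invariance to apply the Grassmannian null-set statement of Lemma \ref{lem: projections} --- which is exactly the justification your matrix-space null-set claim implicitly requires.
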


Observe that in Corollary \ref{cor: decreasing dimension}  the weight matrices $B_j$ and $B_{j'}$ may not be independent. Moreover, the distribution of the matrices $B_j$ may be supported in an arbitrarily small set, that is, a small random perturbation of the weight matrices make the function $F_k$ injective. Corollary \ref{cor: decreasing dimension} characterizes deep globally injective networks, while avoiding the exponential growth in dimension as indicated by Corollary \ref{cor:minimal-expansivity}.

\section{Numerical Experiments}

\subsection{Experiments testing the minimal expansivity threshold} \label{app:expansivity}

Theorem \ref{theorem-injGauws} implies that an expansivity factor of $2.1$ as suggested by \citep{lei2019inverting} is not enough to ensure global injectivity with Gaussian weights. Insufficient expansivity leads to failure cases in inverse problems, as we show in this section. We use a single ReLU layer, $f(x) = \relu(Wx)$, and choose $x$ to be an MNIST digit for ease of visualization of the ``latent'' space. We choose $W\in \R^{cn\times n}$ with iid Gaussian entries and set $c = m/n = 2.1$. Given $y=f(x)$, our goal is to reconstruct $x$ using, for example, gradient descent. This requires at least $n$ of the $cn$ entries of $f(x)$ to be non-zero since $x\in\R^n$. As shown in Figure \ref{fig:gaussian_numerics}, $x_0 = -\frac{1}{m}\sum_j w_j / \norm{w_j}$ violates this condition with high probability when $c = 2.1$.

We choose $2$ data samples, $x_1$ and $x_2$ to be $2$ ``latent'' codes. Note that $W$ has a DSS with respect to $x_1$. This ensures that given $f(x_1)$ one can reconstruct $x_1$. Now, in order to show that injectivity around a few points is insufficient for global injectivity, we linearly interpolate between $x_0$ and $x_1$ and $x_0$ and $x_2$ in a $2$D grid. We show the intermediate $x$s as images in Figure $\ref{fig:dss_single_relu_layer}$. The $x$s shown in red 
cannot be uniquely recovered given $f(x)$, i.e., there exist infinitely many samples close to the red samples that all map to the same output.

To further illustrate this, we also run a simple experiment where we have $y= f(x)+\eta$, with a  $10$dB signal-to-noise ratio. We invert $f$ to estimate $x$ for 3 different samples in the domain of $f$. We can easily see that non-injective points of the domain give poorer reconstructions due to the lower number of positive inner products with $W$. In order to avoid problems of non-convexity in optimization we report the best reconstruction out of 10 in 4 different trials with different random seeds, similar to experiments of \citep{bora2017compressed}. While these results show reconstructions with single layer, the issues are only exacerbated with multiple layers.

\begin{figure}
\centering
\includegraphics[width=0.8\textwidth]{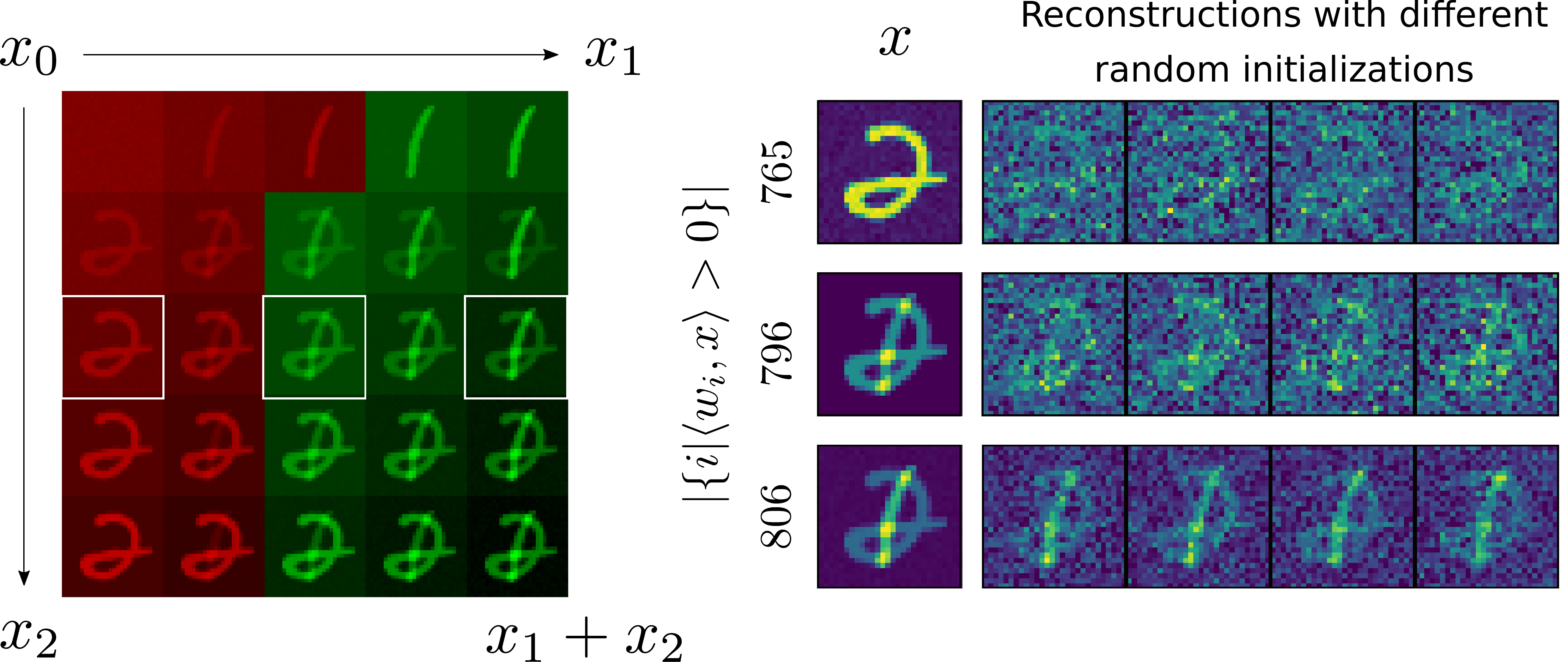}
\caption{The mapping $\phi(x):=\relu(Wx),\ x \in \R^n$ when $W\in\R^{2.1n\times n}$ as suggested in \citep{lei2019inverting} is not injective as the samples shown in red do not have $n$ positive inner products with the rows of $W$. This implies that around the red samples $\phi$ cannot be inverted. For the 3 samples delineated by white boxes on the left, we do a simple denoising test where $y = \phi(x) + \eta$ and $\eta$ corresponds to $10$dB noise. Each reconstruction reported is best out of 10 trials based on MSE error.}
\label{fig:dss_single_relu_layer}
\end{figure}

\subsection{Injective GANs Improve Inference}

We devise a simple experiment to demonstrate that 1) one can construct injective networks using Corollary \ref{cor:minimal-expansivity} that 2) perform as well as the corresponding non-injective networks, while 3) being better at inference problems. One way to do inference with GANs is to train a so-called inference network $\infnet  : \cX \to \cZ$ jointly with the generator $G : \cZ \to \cX$ so that $I(G(z)) \approx z$ \citep{donahue2016adversarial,dumoulin2016adversarially}; $\infnet$ is trained to invert $G$ on its range. This is used to evaluate the likelihood of an image $x$ in the range of $G$ as $p_Z(\infnet(x))$. However, for $p_Z(\infnet(G(z)) = p_Z(z)$ to hold, $G$ must be injective---otherwise the distribution of $\infnet(G(z))$ will be different from that of $z$. One would thus hope that the distribution of $\infnet(G(z))$ will be closer to that of $z$ if we enforce injectivity. If that is the case and the sample quality is comparable, then we have a practical tool to improve inference.

We use the DCGAN~\citep{radford2015unsupervised} architecture with the same hyperparameters, layer sizes and normalization strategies for the regular GAN and the injective GAN; see Appendix \ref{app:arch_dets}. Note that the original DCGAN uses several $\relu$ conv layers with an expansivity of $2$ per layer. By Corollary \ref{cor:minimal-expansivity} these layers are non-injective. Therefore, the filters of the DCGAN are modified to be of the form $[C; -s^2 C]$ in order to build an injective GAN. We train the networks to generate $64 \times 64 \times 3$ images and draw the latent code $z \in \R^{256}$ iid from a standard normal distribution. We test on CelebA~\citep{liu2015faceattributes} and FFHQ~\citep{karras2019style} datasets. To get a performance metric, we fit Gaussian distributions $\mathcal{N}(\mu, \Sigma)$ and $\mathcal{N}(\mu_{\text{inj}}, \Sigma_{\text{inj}})$ to $G(z)$ and $G_{\text{inj}}(z)$. We then compute the Wasserstein-2 distance $\mathcal{W}_2$ between the distribution of $z \sim \mathcal{N}(0, I_{256})$ and the two fitted Gaussians using the closed-form expression for $\mathcal{W}_2$ between Gaussians,
\(
\mathcal{W}^2_2(\mathcal{N}(\mu_1,\Sigma_1),\mathcal{N}(\mu_2,\Sigma_2)) = \|\mu_1-\mu_2\|^2 + \mathrm{Tr}(\Sigma_1 + \Sigma_2 - 2(\Sigma_1\Sigma_2)^{1/2}).
\)
We summarize the results in Table \ref{tab:metrics}. 
Despite the restrictions on the weights of injective generators, their performance on popular GAN metrics---Fr\'echet inception distance (FID)~\citep{heusel2017gans} and inception score (IS) ~\citep{salimans2016improved}---is comparable to the standard GAN while inference improves. That the generated samples are indeed comparable to the standard GAN can be also gleaned from the from the figure in Appendix \ref{app:arch_dets}.
\begin{table}[ht]
\renewcommand{\arraystretch}{1.} 
\centering
\caption{Injectivity improves inference without sacrificing performance.}
\begin{tabular}{@{}llccc}
\toprule
\multicolumn{1}{@{}l}{Dataset} & \multicolumn{1}{c}{Type of $G$} & Inception Score$\uparrow$ & Fr\'echet Inception Distance$\downarrow$ & $\mathcal{W}_2^2(\Prob_{\hat{z}},\Prob_{z})\downarrow$ \\ \midrule
\multirow{2}{*}{\textit{CelebA}}              & Injective & $2.24\pm 0.09$ & $39.33 \pm 0.41$& $\mathbf{18.59}$        \\\vspace{1mm}
                                     & Regular & $2.22\pm 0.16$ & $50.56 \pm 0.52$& $33.85$        \\ 
\multirow{2}{*}{\textit{FFHQ}}                & Injective & $2.56\pm 0.15$ & $61.22 \pm 0.51$& $\mathbf{9.87}$        \\\vspace{1mm}
                                     & Regular & $2.57\pm 0.16$ & $47.23 \pm 0.90$& $19.63$\\
\bottomrule
\end{tabular}
\label{tab:metrics}
\end{table}

\section{Conclusion}

We derived and explored conditions for injectivity of ReLU neural networks. In contrast to prior work which looks at random weight matrices, our characterizations are deterministic and derived from first principles. They are also sharp in that they give sufficient and necessary conditions for layerwise injectivity. Our results apply to any network of the form \eqref{eqn:deep-network-definition}---they only involve weight matrices but make no assumptions about the architecture. We included explicit constructions for minimally expansive networks that are injective; interestingly, this simple criterion already improves inference in our preliminary experiments. 
The results on universality of injective neural networks further justify their use in applications; they also implicitly justify the various Jacobian conditioning strategies when learning to generate real-world data. Further, injective neural networks are topology-preserving homeomorphisms which opens applications in computational topology and establishes connections to tools such as self-organizing maps. Analysis of deep neural networks has an analogue in the analysis of inverse problems where one studies uniqueness, stability and reconstruction. Uniqueness coincides with injectivity, quantitative stability with the Lipschitz constant of the inverse, and, following \citep{lei2019inverting}, from a linear program we get a reconstruction in the range. 

We also remark here that we believe that many of the theoretical results from this paper can be applied to networks with other activation functions with minimal changes. For example we believe that, Theorems \ref{thm:relu-w-injectivity}, \ref{thm:inv-lip:global}, \ref{thm:suff-conv-injectivity} and \ref{thm:injec-ofgaussian-lower-bound} can be made to apply to arbitrary activation functions that are injective on a half plane. We also believe that Theorem \ref{thm:fun-approx-by-injective-nn} can be applied to any model of injective networks provided that the activation functions are piece-wise affine.

\section*{Acknowledgments}

We are grateful to Daniel Paluka and Charles Clum for pointing out a problem with the bound on the sum of binomial coefficients used to prove Theorem \ref{theorem-injGauws}. We are also grateful to Teemu Saksala for the many useful conversations had, especially in the early stages of this project.


I.D. was supported by the European Research Council Starting Grant 852821---SWING. M.L. was  supported by Academy of Finland, grants 284715, 312110. M.V.dH. gratefully acknowledges support from the Department of Energy under grant DE-SC0020345, the Simons Foundation under the MATH + X program, and the corporate members of the Geo-Mathematical Imaging Group at Rice University.

\bibliographystyle{plain}
\bibliography{references.bib}

\begin{appendices}

\section{Proofs from Section 2}

\subsection{Proofs from Subsection 2.2} \label{sec:proof-of-telu-injec}

\begin{proof}[Proof of Theorem \ref{thm:relu-w-injectivity}]
    Suppose that $W$ is such that the conditions of Theorem \ref{thm:relu-w-injectivity} hold, and that $\relu(W x_1) = \relu(W x_2) = y$. If for $j \in [[m]]$, $y|_j > 0$ then both $\innerprod{w_j}{x_1} > 0$ and $\innerprod{w_j}{x_2} > 0$. Similarly, if $y|_j \leq 0$ then both $\innerprod{w_j}{x_1} \leq 0$ and $\innerprod{w_j}{x_2}  \leq 0$. In particular, this implies that
    \begin{align}
        \label{eqn:lem:relu-w-injec:1}
        \innerprod{w_j}{x_1} > 0 \iff \innerprod{w_j}{x_2} > 0 \text{ and } \innerprod{w_j}{x_1} \leq 0 \iff \innerprod{w_j}{x_2} \leq 0.
    \end{align}
    If we then consider $x_\alpha \coloneqq (1 - \alpha)x_1 + \alpha x_2$ where $\alpha \in (0, 1)$, then 
    \begin{align}
        \relu(W x_\alpha) = y = \relu(W x_1) = \relu(W x_2) .
    \end{align}
    If $\innerprod{w_j}{x_\alpha} > 0$ then at least one of $\innerprod{w_j}{x_1} > 0$ or $\innerprod{w_j}{x_2} > 0$. (\ref{eqn:lem:relu-w-injec:1}) implies that both must hold, therefore $\innerprod{w_j}{x_1} = \innerprod{w_j}{x_2} > 0$. If $\innerprod{w_j}{x_\alpha} = 0$ then $\innerprod{w_j}{x_1} = \innerprod{w_j}{x_2} = 0$ (otherwise (\ref{eqn:lem:relu-w-injec:1}) is violated), thus
    \begin{align}
        \relu(W|_{S(x_\alpha, W)} x_1) = \relu(W|_{S(x_\alpha, W)} x_2) \implies W|_{S(x_\alpha, W)} x_1 = W|_{S(x_\alpha, W)} x_2
    \end{align}
    and so because $W|_{S(x_\alpha, W)}$ is full rank, this implies that $x_1 = x_2$. This proves one direction.
    
    The other direction follows from the following. Suppose that there exists a $x$ such that $W|_{S(x,W)}$ don't span $\Rea^n$. If $S(x,W) = \emptyset$ non-injectivity trivially follows, so suppose w.l.o.g. that $S(x,W) \neq \emptyset$. Let $x^{\perp} \in \ker(W|_{S(x,W)})$ and $\alpha \in \Rea^{+}$ such that $\alpha < \min_{j \in S^c(x,W)} \frac{-\innerprod{x}{w_j}}{|\innerprod{x^{\perp}}{w_j}|}$\footnote{If $\innerprod{x^{\perp}}{w_j} = 0$ for all $j \in S^c(x,W)$, then any $\alpha > 0$ will do.}. Then for $j = 1,\dots, m$ one of the following two hold
    \begin{align}
        \text{ if $j \in S(x,W)$ then}& \innerprod{w_{j}}{x + \alpha x^{\perp}} = \innerprod{w_{j}}{x} + \alpha \innerprod{w_j}{x^{\perp}} = \innerprod{w_j}{x}\\
        \text{ if $j \in S^c(x,W)$ then}& \innerprod{w_{j}}{x + \alpha x^{\perp}} = \innerprod{w_{j}}{x} + \alpha \innerprod{w_j}{x^{\perp}} < 0 .
    \end{align}
    Thus, as $\relu$ acts pointwise (row-wise in $W$), we have that
    \begin{align}
        \label{eqn:relu-w-injectivity:non-injective}
        \relu(W(x +\alpha x^{\perp})) = \relu(Wx)
    \end{align}
    and, hence, $\relu(W \cdot)$ is not injective. 
\end{proof}

\begin{proof}[Proof of Lemma \ref{lem:relu-w-plus-b-injectivity}]
    First, we show that if $\relu(W|_{b\geq 0}\cdot)$ is injective, then so is $\relu(W \cdot + b)$. Clearly if $\relu(Wx_1 + b) = \relu(Wx_2 + b)$ then $\relu(W|_{b \geq 0}x_1 + b|_{b \geq 0}) = \relu(W|_{b \geq 0}x_2 + b|_{b \geq 0})$
    as well. If we apply Lemma \ref{lem:relu-arithmetic:pos-addition} to each component of the above equation, then we obtain that $\relu(W|_{b \geq 0}x_1) = \relu(W|_{b \geq 0}x_2)$
    which, given the injectivity of $\relu(W|_{b \geq 0}\cdot)$, implies that $x_1 = x_2$.
    
    Now suppose that $\relu(W|_{b \geq 0})$ is not injective. Let $x \in \Rea^n$ be such that $W|_{b \geq 0}$ doesn't have a DSS of $\Rea^n$ w.r.t. $x$. Let $\beta > 0$ be small enough so that $W|_{b < 0}(\beta x) + b|_{b < 0} < 0$
    component-wise, and let $x^{\perp} \in \Rea^n$ such that (as in (\ref{eqn:relu-w-injectivity:non-injective})) $\relu(W|_{b \geq 0}(\beta x + x^{\perp})) = \relu(W|_{b \geq 0} \beta x)$. Further let $\alpha < 1$ be small enough such that $W|_{b < 0}(\beta x + \alpha x^{\perp}) + b|_{b < 0} < 0$.
    By a component-wise analysis, we have that 
    \begin{align}
        \relu(W(\beta x + \alpha x^\perp) + b)
        =& \relu(W\beta x + b)|_{b \geq 0}
        = \relu(W\beta x + b) ;
    \end{align}
    thus, $\relu(W\cdot + b)$ is not injective.
\end{proof}

\begin{proof}[Proof of Corollary \ref{cor:minimal-expansivity}]
    If $W \in \Rea^{m \times n}$ is injective then consider a plane $p$ in $\Rea^n$ that none of the rows of $W$ lie in. Apply Theorem \ref{thm:relu-w-injectivity} to both normals of the plane. The corresponding DSS' for each normal are disjoint, thus there must be at least $2n \geq m$, so $m < 2 \cdot n$ implies non-injectivity.
    
    Now we show that if $W$ satisfies Theorem \ref{thm:relu-w-injectivity}, then $W$ is of the form given by (\ref{eqn:cor:twice-expansive-factorization}). Suppose that there is a row vector $w_i$ such that there are no row vectors pointing in the $-w_i$ direction. Let $p$ be a plane through the origin such that $w_i \in p$, but $w_{i'} \not \in p$ for $i \neq i'$. By Theorem \ref{thm:relu-w-injectivity}, there must be at least $n$ columns that lie on each (closed) halfspace of $p$. Not counting $w_i$, one side must have $n-1$ vectors, and the other $n$. Indeed one of the sides must have exactly $n$ vectors on it (including $w_i$). Let $\theta_i$ be the principle angle between $w_i$ and $p$. If we rotate $p$ by an angle $\phi < \min_{i \neq i'} \theta_i$ then we can obtain a new plane $p_\phi$ with $w_i$ on the side with $n$ vectors, so that the other side of $p_\phi$ still only contains $n-1$ vectors. Hence there is no DSS for $p_\phi$ plane's normal on that side.
    
    Thus for $\relu{W}$ to be injective, for every $i \in [[2n]]$ there must be a different $i' \in [[2n]]$ such that $w_i$ and $w_{i'}$ are anti-parallel. This can only happen if for every $w \in W$ there is a corresponding $-dw \in W$, so $W$ must have the form of (\ref{eqn:cor:twice-expansive-factorization}).
\end{proof}

\subsection{Proof of Theorem~\ref{theorem-injGauws}} \label{sec:proof-of-thm-2}

\subsubsection{Upper Bound on Minimal Expansivity}

Let $(\Omega, \cF, \mathbb{P})$ be a probability space and let $w_i : \Omega \to \Rea^n$, $1 \leq i \leq m$ be $m$ iid Gaussian vectors on $\Omega$ stacked in a matrix $W : \Omega \to \Rea^{m \times n}$.

We aim to find conditions for a ReLU layer with the matrix $W$ to be injective. Injectivity fails at $p$ the half-space $\set{x \ : \ \langle x, p \rangle \geq 0}$ contains fewer than $n$ vectors $w_i$. Equivalently, injectivity fails if there is a half-space which contains more than $m - n$ vectors $w_i$ (which implies that its opposite half-space has fewer than $n$).

Let $k = m - n + 1$ and denote by $A \in \Rea^{k \times n}$ some $k \times n$ submatrix of $W$ (for example, the first $k$ rows). Our strategy is to first bound the probability that for a fixed subset of $k$ rows of $W$, there exists an $x$ having positive inner products will all $k$ rows (which signals non-injectivity per above discussion). Second, since there are $\binom{m}{k}$ subsets of $k$ rows, we use the union bound to get an upper bound on the probability of non-injectivity. 

For the first part we follow the proof of \citep[Theorem 13.6]{burgisser2013condition}, parts of which we reproduce for the reader's convenience.  For a sign pattern $\sigma \in \set{-1, 0, 1}^k$, we denote by $R_A(\sigma)$ the set of all $x \in \Rea^n$ which produce the sign pattern $\sigma$ (they belong to the $\sigma$-``wedge'', possibly empty),
\begin{equation}
    R_A(\sigma) = \set{x \in \Rea^n \ : \ \sign(\innerprod{x}{a_i}) = \sigma_i, \ i \in \set{1, \ldots, k}}.
\end{equation}

For $\sigma \in \set{-1, +1}^k = \Sigma$, we define the event $\cE_\sigma$ as
\begin{equation}
    \cE_\sigma = \set{\omega : R_{A(\omega)}(\sigma) \neq \emptyset}.
\end{equation}
We are interested in $\sigma_0 = (1, \ldots, 1)$, meaning that all the inner products are positive. Note that the probability of $\cE_\sigma$ is the same for all $\sigma$ due to the symmetry of the Gaussian measure. Further, note that $\sum_{\sigma \in \Sigma} \mathbf{1}_{\cE_\sigma}(\omega) = |\set{\sigma \ : \ R_{A(\omega)}(\sigma) \neq \emptyset}|$ is the number of wedges defined by $A$. Then
\begin{equation}
    \label{eq:P_E_sigma0}
    \mathbb{P} (\cE_{\sigma_0}) 
    = \frac{1}{2^k} \sum_{\sigma \in \Sigma} \mathbb{P} (\cE_{\sigma}) 
    = \frac{1}{2^k} \sum_{\sigma \in \Sigma} \bE(\mathbf{1}_{\cE_\sigma})
    = \frac{1}{2^k} \bE \left( \sum_{\sigma \in \Sigma} \mathbf{1}_{\cE_\sigma} \right)
    = \frac{1}{2^{k-1}} \sum_{i = 0}^{n - 1} \binom{k-1}{i},
\end{equation}
by \citep[Cor. 1]{winder1966partitions}. In \cite{winder1966partitions} the corollary holds for the generic case but the equality case is generic, and so holds with probability 1, thus we use equality here. 

\begin{figure}
    \centering
    \includegraphics[width=\textwidth]{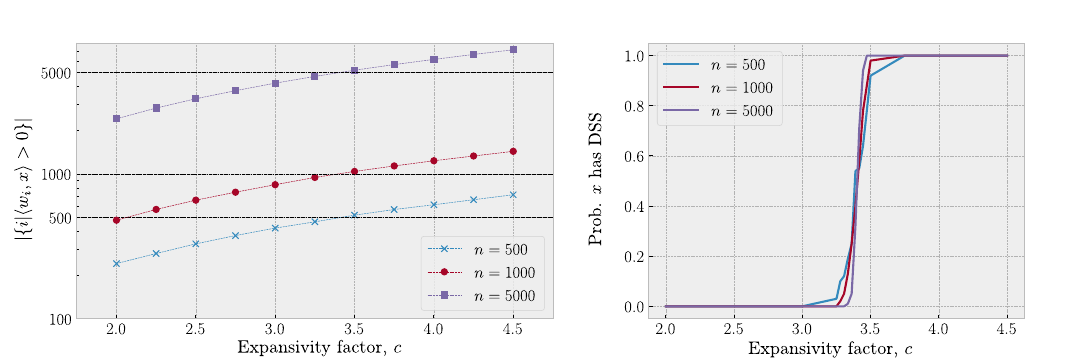}
    \caption{We empirically show that for random Gaussian matrices $\mathbf{W} \in \Rea^{m \times n}\, m=cn$, a critical oversampling factor of at least $c>3.5$ is required for injectivity of ReLU. We choose $x = -\frac{1}{m}\sum_j w_j / \norm{w_j}$. \textit{Left:} a plot of the number of elements of $W$ that point in the direction of $x$ as a function of expansivity for several different choices of $n$. If this quantity is less than $n$, then $W$ cannot contain a DSS of $\Rea^n$ w.r.t. $x$. \textit{Right:} a plot of the empirical probability that a Gaussian matrix $W$ contains a DSS of $\Rea^n$ w.r.t. $x$ as a function of expansivity for several different values of $n$.}
    \label{fig:gaussian_numerics}
\end{figure}

We now have the probability that for a subset of $k$ vectors $w_i$, there exists an $x \in \Rea^n$ which has positive inner products with all $k$ vectors. We are interested in the following event which implies non-injectivity,
\begin{equation}
    \cE_{\text{NI}} = \set{\omega \ : \ W(\omega)~\text{has a subset of $k$ rows}~B(\omega)~\text{such that}~R_{B(\omega)}(\sigma_0) \neq \emptyset}.
\end{equation}
Conversely, $\omega \in \cE_{\text{NI}}^c$ implies almost sure injectivity.

Since there are $\binom{m}{k} = \binom{m}{m - n + 1} = \binom{m}{n - 1}$ different subsets of $k$ rows, we can bound the probability of $\cE_{\text{NI}}$ as
\begin{align}
    \mathbb{P}(\cE_{\text{NI}}) 
    & \leq \binom{m}{n - 1} \mathbb{P}(\cE_{\sigma_0}) = \frac{1}{2^{k-1}}\binom{m}{n - 1}\sum^{n-1}_{i=0}\binom{k-1}{i}\\
    &\leq\paren{\frac{me}{n-1}}^k2^{1-k}2^{H\left(\frac{n-1}{k-1}\right)(k-1)}\\
    &\leq \left(\frac{me}{n - 1}\right)^n 2^{- (m - n) } 2^{(m - n) H\left(\frac{n - 1}{m - n}\right)} \\
    & \lessapprox (ce)^n  2^{n(c - 1)[ H((c - 1)^{-1}) - 1]} \\ 
    \label{eqn:upper-bound-gauss-inj}
    &= 2^{-n \left[-\log_2(ce) - (c - 1)(H((c - 1)^{-1}) - 1) \right]},
\end{align}
where\footnote{\url{https://en.wikipedia.org/wiki/Binomial_coefficient\#Sums_of_binomial_coefficients}} $\binom{n}{k} \leq \paren{\frac{ne}{k}}^k$. and $\sum^k_{i=1}\binom{n}{i} \leq 2^{H(k/n)n}$, where $H(\epsilon) = -\epsilon \log_2(\epsilon) - (1 - \epsilon)\log_2(1 - \epsilon)$ is the binary entropy function. Equation \ref{eqn:upper-bound-gauss-inj} converges to zero as $n\rightarrow \infty$ precisely if $\left[-\log_2(ce) - (c - 1)(H((c - 1)^{-1}) - 1) \right]$ is positive. Numerically we can see that the bound is positive if $c \geq 10.5$.

\subsubsection{Lower Bound on Minimal Expansivity of Layers with Gaussian Weights}
\label{sub:minexpGauss}

In this appendix we prove that large random Gaussian weight matrices $W \in \Rea^{m\times n}$ yield non-injective $\relu$ layers with high probability if $m < c^*n$, where $c^*\approx 3.4$. Note that injectivity fails if there exists a half-space in $\R^n$ which contains less than $n$ rows of $W$. Equivalently, injectivity fails if there is a half-space with more than $m - n$ rows of $W$, since then the opposite halfspace has less than $n$. The core idea of the proof is to make an educated guess for a half-space which has many vectors, and compute the probability that it has more than $m - n$. A good such guess is to take the halfspace defined by the average direction of a row of $W$. Equivalently, we study the size of $S(\overline{w}, W)$ where $\overline{w} = \frac{1}{m} \sum_{k = 1}^m w_k$, the row average of a matrix $W$. If $|S(\overline{w}, W)| > m - n$, then $|S(-\overline{w}, W)| < n$, and $W$ cannot have a DSS w.r.t. $-\overline{w}$. As the Figure \ref{fig:gaussian_numerics} shows, when $m < c^*n$, $W$ does not contain a DSS  w.r.t. $-\overline{w}$ with high probability. 

Let $W \in \Rea^{m\times n}$ be a matrix such that the rows of $W$ are i.i.d. random vectors distributed as $\cN(0, I_n)$. Define the event $E_i$ as
\begin{equation}
    E_i := \set{\omega \ : \ \innerprod{w_i(\omega)}{\sum_{k = 1}^m w_k(\omega)} \geq 0}.
\end{equation}

\begin{lemma}
    \label{lem:concentration}
    Let $m, n \to \infty$ so that $\frac{m}{n} = c$. Then $\mathbb{P}(E_i) \to \frac{1}{2} \mathrm{erfc} \left( - \frac{1}{\sqrt{2c}} \right)$, where $\mathrm{erfc}(z) = 1 - \frac2{\sqrt{\pi}}\int^z_0e^{-t^2}dt$.
\end{lemma}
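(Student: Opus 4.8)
The plan is to condition on the single row $w_i$ and exploit the Gaussianity of the remaining rows. Splitting off the self-term, write
\begin{align}
\innerprod{w_i}{\sum_{k=1}^m w_k} = \norm{w_i}^2 + \innerprod{w_i}{\sum_{k\neq i} w_k}.
\end{align}
Conditionally on $w_i$, the vector $\sum_{k\neq i}w_k$ is distributed as $\gaussian{0}{(m-1)I_n}$ and is independent of $w_i$, so the cross term is a centered one-dimensional Gaussian with variance $(m-1)\norm{w_i}^2$. Hence, conditionally on $w_i$, the quantity $\innerprod{w_i}{\sum_k w_k}$ is $\gaussian{\norm{w_i}^2}{(m-1)\norm{w_i}^2}$. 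This decomposition, which isolates the dependence of the sum on $w_i$ into a deterministic positive shift $\norm{w_i}^2$, is the crux of the argument.

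Standardizing, the conditional probability of $E_i$ is
\begin{align}
\prob{E_i \mid w_i} = \Phi\paren{\frac{\norm{w_i}}{\sqrt{m-1}}},
\end{align}
where $\Phi$ denotes the standard normal CDF; taking expectations then gives $\prob{E_i} = \expec{\Phi\paren{\norm{w_i}/\sqrt{m-1}}}$. Next I would invoke concentration of the norm: since $\norm{w_i}^2$ is a chi-squared variable with $n$ degrees of freedom, $\norm{w_i}^2/n \to 1$ in probability by the law of large numbers (its mean is $n$ and variance $2n$). Combined with $m/n \to c$, this yields $\norm{w_i}/\sqrt{m-1} \to 1/\sqrt{c}$.

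The only delicate step is interchanging the limit and the expectation. Because $\Phi$ is continuous and bounded by $1$, bounded convergence applies to $\Phi\paren{\norm{w_i}/\sqrt{m-1}}$ and gives $\prob{E_i} \to \Phi\paren{1/\sqrt{c}}$. Finally I would rewrite this in the stated form using the elementary identity $\Phi(a) = \tfrac12\,\mathrm{erfc}(-a/\sqrt2)$, which at $a = 1/\sqrt{c}$ becomes exactly $\tfrac12\,\mathrm{erfc}\paren{-1/\sqrt{2c}}$. The main obstacle is really just bookkeeping: confirming the conditional variance computation and justifying the limit interchange. Both are routine once the self-term split is made, and the boundedness of $\Phi$ renders the convergence step painless.
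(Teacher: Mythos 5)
Your proposal is correct and follows essentially the same route as the paper: both split off the self-term $\norm{w_i}^2$, recognize the cross term as $\norm{w_i}\,Y$ with $Y\sim\gaussian{0}{m-1}$ independent of $w_i$, and then use concentration of $\norm{w_i}^2$ around $n$ to pass to the limit $\tfrac12\,\mathrm{erfc}(-1/\sqrt{2c})$. The only difference is cosmetic: the paper justifies the limit with an explicit sub-exponential concentration bound and a sandwiching argument, while you write the conditional probability in closed form as $\Phi(\norm{w_i}/\sqrt{m-1})$ and invoke the weak law of large numbers plus bounded convergence, which is a clean and equally valid way to finish.
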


\begin{proof}
    We have
    \begin{align*}
        \mathbb{P} \left(E_i\right)
        = \mathbb{P} \left( \norm{w_i}^2 + \innerprod{w_i}{\sum_{i \neq j} w_j} \geq 0 \right)= \mathbb{P} \left( \norm{w_i}^2 + \norm{w_i}Y \geq 0\right)
    \end{align*}
    where $Y \sim \cN(0,m-1)$. We now use the fact that $\norm{w_i}^2$ concentrates around $n$. Define the event
    \begin{align}
        D_i = \set{ \omega \ : \ \norm{w_i(\omega)}^2 \in \left[ (1-\epsilon) n, n/(1-\epsilon) \right] }.
    \end{align} It holds by standard concentration arguments (e.g. \cite[Page 6, Corollary 2.3] {barvinok2005math}) that $\mathbb{P}(D_i) \geq 1 - 2 \exp(-\epsilon^2 n / 4)$, and so by the law of total probability,
    \begin{align}
        \label{eq:totprob}
        \mathbb{P} \left\{ \norm{w_i}^2 + \norm{w_i}Y \geq 0 \right\}
        & =
        \mathbb{P} \left\{ \norm{w_i} + Y \geq 0 \, | \, D_i \right\} \mathbb{P} \{ D_i \}
        +
        \mathbb{P} \left\{ \norm{w_i} + Y \geq 0 \, | \, D_i^c \right\} \mathbb{P} \left\{ D_i^c\right\}.
    \end{align}
    Choosing $\epsilon = \epsilon(n) = n^{-1/4}$ yields
    \[
        \mathbb{P} \left\{ \sqrt{n - n^{3/4}} + Y \geq 0 \right\}
        \leq
        \mathbb{P} \left\{ \norm{w_i} + Y \geq 0 \, | \, D_i \right\}
        \leq
        \mathbb{P} \left\{ \sqrt{n / (1 - n^{-1/4})} + Y \geq 0 \right\}.
    \]
    Finally, substituting $Z = \frac{Y}{\sqrt{m - 1}} \sim \cN(0,1)$ yields
    \[
        \mathbb{P} \left\{ Z \geq - \sqrt{\frac{n - n^{3/4}}{m - 1}} \right\}
        \leq
        \mathbb{P} \left\{ \norm{w_i} + Y \geq 0 \, | \, D_i \right\}
        \leq
        \mathbb{P} \left\{ Z \geq - \sqrt{\frac{n / (1 - n^{-1/4})}{m - 1} }  \right\}
    \]
    Both sandwiching probabilities converge to $\mathbb{P} \left\{ Z \geq - \frac{1}{\sqrt{c}} \right\} = \frac{1}{2} \mathrm{erfc} \left( - \frac{1}{\sqrt{2c}} \right)$. Noting that $\mathbb{P} \left\{ D_i \right\} \to 1$ and $\mathbb{P} \left\{ D_i^c \right\} \to 0$ we finally have from \eqref{eq:totprob} that
    \[
        \mathbb{P} \left\{ E_i \right\} \to \frac{1}{2} \mathrm{erfc} \left( - \frac{1}{\sqrt{2c}} \right).
    \]
\end{proof}

\begin{lemma} \label{lem:concentration-intersect}
    Under the same conditions as Lemma \ref{lem:concentration} when $i \neq j$, 
    \begin{align*}
        \Prob\{E_i \cap E_j\} \rightarrow \frac{1}{4}\mathrm{erfc}\left(-\frac{1}{\sqrt{2c}}\right)^2.
    \end{align*}
\end{lemma}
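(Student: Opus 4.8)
The plan is to show that the events $E_i$ and $E_j$ become \emph{asymptotically independent}, so that $\Prob\{E_i \cap E_j\}$ converges to the product of the two marginal limits, each of which equals $\tfrac12\mathrm{erfc}(-1/\sqrt{2c})$ by Lemma \ref{lem:concentration}; squaring then gives the claim. To expose the dependence I would peel off the two shared rows. Writing $T = \sum_{k \neq i,j} w_k \sim \cN(0,(m-2)I_n)$, which is independent of the pair $(w_i,w_j)$, we have
\begin{align*}
    E_i = \set{\norm{w_i}^2 + \innerprod{w_i}{w_j} + \innerprod{w_i}{T} \geq 0}, \quad E_j = \set{\norm{w_j}^2 + \innerprod{w_i}{w_j} + \innerprod{w_j}{T} \geq 0}.
\end{align*}
The only coupling between the two events comes through the common cross term $\innerprod{w_i}{w_j}$ and through the jointly Gaussian vector $(\innerprod{w_i}{T}, \innerprod{w_j}{T})$.

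Next I would condition on $(w_i,w_j)$. Given these, $(\innerprod{w_i}{T}, \innerprod{w_j}{T})$ is a centered bivariate Gaussian with variances $(m-2)\norm{w_i}^2$ and $(m-2)\norm{w_j}^2$ and correlation $\rho = \innerprod{w_i}{w_j}/(\norm{w_i}\norm{w_j})$. Normalizing each component by its standard deviation turns the conditional probability into a bivariate orthant probability $\Phi_2(a_i, a_j; \rho)$, where $\Phi$ denotes the standard normal CDF, $\Phi_2(\cdot,\cdot;\rho)$ the standard bivariate normal CDF with correlation $\rho$, and $a_i = \norm{w_i}/\sqrt{m-2} + \innerprod{w_i}{w_j}/(\sqrt{m-2}\,\norm{w_i})$ (and symmetrically $a_j$). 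I would then introduce the good event $D$ on which $\norm{w_i}^2, \norm{w_j}^2 \in [(1-\epsilon)n, n/(1-\epsilon)]$ with $\epsilon = n^{-1/4}$ and $\abs{\innerprod{w_i}{w_j}} \leq n^{3/4}$. The norm bounds hold with probability $\to 1$ exactly as in Lemma \ref{lem:concentration}, and since $\innerprod{w_i}{w_j}$ is, conditioned on $w_i$, distributed as $\cN(0,\norm{w_i}^2)$, a Gaussian tail bound gives $\abs{\innerprod{w_i}{w_j}} \leq n^{3/4}$ with probability $\to 1$; hence $\Prob(D) \to 1$.

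On $D$ two things happen at once. First, $\abs{\rho} \leq n^{3/4}/((1-\epsilon)n) \to 0$, so by Price's identity $\partial_\rho \Phi_2(a,b;\rho) = \tfrac{1}{2\pi\sqrt{1-\rho^2}}\exp\!\big(-\tfrac{a^2 - 2\rho ab + b^2}{2(1-\rho^2)}\big)$ and the crude bound $\abs{\Phi_2(a,b;\rho) - \Phi(a)\Phi(b)} \leq \abs{\rho}/(2\pi\sqrt{1-\rho^2})$, the orthant probability factorizes up to $o(1)$ uniformly on $D$. Second, the cross-term perturbation in the threshold is $O(n^{-1/4})$ while $\norm{w_i}/\sqrt{m-2} \to 1/\sqrt c$, so $a_i, a_j \to 1/\sqrt c$ uniformly on $D$ and, by uniform continuity of $\Phi$, $\Phi(a_i)\Phi(a_j) \to \Phi(1/\sqrt c)^2 = (\tfrac12\mathrm{erfc}(-1/\sqrt{2c}))^2$. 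Combining, $\Prob\{E_i \cap E_j \mid w_i, w_j\} \to (\tfrac12\mathrm{erfc}(-1/\sqrt{2c}))^2$ uniformly on $D$. Finally I would integrate over $(w_i,w_j)$: since the conditional probability is bounded by $1$ and $\Prob(D^c)\to 0$, splitting $\mathbb{E}[\,\cdot\,] = \mathbb{E}[\,\cdot\,\mathbf 1_D] + \mathbb{E}[\,\cdot\,\mathbf 1_{D^c}]$ and using the uniform convergence on $D$ yields the claimed limit for $\Prob\{E_i \cap E_j\}$.

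The main obstacle is the factorization step: quantitatively decoupling the two orthant events despite the shared cross term $\innerprod{w_i}{w_j}$ and the nonzero correlation $\rho$. The observations that make it work are that $\innerprod{w_i}{w_j}$ has size $O(\sqrt n)$, an order of magnitude below the $O(n)$ thresholds and the $O(\sqrt{mn})$ Gaussian fluctuation, so that both $\rho$ and the threshold perturbation vanish; Price's theorem then converts ``small correlation'' into ``almost a product'' with an explicit error bound that is uniform over the good event.
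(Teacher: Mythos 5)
Your proof is correct, and while it shares the overall scaffolding with the paper's argument (the same decomposition $\sum_k w_k = w_i + w_j + T$, the same good event controlling $\norm{w_i}^2,\norm{w_j}^2 \approx n$ and $\abs{\innerprod{w_i}{w_j}} = O(n^{3/4})$, and the same threshold asymptotics $a_i \to 1/\sqrt{c}$), the decoupling mechanism at the heart of the lemma is genuinely different. The paper exploits rotational invariance to fix $X = \norm{X}(1,0,\dots,0)$, which turns $\innerprod{w_i}{X}$ and $\innerprod{w_j}{X}$ into $\norm{X}w_{i,1}$ and $\norm{X}w_{j,1}$ and so reduces the problem to two \emph{exactly} independent standard normal coordinates; the price is an extra concentration event $D_X$ for $\norm{X}$ and a separate argument that conditioning on the good event does not shift the limiting probability (the $\Prob\{A_u \mid D\} - \Prob\{A_u\} \to 0$ step). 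You instead condition on $(w_i,w_j)$, compute the conditional law of $(\innerprod{w_i}{T},\innerprod{w_j}{T})$ exactly as a bivariate Gaussian with correlation $\rho = \innerprod{w_i}{w_j}/(\norm{w_i}\norm{w_j})$, and convert ``$\rho \to 0$ on $D$'' into factorization via the Plackett/Price bound $\abs{\Phi_2(a,b;\rho) - \Phi(a)\Phi(b)} \leq \abs{\rho}/(2\pi\sqrt{1-\rho^2})$. This buys you two things: no event for $\norm{T}$ is needed at all, and since your $D$ is measurable with respect to $(w_i,w_j)$ while the convergence of the conditional probability is uniform on $D$, the final integration step is a clean split $\mathbb{E}[\cdot\mathbf{1}_D] + \mathbb{E}[\cdot\mathbf{1}_{D^c}]$ with no need to argue that conditioning is asymptotically harmless. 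The cost is importing the quantitative bivariate-normal continuity bound, whereas the paper's rotation trick stays entirely elementary. Both routes are sound.
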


We first prove a helper lemma.

\begin{lemma}
\label{lem:inner-product}
Let $X, Y \sim \mathcal{N}(0, I_n)$ be independent Gaussian random vectors. Then
\[
    \Prob \left\{ |\langle X, Y\rangle| \geq \sqrt{n^{3/2}/(1-\epsilon)} \right\} 
    \leq e^{-\sqrt{n}/2} + 2e^{-\epsilon^2 n / 4}.
\]
\end{lemma}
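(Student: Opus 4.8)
The plan is to condition on $Y$ and reduce the two-vector estimate to a one-dimensional Gaussian tail together with a concentration bound on $\norm{Y}^2$. The key observation is that, conditionally on $Y$, the inner product $\langle X, Y\rangle$ is distributed as $\mathcal N(0, \norm{Y}^2)$: since $X \sim \mathcal N(0, I_n)$, projecting a standard Gaussian onto a fixed direction yields a centered Gaussian whose variance is the squared length of that direction. Thus $Z := \langle X, Y\rangle/\norm{Y}$ is standard normal for every fixed $Y \neq 0$, and controlling $|\langle X, Y\rangle|$ amounts to controlling $\norm{Y}$ and then applying a scalar tail bound.

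Concretely, I would introduce the event $D := \{\omega : \norm{Y(\omega)}^2 \le n/(1-\epsilon)\}$, which is the upper half of exactly the concentration event $D_i$ from the proof of Lemma \ref{lem:concentration}; by the same $\chi^2$ concentration estimate (\cite[Page 6, Corollary 2.3]{barvinok2005math}) we have $\Prob\{D^c\} \le 2\exp(-\epsilon^2 n/4)$. Writing $t = \sqrt{n^{3/2}/(1-\epsilon)}$ for the threshold, the statement is arranged so that on $D$ the normalized threshold satisfies
\begin{align*}
    \frac{t}{\norm{Y}} \ge \frac{\sqrt{n^{3/2}/(1-\epsilon)}}{\sqrt{n/(1-\epsilon)}} = n^{1/4},
\end{align*}
the $(1-\epsilon)$ factors cancelling exactly. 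Hence, conditionally on any $Y \in D$,
\begin{align*}
    \Prob\{|\langle X, Y\rangle| \ge t \mid Y\} = \Prob\{|Z| \ge t/\norm{Y}\} \le \Prob\{|Z| \ge n^{1/4}\}.
\end{align*}

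To finish I would apply the sharp one-sided Gaussian tail bound $\Prob\{Z \ge a\} \le \tfrac12 e^{-a^2/2}$ (valid for all $a \ge 0$), which gives $\Prob\{|Z| \ge n^{1/4}\} \le e^{-\sqrt n/2}$ since $(n^{1/4})^2 = \sqrt n$. Combining via the law of total probability,
\begin{align*}
    \Prob\{|\langle X, Y\rangle| \ge t\} \le \Prob\{|\langle X, Y\rangle| \ge t \mid D\}\,\Prob\{D\} + \Prob\{D^c\} \le e^{-\sqrt n/2} + 2e^{-\epsilon^2 n/4},
\end{align*}
as claimed. The main thing to be careful about is matching constants exactly: producing the clean $e^{-\sqrt n/2}$ without a stray factor of two requires the tail bound with the $\tfrac12$ prefactor rather than the cruder $e^{-a^2/2}$, and the cancellation of the $(1-\epsilon)$ factors in $t/\norm{Y}$ is precisely what dictates the exponent $n^{3/2}$ appearing in the threshold. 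Beyond this bookkeeping there is no real obstacle, since conditioning on $Y$ decouples the two sources of randomness cleanly.
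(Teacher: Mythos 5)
Your proof is correct and takes essentially the same route as the paper: both reduce $\langle X,Y\rangle$ to (norm of one vector) $\times$ (standard normal), control the norm by the same $\chi^2$ concentration event, and apply the Gaussian tail bound $\tfrac12 e^{-a^2/2}$ at $a=n^{1/4}$. The only cosmetic difference is that you condition on $Y$ and use the law of total probability where the paper uses rotational invariance plus a union bound over the two failure events; the resulting constants are identical.
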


\begin{proof}
The distribution of $\langle X, Y \rangle $ is the same as that of $\norm{X} \langle e_1, Y \rangle$ = $\norm{X} Z$, with $Z \sim \mathcal{N}(0, 1)$ independently from $X$ and $e_1 = (1, 0, \ldots, 0)$. Then
\begin{align}
    \Prob \left\{ \norm{X} Z \geq \sqrt{n^{3/2}/(1-\epsilon)} \right\}
    \leq
    \Prob \left\{ Z \geq n^{1/4} \right\} + \Prob \left\{ \norm{X} \geq \sqrt{n / (1-\epsilon)} \right\}.
\end{align}
The second term is bounded by $e^{-\epsilon^2 n / 4}$. For the first term we have
\begin{align}
    \Prob \left\{ Z \geq n^{1/4}  \right\} = \frac{1}{2} \text{erfc} \left( n^{1/4} / \sqrt{2} \right) \leq \frac{1}{2} e^{- \sqrt{n}/2}.
\end{align}
The claim follows by noting that $Z$ is symmetric around 0.
\end{proof}

\begin{proof}{of Lemma \ref{lem:concentration-intersect}}
\begin{align}
    &\Prob \left\{ \left\langle w_i, \sum_{k =1}^m w_k \right\rangle \geq 0 \ \text{ and } \  \left\langle w_j, \sum_{k = 1}^m w_k \right\rangle \geq 0 \right\} \\
    &= 
    \Prob \left\{ \norm{w_i}^2 + \langle w_i, w_j \rangle + \langle w_i, X \rangle \geq 0 \ \text{ and } \  \norm{w_j}^2 + \langle w_i, w_j \rangle + \langle w_j, X \rangle\geq 0 \right\} \\
    &= (*).
\end{align}
where $X \sim \mathcal{N}(0, m - 2)$ is independent from $w_i$ and $w_j$. Fixing the orientation of $X$ does not change the probability so we choose $X = \norm{X}(1, 0, \ldots, 0)$ to get
\begin{equation}
    \label{eq:EiEj-1}
    (*) =     \Prob \left\{ \norm{w_i}^2 + \langle w_i, w_j \rangle + \norm{X} w_{i, 1} \geq 0 \ \text{ and } \  \norm{w_j}^2 + \langle w_i, w_j\rangle + \norm{X} w_{j, 1} \geq 0 \right \} 
\end{equation}
Similarly as in the proof of Lemma \ref{lem:concentration} define the event
\[
    D_i = \{ \omega \ : \ \norm{w_i(\omega)}^2 \in \left[ (1-\epsilon) n, n/(1-\epsilon) \right] \}
\]
and
\[
    D_X = \set{\omega \ : \ \norm{X(\omega)}^2 \in \left[ (1-\epsilon) n(m-2), n(m-2)/(1-\epsilon) \right] },
\]
which individually hold with probability at least $1 - 2\exp(-\epsilon^2 n / 4)$.
Let also 
\[
    D_{\innerprod{i}{j}} = \left\{ \omega \ : \ \abs{\langle w_i(\omega), w_j(\omega)\rangle} \leq \sqrt{ n^{3/2} / (1 - \epsilon)} \right\}.
\]
We have from Lemma \ref{lem:inner-product} that 
\[
    \Prob \left\{ D_{\innerprod{i}{j}} \right\} \geq 1 - \exp(\sqrt{n}/2) - 2 \exp(-\epsilon^2 n / 4).
\]
Choosing $\epsilon = \epsilon(n) = n^{-1/4}$ we get via union bound that
\[
    \Prob(D_i \cap D_j \cap D_X \cap D_{\innerprod{i}{j}}) \geq 1 - C_1 \exp(-C_2 \sqrt{n} ),
\]
for some $C_1, C_2 > 0$.

Let $D = D_i \cap D_j \cap D_X \cap D_{\innerprod{i}{j}}$ and denote
\[
    \alpha = \Prob \left\{ \norm{w_i}^2 + \langle w_i, w_j \rangle + \norm{X} w_{i, 1} \geq 0 \ \text{ and } \  \norm{w_j}^2 + \langle w_i, w_j \rangle + \norm{X} w_{j, 1} \geq 0 \, \big| \, D \right\}.
\]
Dividing the left-hand sides of inequalities in the argument of $\Prob$ by $m$ and denoting $\kappa = c^{-1}$, we have for large enough $n$ (so that the parenthesis is negative),
\begin{align}
    \nonumber
    \alpha \leq 
    \Prob &\left\{ w_{i, 1} \geq \sqrt{\frac{m}{m-2}} \frac{\kappa^{-1/2}}{1 - n^{-1/4}} \left( - \frac{\kappa}{1 - n^{-1/4}} - \kappa \sqrt{\frac{n^{-1/2}}{1 - n^{-1/4}}} \right) \right.  \\ 
    &\text{and} \left.
    w_{j, 1} \geq \sqrt{\frac{m}{m-2}}\frac{\kappa^{-1/2}}{1 - n^{-1/4}} \left( - \frac{\kappa}{1 - n^{-1/4}} - \kappa \sqrt{\frac{n^{-1/2}}{1 - n^{-1/4}}} \right)
    \, \bigg| \, D \right\}
    \label{eq:alpha-up}
\end{align}
(Note that unlike in the proof of Lemma \ref{lem:concentration}, conditioning affects the distribution of $w_i, w_j$ so we need to keep it.) Similarly,
\begin{align}
    \nonumber
    \alpha \geq 
    \Prob &\left\{ w_{i, 1} \geq \sqrt{\frac{m}{m-2}} \kappa^{-1/2} (1 - n^{-1/4}) \left( - \kappa (1 - n^{-1/4}) + \kappa \sqrt{\frac{n^{-1/2}}{1 - n^{-1/4}}} \right) \right.  \\ 
    &\text{and} \left.
    w_{j, 1} \geq \sqrt{\frac{m}{m-2}} (1 - n^{-1/4}) \left( - \kappa (1 - n^{-1/4}) + \kappa \sqrt{\frac{n^{-1/2}}{1 - n^{-1/4}}} \right)
    \, \bigg| \, D \right\}.
    \label{eq:alphadown}
\end{align}

Denote the conjunctions in the arguments of the probabilities in \eqref{eq:alpha-up} and \eqref{eq:alphadown} by $A_u$ and $A_\ell$. We have 
\begin{align*}
    \Prob \left\{ A_u \, | \, D \right\} - \Prob \left\{ A_u \right\} 
    &= \frac{\Prob \left\{ A_u \cap D \right\} }{\Prob \left\{ D \right\}} - \Prob \left\{ A_u \right\} \\
    &\leq \frac{\Prob \left\{ A_u \right\}}{\Prob \left\{ D \right\}} - \Prob \left\{ A_u \right\} \\
    &= \frac{\Prob \left\{ A_u \right\}}{\Prob \left\{ D \right\}} (1 - \Prob \left\{ D \right\}) \to 0, 
\end{align*}
since $\Prob \left\{ D \right\} \to 1$, and similarly for $A_\ell$, so we can remove the conditioning in the limit. Combining all of the above, we have that
\begin{align}
    \alpha \to \Prob \left\{ w_{i, 1} \geq  -\sqrt{\kappa} \ \text{ and } \ w_{j, 1} \geq -\sqrt{\kappa} \right \} = \left( \frac{1}{2} \text{erfc} \left( -\sqrt{\frac{\kappa}{2}} \right)  \right)^2
\end{align}
by independence of $w_{i, 1}$ and $w_{j, 1}$.
\end{proof}

\begin{theorem}\label{thm:injec-ofgaussian-lower-bound}
   Given a Gaussian weight matrix $W \in \Rea^{m \times n}$,  the layer $\relu(Wx)$ with $m / n \to c$ is not injective with probability $\to 1$ as $n \to \infty$ when $c < c^*$, where $c^*$ is the unique positive real solution to 
    \[
        \frac{1}{2} \mathrm{erfc} \left(\frac{1}{\sqrt{2c}} \right) = \frac{1}{c}.
    \]
    The numerical value of $c^*$ is $\approx 3.4$.
\end{theorem}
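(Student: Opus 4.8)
The plan is to exploit the reduction established just before the statement: $\relu(W\cdot)$ fails to be injective whenever some half-space contains more than $m-n$ rows of $W$, and in particular whenever the guessed direction $\overline{w}=\tfrac1m\sum_k w_k$ does. Writing $N=|S(\overline{w},W)|=\sum_{i=1}^m \mathbf{1}_{E_i}$ for the number of rows landing in that half-space, the event $\{N>m-n\}$ forces $|S(-\overline{w},W)|<n$, so $W$ has no DSS w.r.t.\ $-\overline{w}$ and the layer is non-injective. Hence it suffices to show $\mathbb{P}\{N>m-n\}\to 1$. This is a second-moment (Chebyshev) argument whose only two inputs, the first and second moments of $N$, are exactly Lemmas \ref{lem:concentration} and \ref{lem:concentration-intersect}.

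First I would pin down the mean. By exchangeability and Lemma \ref{lem:concentration}, $\mathbb{E}[N]=m\,\mathbb{P}(E_1)=m\,p_m$ with $p_m\to L:=\tfrac12\mathrm{erfc}(-\tfrac1{\sqrt{2c}})$. With $m=cn$ the gap I must control is
\begin{align*}
    \mathbb{E}[N]-(m-n)=n\big(c\,p_m-(c-1)\big)+o(n).
\end{align*}
Using $\mathrm{erfc}(-z)=2-\mathrm{erfc}(z)$ gives $cL-(c-1)=1-\tfrac{c}{2}\mathrm{erfc}(\tfrac1{\sqrt{2c}})$, which is positive precisely when $\tfrac12\mathrm{erfc}(\tfrac1{\sqrt{2c}})<\tfrac1c$. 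The map $c\mapsto \tfrac12\mathrm{erfc}(\tfrac1{\sqrt{2c}})-\tfrac1c$ is strictly increasing (the first term increases in $c$, while $\tfrac1c$ decreases), is negative near $c=1$ and tends to $\tfrac12>0$, so it has a unique root $c^\ast$; for every $c<c^\ast$ the strict inequality holds. Thus for $c<c^\ast$ the gap is $\Theta(n)$: $\mathbb{E}[N]$ exceeds $m-n$ by a fixed positive multiple of $n$.

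Next I would bound the variance. Decomposing and using exchangeability so that every pairwise covariance equals that of $(E_1,E_2)$,
\begin{align*}
    \var(N)=\sum_i \var(\mathbf{1}_{E_i})+\sum_{i\neq j}\operatorname{Cov}(\mathbf{1}_{E_i},\mathbf{1}_{E_j})\leq m+m(m-1)\big(\mathbb{P}(E_1\cap E_2)-\mathbb{P}(E_1)^2\big).
\end{align*}
Lemmas \ref{lem:concentration} and \ref{lem:concentration-intersect} give $\mathbb{P}(E_1\cap E_2)\to \tfrac14\mathrm{erfc}(-\tfrac1{\sqrt{2c}})^2=L^2$ and $\mathbb{P}(E_1)^2\to L^2$, so the common pairwise covariance $\rho_n=\mathbb{P}(E_1\cap E_2)-\mathbb{P}(E_1)^2$ satisfies $\rho_n=o(1)$. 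Therefore $\var(N)\leq m+m(m-1)\rho_n=O(n)+O(n^2)\cdot o(1)=o(n^2)$.

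Finally, Chebyshev closes the argument: since $N\leq m-n$ forces $|N-\mathbb{E}[N]|\geq \mathbb{E}[N]-(m-n)$,
\begin{align*}
    \mathbb{P}\{N\leq m-n\}\leq \frac{\var(N)}{\big(\mathbb{E}[N]-(m-n)\big)^2}=\frac{o(n^2)}{\Theta(n^2)}\to 0,
\end{align*}
so $\mathbb{P}\{N>m-n\}\to1$ and $\relu(W\cdot)$ is non-injective with probability tending to $1$ for $c<c^\ast$. The main obstacle is not this assembly, which is routine once the moments are available, but the fact that $\mathbb{P}(E_1\cap E_2)$ and $\mathbb{P}(E_1)^2$ share the \emph{same} limit so that the pairwise covariance vanishes; this is precisely Lemma \ref{lem:concentration-intersect}, whose proof must carefully track the concentration of $\norm{w_i}^2$, $\norm{X}^2$, and the cross term $\langle w_i,w_j\rangle$ in order to strip the conditioning on the good event $D$ in the limit.
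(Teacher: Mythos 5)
Your proposal is correct and follows essentially the same route as the paper: the same second-moment/Chebyshev argument on $N=\sum_i\mathbf{1}_{E_i}$, relying on Lemmas \ref{lem:concentration} and \ref{lem:concentration-intersect} to show the mean exceeds $m-n$ by $\Theta(n)$ while the pairwise covariances vanish so that $\var(N)=o(n^2)$. Your explicit monotonicity argument for the uniqueness of $c^\ast$ and the $\mathrm{erfc}(-z)=2-\mathrm{erfc}(z)$ reduction to the stated equation are welcome details the paper leaves implicit.
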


\begin{proof}
    Let $X_i = \mathbf{1}_{\innerprod{w_i}{\frac{1}{m} \sum_{j = 1}^{m} w_j} \geq 0}$, the indicator function of the event $E_i$. The expected number of $w_i$ with a positive inner product with $\sum_{i = 1}^m w_i$ is by the linearity of expectation equal to
    \[
        \mathbb{E} \left( \sum_{i = 1}^m X_i \right ) = m \cdot \mathbb{P} \left\{ E_i \right\} =: mp.
    \]
    By Chebyshev's inequality 
    \[
        \mathbb{P} \left( \abs{ \frac{1}{m} \sum_{i=1}^m X_i - p} \geq t \right) \leq \frac{\sigma^2}{t^2}
    \]
    where $\sigma^2 = \mathbb{V} \left(\tfrac{1}{m} \sum_{i=1}^m X_i \right)$ is the variance of the sum. We compute
    \[
        m^2 \sigma^2 = \mathbb{V} \left( \sum_{i = 1}^m X_i \right) = \mathbb{E} \left( \sum_{i=1}^m X_i\right)^2 - \left( \mathbb{E} \, \sum_{i=1}^m X_i \right)^2 = mp + \sum_{i \neq j} \mathbb{P}(E_i \cap E_j) - (mp)^2.
    \]
    Since $\mathbb{P} (E_i \cap E_j) \to p^2$ by Lemmas \ref{lem:concentration} and \ref{lem:concentration-intersect}, we have for any $i$ and $j \neq i$
    \begin{align*}
        \sigma^2 &= \frac{mp + m(m - 1) \mathbb{P}(E_i \cap E_j) - (mp)^2}{m^2} \to 0.
    \end{align*}
    Thus indeed for any $t > 0$
    \[
        \mathbb{P} \left( \abs{ \frac{1}{m} \sum_{i=1}^m X_i - p } \geq t \right) \to 0.
    \]
    Finally,
    \begin{align}
    \mathbb{P}(\text{noninjectivity}) \geq \mathbb{P} \left( \sum_{i=1}^m X_i > m - n \right) = \mathbb{P} \left( \frac{1}{m} \sum_{i=1}^m X_i > 1 -  \frac{1}{c} \right).
    \end{align}
    Combining with Lemma \ref{lem:concentration} we get that
    \[
        \frac{1}{m} \sum_{i = 1}^m X_i \stackrel{\mathbb{P}}{\to} \frac{1}{2} \mathrm{erfc} \left( - \frac{1}{\sqrt{2c}} \right) 
    \]
    Thus\begin{equation}
        \mathbb{P}(\text{noninjectivity}) \geq \mathbb{P} \left( \frac{1}{m} \sum_{i=1}^m X_i > 1 -  \frac{1}{c} \right) \to
        \begin{cases}
            0 & c > c^* \\
            1 & c < c^*.
        \end{cases}
    \end{equation}
\end{proof}


\subsection{Proof of Theorem \ref{thm:inv-lip:global}}
\label{sub:proof-inv-lip}

\begin{figure}
    \centering
    \begin{subfigure}{.4\linewidth}
        \centering
        \includegraphics[width=\linewidth]{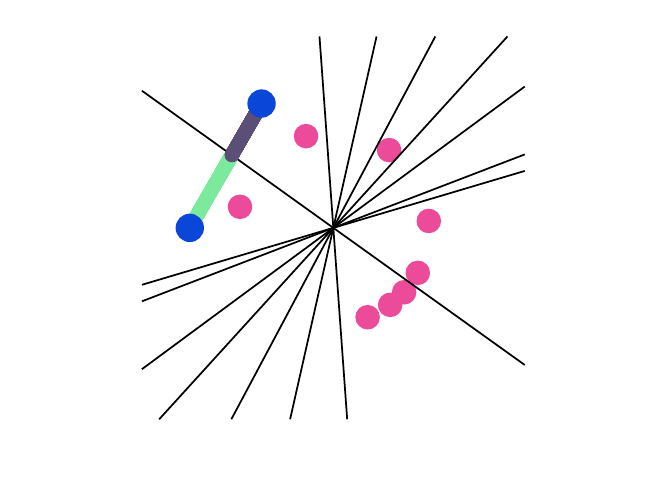}
        \subcaption{}
        \label{fig:dss-aid:adjacent}
    \end{subfigure}
    \begin{subfigure}{.4\linewidth}
        \centering
        \includegraphics[width=\linewidth]{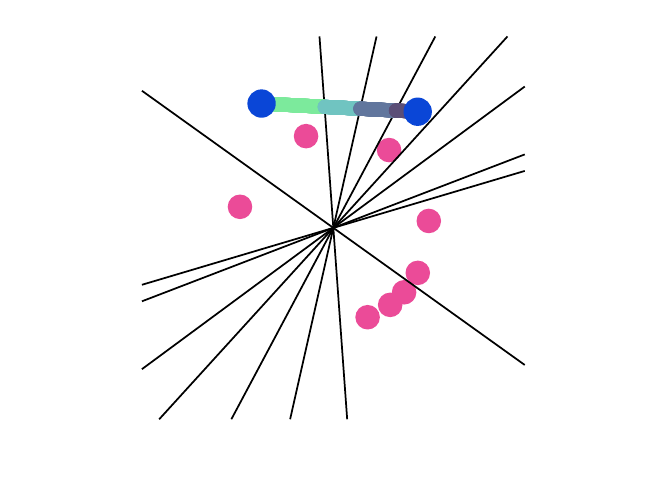}
        \subcaption{}
        \label{fig:dss-aid:distant}
    \end{subfigure}
    \caption{A Proof aid of the DSS condition when $W \in \Rea^{8 \times 2}$. The blue dots are $x_0$ and $x_1$, the pink are the weight matrix rows, black lines denote the boundaries between adjacent wedges, and the multi-colored line is $\ell^{x_0,x_1}(t)$. This line changes color each time it crosses into a new wedge.}
    \label{fig:dss-aid}
\end{figure}

We divide the proof into parts for ease of understanding.

\begin{lemma}[Inverse Lipschitz Constant: Face Adjacent Wedges] \label{lem:inv-lip:adjacent:face-intersect}
    Let $W \in \Rea^{m \times n}$ have a DSS w.r.t. every $x \in \Rea^n$ and $x_0, x_1 \in \Rea^n$.
    Define 
    \begin{align}
        \forall t \in [0, 1], \quad \ell^{x_0,x_1}(t) = (1 - t)x_0 + t x_1
    \end{align}
    and suppose that $x_0$ and $x_1$ are such that there is a $t' \in (0, 1)$ such that for all $t \in [0, 1]$ 
    \begin{align}
        \label{eqn:lem:inv-lip:adjacent:face-intersect:adj-condition}
        W|_{S(\ell^{x_0,x_1}(t),W)} = \begin{cases}
            W|_{S(x_0, W)} &\text{ if } t < t'\\
            W|_{S(x_1, W)} &\text{ if } t' < t\\
        \end{cases}.
    \end{align}
    Suppose further that $x_0, x_1$ are such that there is some $\delta > 0$ such that for all $\delta x \in \Rea^n$, if $\norm{\delta x}_2 < \delta$ then there is a $t' + \delta t$ such that 
    \begin{align}
        \label{eqn:lem:inv-lip:adjacent:face-intersect:face-condition}
        W|_{S(\ell^{x_0,x_1 + \delta x}(t),W) \geq 0} = \begin{cases}
            W|_{S(x_0, W)} &\text{ if } t < t' + \delta t\\
            W|_{S(x_1, W)} &\text{ if } t' + \delta t < t\\
        \end{cases}.
    \end{align}
    Then 
    \begin{align}
        \norm{\relu(Wx_0) - \relu(Wx_1)}_2 \geq \frac{1}{\sqrt{2}}\min(\sigma(W|_{S(x_0,W)}), \sigma(W|_{S(x_1,W)}))\norm{x_0 - x_1}_2 ,
    \end{align}
    where $\sigma(M)$ is the smallest singular value of the matrix $M$.
\end{lemma}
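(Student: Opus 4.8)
The plan is to exploit that $\relu(W\cdot)$ is piecewise linear, with conditions (\ref{eqn:lem:inv-lip:adjacent:face-intersect:adj-condition}) and (\ref{eqn:lem:inv-lip:adjacent:face-intersect:face-condition}) forcing the segment $\ell^{x_0,x_1}$ to cross a single codimension-one face at the point $y := \ell^{x_0,x_1}(t')$. Writing $S_0 := S(x_0,W)$ and $S_1 := S(x_1,W)$, on $[0,t']$ the map agrees with $x \mapsto W|_{S_0}x$ and on $[t',1]$ with $x \mapsto W|_{S_1}x$. First I would verify that the two linear pieces agree at the crossing point, i.e.\ $\relu(Wy) = W|_{S_0}\,y = W|_{S_1}\,y$; this holds because every row whose sign flips at $t'$ satisfies $\innerprod{w_j}{y}=0$ and so contributes zero to both expressions.

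Next, set $v := x_0 - x_1$ and use $x_0 - y = t'v$, $y - x_1 = (1-t')v$ to split
\[
    \relu(Wx_0) - \relu(Wx_1) = a + b, \qquad a := W|_{S_0}(t'v), \qquad b := W|_{S_1}((1-t')v).
\]
The key step is to compute the cross term. Since $a$ is supported on $S_0$, $b$ on $S_1$, and on the overlap $S_0 \cap S_1$ both are the same sign multiple of $\innerprod{w_j}{v}$, one gets $\innerprod{a}{b} = t'(1-t')\sum_{j \in S_0 \cap S_1}\innerprod{w_j}{v}^2 \ge 0$. This nonnegativity is the crux: it forbids cancellation between the two affine pieces, so that $\norm{a+b}_2^2 \ge \norm{a}_2^2 + \norm{b}_2^2$.

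Then I would bound each piece using the DSS hypothesis. By Definition \ref{def:directed-spanning-set} the active rows $W|_{S_0}$ and $W|_{S_1}$ each span $\Rea^n$, hence have full column rank and strictly positive smallest singular value, giving $\norm{a}_2 \ge t'\,\sigma(W|_{S_0})\norm{v}_2$ and $\norm{b}_2 \ge (1-t')\,\sigma(W|_{S_1})\norm{v}_2$. Writing $\sigma_\ast := \min(\sigma(W|_{S_0}),\sigma(W|_{S_1}))$ yields
\[
    \norm{\relu(Wx_0)-\relu(Wx_1)}_2^2 \ge \sigma_\ast^2\big((t')^2 + (1-t')^2\big)\norm{v}_2^2 \ge \tfrac12\,\sigma_\ast^2\norm{v}_2^2,
\]
since $(t')^2+(1-t')^2 \ge \tfrac12$ for every $t'\in[0,1]$, with equality at $t'=\tfrac12$. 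Taking square roots gives the claim, and the worst case $t'=\tfrac12$ is exactly the source of the $1/\sqrt2$ factor.

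As for the conditions, the perturbation hypothesis (\ref{eqn:lem:inv-lip:adjacent:face-intersect:face-condition}) serves to guarantee that the crossing is a single transversal face-crossing (the generic case), so that $y$ lies in the relative interior of the shared face and the piecewise-linear description above is exact; this is precisely the local situation into which the proof of the global Theorem \ref{thm:inv-lip:global} decomposes an arbitrary segment. The genuinely delicate point, I expect, is the sign bookkeeping at $y$ together with establishing $\innerprod{a}{b}\ge 0$; once those are secured, the singular-value estimates and the elementary inequality $(t')^2+(1-t')^2\ge \tfrac12$ are routine.
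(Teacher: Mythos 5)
Your proof is correct, and it takes a genuinely different route from the paper's. The paper argues row-wise: it splits the rows into the common set $S(x_0,W)\cap S(x_1,W)$ and the two exclusive sets, uses the perturbation hypothesis (\ref{eqn:lem:inv-lip:adjacent:face-intersect:face-condition}) to show that all rows active at $x_0$ but not at $x_1$ are mutually parallel (and anti-parallel to those active only at $x_1$), introduces the scaling constant $c$ between the two exclusive blocks, and case-splits on $c\ge 1$ versus $c<1$ to absorb everything into $\tfrac12\norm{W|_{S(x_i,W)}(x_0-x_1)}^2$. You instead split the segment at the crossing point $y=\ell^{x_0,x_1}(t')$, check that the two affine pieces agree there, write the difference as $a+b$ with $a=W|_{S_0}(t'v)$ and $b=W|_{S_1}((1-t')v)$, and kill the cross term via $\innerprod{a}{b}=t'(1-t')\sum_{j\in S_0\cap S_1}\innerprod{w_j}{v}^2\ge 0$, so the constant comes from the elementary bound $(t')^2+(1-t')^2\ge\tfrac12$. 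All of these steps check out: the sign analysis at $y$ (rows that flip must vanish there, rows in $S_0\cap S_1$ stay nonnegative by continuity from both sides) is exactly right, and the DSS hypothesis gives full column rank of $W|_{S_0}$ and $W|_{S_1}$ as you use it. Your route is shorter, avoids the parallelism argument entirely, and---as you half-observe---never actually uses hypothesis (\ref{eqn:lem:inv-lip:adjacent:face-intersect:face-condition}): the adjacency condition (\ref{eqn:lem:inv-lip:adjacent:face-intersect:adj-condition}) alone already forces $\innerprod{w_j}{y}=0$ for every row whose sign flips, which is all the decomposition needs. That is a genuine strengthening (the lemma holds for wedges meeting only in a corner), and it would let one dispense with the corner-avoidance perturbation in the proof of Theorem \ref{thm:inv-lip:global}. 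Both arguments land on the same constant $\tfrac{1}{\sqrt 2}\min(\sigma(W|_{S(x_0,W)}),\sigma(W|_{S(x_1,W)}))$, so the paper's extra hypothesis buys nothing here.
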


\begin{remark} \label{rmk:inv-lip:adjacent:face-intersect}
    The conditions of Lemma \ref{lem:inv-lip:adjacent:face-intersect} on $x_0$ and $x_1$ may look very odd, but they have a very natural geometric meaning. The DSS can be thought of as slicing $\Rea^n$ into wedges by a series of hyperplanes that have the rows of $W$ as normals.
    
    The condition in (\ref{eqn:lem:inv-lip:adjacent:face-intersect:adj-condition}) is interpreted as that the line segment that connects $x_0$ to $x_1$ passes from $x_0$'s wedge into $x_1$'s wedge without passing through any wedges in between (see Figure \ref{fig:dss-aid:adjacent}, as opposed to Figure \ref{fig:dss-aid:distant}). The implies that $x_0$ and $x_1$ must be in wedges that share a boundary. The condition in (\ref{eqn:lem:inv-lip:adjacent:face-intersect:face-condition}) requires that the wedges of $x_0$ and $x_1$ share a face, and not just a corner. 
\end{remark}

\begin{proof}
    We denote 
    \begin{align}
        W|_{S(x_0, W)} = W_0, W|_{S(x_1, W)} = W_1, W_{0\cap 1} = W_0 \cap W_1, W_0 = \begin{bmatrix} W_{0\cap 1}\\ W_{0 \setminus 1}\end{bmatrix}, W_1 = \begin{bmatrix} W_{0\cap 1}\\ W_{1 \setminus 0}\end{bmatrix}.
    \end{align}
    First we will show that if $w_i, w_{i'} \in W_{0 \setminus 1}$, then $w_i$ and $w_{i'}$ must be parallel. From equation (\ref{eqn:lem:inv-lip:adjacent:face-intersect:adj-condition}) and the continuity of $\relu(W\ell^{x_0,x_1}(t))$ w.r.t. $t$, we have that there is a $t' \in (0, 1)$ such that
    \begin{align}
        \innerprod{w_i}{(1 - t')x_0 + t'x_1} = 0 = \innerprod{w_{i'}}{(1 - t')x_0 + t'x_1}.
    \end{align}
    If $w_i$ and $w_{i'}$ are not parallel, then let $\delta x$ be some vector $0 < \norm{\delta x} < \delta$ that is perpendicular to $w_i$ but not $w_{i'}$ then
    \begin{align}
        \innerprod{w_i}{(1 - t')x_0 - t'\delta x + t'x_1} &= - t\innerprod{w_i}{\delta x} = 0 ,\\
        \innerprod{w_{i'}}{(1 - t')x_0 - t'\delta x + t'x_1} &= - t\innerprod{w_{i'}}{\delta x} \neq 0 ,
    \end{align}
    which contradicts (\ref{eqn:lem:inv-lip:adjacent:face-intersect:face-condition}). W.l.o.g. the same argument applies to $W_{1 \setminus 0}$ and also it is straight forward to see that all elements of $W_{0 \setminus 1}$ must be anti-parallel to all elements of $W_{1 \setminus 0}$. From $W_{1 \setminus 0}$ and $W_{0 \setminus 1}$ parallelism, there is a $c \geq 0$ such that for all $x \in \Rea^n$, 
    \begin{align}
        \norm{W_{1 \setminus 0}x} = c^2 \norm{W_{0 \setminus 1}x}.
    \end{align}
    Assume that $c \geq 1$, then
    \begin{align}
        \norm{W_0 x_0 - W_1 x_1}^2_2 &= \norm{W_{0 \cap 1} x_0 - W_{0\cap 1} x_1}^2_2 + \norm{W_{0 \setminus 1} x_0}^2_2 + \norm{W_{1 \setminus 0} x_1}^2_2 \\
        &= \norm{W_{0 \cap 1} x_0 - W_{0\cap 1} x_1}^2_2 + \norm{W_{0 \setminus 1} x_0}^2_2 + c^2\norm{W_{0 \setminus 1} x_1}^2_2 \\
        &\geq \norm{W_{0 \cap 1} x_0 - W_{0\cap 1} x_1}^2_2 + \norm{W_{0 \setminus 1} x_0}^2_2 + \norm{W_{0 \setminus 1} x_1}^2_2 \\
        &\geq \norm{W_{0 \cap 1} x_0 - W_{0\cap 1} x_1}^2_2 + \frac{1}{2}\norm{W_{0 \setminus 1} (x_0 - x_1)}^2_2 \\
        &\geq \frac{1}{2}\norm{W_{0} x_0 - W_{0} x_1}^2_2\\
        &\geq \frac{\sigma(W_0)^2}{2}\norm{x_0 - x_1}^2_2.
    \end{align}
    The antepenultimate inequality comes as from the definition of $W_{0 \setminus 1}$, we have that $W_{0 \setminus 1} x_0$ and $-W_{0 \setminus 1} x_1$ are the same sign, thus (\ref{eqn:lem-useful-ineq:1}) applies. In the case that $c < 1$, then the rolls of $W_{0 \setminus 1}$ and $W_{1 \setminus 0}$ can be switched, and the same result (with $\sigma(W_1)$ in place of $\sigma(W_0)$) is obtained. In either case,
    
    \begin{align}
        \norm{W_0 x_0 - W_1 x_1}^2_2 \geq \frac{1}{2}\min\left (\sigma(W|_{S(x_0,W)}^2), \sigma(W|_{S(x_1,W)})^2\right )\norm{x_0 - x_1}_2^2.
    \end{align}
\end{proof}

\begin{lemma}[Inverse Lipschitz Constant: Connected through Faces] \label{lem:inv-lip:con-through-faces}
    Let $W \in \Rea^{m \times n}$ have a DSS w.r.t. every $x \in \Rea^n$. Let $x_0, x_1$ be such that the line connecting passes through $n_t$ wedges, and through their faces (in the sense of Lemma \ref{rmk:inv-lip:adjacent:face-intersect}). Then 
    \begin{align}
        \norm{\relu(Wx_0) - \relu(Wx_1)}_2 \geq \frac{1}{\sqrt{2n_t}}\min_{t \in [0, 1]}\sigma(W|_{S(\ell^{x_0, x_1}(t),W)}) \norm{x_0 - x_1}_2.
    \end{align}
\end{lemma}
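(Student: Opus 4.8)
The plan is to bypass a triangle-inequality telescoping of Lemma \ref{lem:inv-lip:adjacent:face-intersect} (which, applied to consecutive wedges, would only yield an \emph{upper} bound on the total displacement) and instead to compute $\norm{\relu(Wx_0) - \relu(Wx_1)}_2^2$ \emph{exactly}, coordinate by coordinate, exploiting the fact that along a straight segment the contributions never cancel. Write $d = x_1 - x_0$ and $\beta_j = \innerprod{w_j}{d}$. Since $W$ has a DSS with respect to every point, $\relu(W\cdot)$ is piecewise linear, and by hypothesis the segment $\ell^{x_0,x_1}(t)$ crosses $n_t$ wedges through their faces; this partitions $[0,1]$ into subintervals $I_1,\dots,I_{n_t}$ of lengths $\lambda_1,\dots,\lambda_{n_t}$ with $\sum_k \lambda_k = 1$, on each of which the active sign pattern is a constant index set $S_k$.

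First I would record a one-dimensional identity per row. For fixed $j$, the map $t \mapsto \innerprod{w_j}{\ell^{x_0,x_1}(t)} = \innerprod{w_j}{x_0} + t\beta_j$ is affine, so $\{t \in [0,1] : \innerprod{w_j}{\ell^{x_0,x_1}(t)} \geq 0\}$ is a single subinterval whose length I call $\mu_j$. Applying the fundamental theorem of calculus to $t \mapsto \relu(\innerprod{w_j}{\ell^{x_0,x_1}(t)})$ gives
\[
    \relu(\innerprod{w_j}{x_1}) - \relu(\innerprod{w_j}{x_0}) = \beta_j \int_0^1 \mathbf{1}[\innerprod{w_j}{\ell^{x_0,x_1}(t)} > 0]\, dt = \beta_j \mu_j ,
\]
the crossing time being a null set. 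The crucial point is that \emph{every} contribution to coordinate $j$ carries the sign of $\beta_j$, so there is no cancellation; summing the squares over rows yields the exact expression $\norm{\relu(Wx_1) - \relu(Wx_0)}_2^2 = \sum_j \beta_j^2 \mu_j^2$.

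Next I would expand $\mu_j = \sum_k \lambda_k \mathbf{1}[j \in S_k]$, square it, and exchange the order of summation to obtain
\[
    \norm{\relu(Wx_1) - \relu(Wx_0)}_2^2 = \sum_{k,k'} \lambda_k \lambda_{k'} \norm{\restr{W}{S_k \cap S_{k'}} d}_2^2 .
\]
All terms are nonnegative, so discarding the off-diagonal terms and keeping $k = k'$ leaves the lower bound $\sum_k \lambda_k^2 \norm{\restr{W}{S_k} d}_2^2$. Because $W$ has a DSS with respect to every point of $I_k$, the rows indexed by $S_k$ span $\Rea^n$, so $\norm{\restr{W}{S_k} d}_2 \geq \sigma(\restr{W}{S_k})\norm{d}_2 \geq \sigma_{\min}\norm{d}_2$ with $\sigma_{\min} = \min_{t \in [0,1]} \sigma(\restr{W}{S(\ell^{x_0,x_1}(t),W)})$. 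Finally Cauchy--Schwarz gives $\sum_k \lambda_k^2 \geq (\sum_k \lambda_k)^2 / n_t = 1/n_t$, and therefore
\[
    \norm{\relu(Wx_1) - \relu(Wx_0)}_2 \geq \tfrac{1}{\sqrt{n_t}}\,\sigma_{\min}\norm{d}_2 \geq \tfrac{1}{\sqrt{2n_t}}\,\sigma_{\min}\norm{x_0 - x_1}_2 ,
\]
which is the claim, in fact with the sharper constant $1/\sqrt{n_t}$ (consistent with Lemma \ref{lem:inv-lip:adjacent:face-intersect} at $n_t = 2$).

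The main obstacle is not any single estimate but the bookkeeping: establishing the coordinatewise identity rigorously (affineness of each $t \mapsto \innerprod{w_j}{\ell^{x_0,x_1}(t)}$ together with the null set of crossing times) and confirming that the ``through their faces'' hypothesis is precisely what guarantees a clean partition of $[0,1]$ into $n_t$ intervals of positive length summing to one, on which $S_k$ is well defined. The conceptual crux is the no-cancellation observation above, which is what allows a nontrivial lower bound to survive at all; the residual weighting $\sum_k \lambda_k^2$ is then a routine Cauchy--Schwarz estimate.
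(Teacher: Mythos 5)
Your proof is correct, and it takes a genuinely different route from the paper's. The paper telescopes: it inserts intermediate points $x_{t_k}$, one per wedge, lower-bounds each consecutive image displacement $\norm{\relu(Wx_{t_k})-\relu(Wx_{t_{k+1}})}_2$ via the face-adjacent-wedges Lemma \ref{lem:inv-lip:adjacent:face-intersect} (whose proof requires the anti-parallelism structure forced by the face condition), and then controls the sum of these displacements by $\sqrt{n_t}\,\norm{\relu(Wx_0)-\relu(Wx_1)}_2$ using the colinear-additivity Lemma \ref{lem:colinear-additivity}. Your argument replaces all of this with the exact coordinatewise identity $\relu(\innerprod{w_j}{x_1})-\relu(\innerprod{w_j}{x_0})=\beta_j\mu_j$ (which checks out in all four sign cases, with the degenerate case $\innerprod{w_j}{\ell(t)}\equiv 0$ on an interval forcing $\beta_j=0$ so the term vanishes either way), followed by expanding $\mu_j=\sum_k\lambda_k\mathbf 1[j\in S_k]$, discarding nonnegative cross terms, and Cauchy--Schwarz on $\sum_k\lambda_k^2$. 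The monotonicity-in-$t$ of each coordinate that you exploit to get no cancellation is the same fact underlying Lemma \ref{lem:colinear-additivity}, but you use it to compute the norm exactly rather than to compare two sums. What your approach buys: a sharper constant $1/\sqrt{n_t}$ in place of $1/\sqrt{2n_t}$; independence from Lemma \ref{lem:inv-lip:adjacent:face-intersect} and hence from the ``shared face'' geometry (your argument only needs the segment to decompose into finitely many constant-sign-pattern subintervals, which would also render unnecessary the perturbation-and-limit step the paper performs in the proof of Theorem \ref{thm:inv-lip:global} to avoid corners); and a cleaner bookkeeping of where the factor $1/\sqrt{n_t}$ comes from. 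The one point to state explicitly in a polished write-up is the measure-zero bookkeeping at the crossing times and interval endpoints, which you already flag and which is routine.
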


\begin{proof}
    Let $t_1 = 0$, $t_{n_t} = 1$, and let $T = \{ t_k\}_{k = 1}^{n_t}$ such that $t_{k} < t_{k + 1}$ for $k < n_t$ and $x_{k} \coloneqq \ell^{x_0,x_1}(t_k)$ are each in different wedges. Let $c = \frac{1}{\sqrt{2}}\min_{t \in [0, 1]}\sigma(W|_{S(\ell^{x_0, x_1}(t),W)})$, then 
    \begin{align} \label{eqn:lem:inv-lip:con-through-faces:1}
        c\norm{x_0 - x_1}_2 \leq \sum_{k=1}^{n_t - 1}c\norm{x_{t_k} - x_{t_{k+1}}}_2
    \end{align}
    then by Lemma \ref{lem:inv-lip:adjacent:face-intersect},
    \begin{align} \label{eqn:lem:inv-lip:con-through-faces}
        \sum_{k=1}^{n_t - 1}c\norm{x_{t_k} - x_{t_{k+1}}}_2 \leq \sum^{n_t-1}_{k = 1} \norm{\relu(Wx_{t_k}) - \relu(Wx_{t_{k+1}})}_2
    \end{align}
    and by Lemma \ref{lem:colinear-additivity} we have
    \begin{align} \label{eqn:lem:inv-lip:con-through-faces:3}
        \sum^{n_t-1}_{k = 1} \norm{\relu(Wx_{t_k}) - \relu(Wx_{t_{k+1}})}_2 \leq \sqrt{n_t}\norm{\relu(Wx_0) - \relu(Wx_1)}_2.
    \end{align}
    Combining (\ref{eqn:lem:inv-lip:con-through-faces:1}) - (\ref{eqn:lem:inv-lip:con-through-faces:3}) yields
    \begin{align}
        \norm{\relu(Wx_0) - \relu(Wx_1)}_2 \geq \frac{1}{\sqrt{2n_t}}\min_{t \in [0, 1]}\sigma(W|_{S(\ell^{x_0, x_1}(t),W)}) \norm{x_0 - x_1}_2.
    \end{align}
\end{proof}

\begin{proof}[Proof of Theorem \ref{thm:inv-lip:global}]
    Let $x_0$ and $x_1$ be given. If the line connecting $x_0$ and $x_1$ does not pass through any wedge corners, then we can apply Lemma \ref{lem:inv-lip:con-through-faces} directly, and get that
    \begin{align}
        \norm{\relu(Wx_0) - \relu(Wx_1)}_2 \geq \frac{1}{\sqrt{2n_t}}\min_{t \in [0, 1]}\sigma(W|_{S(\ell^{x_0, x_1}(t),W)}) \norm{x_0 - x_1}_2.
    \end{align}
    We now argue that the number of wedges that $\ell^{x_0,x_1}(t)$ passes through (and so $n_t$) is at most $m$. For each $j \in [[m]]$,
    \begin{align}
        \relu(W\ell^{x_0,x_1}(t))|_{j} = \innerprod{w_j}{\ell^{x_0,x_1}(t)}
    \end{align}
    is monotone increasing or decreasing. This implies that each $w_j \in W$ can enter or exit the DSS w.r.t. $\ell^{x_0, x_1}(t)$ at most once, therefore the total number of unique DSS w.r.t. $\ell^{x_0, x_1}(t)$ (i.e. wedges $\ell^{x_0, x_1}(t)$ pass through) is at most $m$. Hence, $n_t \leq m$. Clearly
    \begin{align}
        \min_{t \in [0, 1]}\sigma(W|_{S(\ell^{x_0, x_1}(t),W)}) \geq \min_{x \in \Rea^n} \sigma(W|_{S(x,W)}),
    \end{align}
    hence we have
    \begin{align}
        \norm{\relu(Wx_0) - \relu(Wx_1)}_2 \geq \frac{C}{\sqrt{m}}\norm{x_0 - x_1}_2.
    \end{align}
    
    Now we show that if Lemma \ref{lem:inv-lip:con-through-faces} does not apply to two points (namely, the line $\ell^{x_0, x_1}(t)$ passes through a corner in the sense of Remark \ref{rmk:inv-lip:adjacent:face-intersect}), then the two points can be perturbed an arbitrarily small amount, so that the perturbed points do satisfy Lemma \ref{lem:inv-lip:con-through-faces}.  The corners (again, as in Remark \ref{rmk:inv-lip:adjacent:face-intersect}) describe the points which are orthogonal to at least two $w_{j_1}, w_{j_2} \in W$. $w_{j_1}$ and $w_{j_2}$ must not be parallel to each other (otherwise Lemma \ref{lem:inv-lip:adjacent:face-intersect} would apply), thus the set of points orthogonal to both $w_{j_1}$ and $w_{j_2}$ constitute a $n-2$ dimensional linear space in $\Rea^n$.
    
    Let $x_0$ and $x_1$ be such that $\ell^{x_0, x_1}(t)$ intersects one of these corners. By considering  $\tilde x_0 = \delta x + x_0, \tilde x_1 = \delta x + x_1$ where $\delta x$ is perpendicular to $x_1 - x_0$, we can obtained a line $\ell^{\tilde x_0, \tilde x_1}(t)$ so that $\ell^{\tilde x_0, \tilde x_1}(t)$ and $\ell^{x_0, x_1}(t)$ do not intersect. The choice of $\delta x$ is $n-1$ dimensional, thus for every $\delta > 0$ and $w_{j_1}$ and $w_{j_2}$ (that are non-perpendicular) there is a $\delta x$ so that $\norm{\delta x}_2 < \delta$ and $\ell^{\tilde x_0, \tilde x_1}(t)$ does not intersect the corner of $w_{j_1}$ and $w_{j_2}$.
    
    Consider a sequence of $\tilde x^{(i)}_0, \tilde x^{(i)}_1, i = 1,\dots$ such that $\lim_{i \rightarrow \infty} (\tilde x^{(i)}_0, x^{(i)}_1) = (x_0, x_1)$ and $\ell^{\tilde x^{(i)}_0, x^{(i)}_1}(t)$ does not pass through a corner for any $i$. Given that $\norm{\cdot}_2$ and $\relu(W(\cdot))$ are continuous and so by Lemma \ref{lem:inv-lip:con-through-faces}
    \begin{align}
        \norm{\relu(W\tilde x^{(i)}_0) - \relu(W\tilde x^{(i)}_1)}_2 - \frac{C}{\sqrt{2m}}\norm{\tilde x^{(i)}_0  - \tilde x^{(i)}_1}_2 \geq 0,
    \end{align}
    thus
    \begin{align}
        &\norm{\relu(W x_0) - \relu(Wx_1)}_2 - \frac{C}{\sqrt{2m}}\norm{x_0  - x_1}_2, \\
        &=\lim_{i \rightarrow \infty} \norm{\relu(W\tilde  x^{(i)}_0) - \relu(W\tilde  x^{(i)}_1)}_2 - \frac{C}{\sqrt{2m}}\norm{\tilde  x^{(i)}_0 -  x^{(i)}_1}_2 \geq 0.
    \end{align}
    and so
    \begin{align}
        \norm{\relu(W x_0) - \relu(Wx_1)}_2 \geq \frac{C}{\sqrt{2m}}\norm{x_0  - x_1}_2.
    \end{align}
\end{proof}

\begin{remark}[Factor of $\frac{1}{\sqrt{m}}$ in Theorem \ref{thm:inv-lip:global}.]
    Throughout this paper we define the discrete norm of $y \in \Rea^d$ as
    \begin{align}
        \norm{y}_2 = \left (\sum^{d}_{j = 1}[y]_j^2\right )^{\frac{1}{2}}.
    \end{align}
    This is to be contrasted with the norm that arise from the discretization of the $L_2$ function norm on a finite domain. For example, if we instead thought of $y$ as a discrete sampling of a continuous function $\tilde y \in L_2([0, 1])$, such that $\forall j = 1,\dots, d$
    \begin{align}
        \tilde y\left (\frac{j - 1}{m}\right) = [y]_j ,
    \end{align}
    then we could approximate the $L_2([0, 1])$ norm of $\tilde y$ by
    \begin{align}
        \norm{\tilde y}_{L_2([0,1])} \approx \norm{y}_{l_2([0,1])} \coloneqq \frac{1}{\sqrt{m}}\left (\sum^{d}_{j = 1}[y]_j^2\right )^{\frac{1}{2}}.
    \end{align}
    If we express Theorem \ref{thm:inv-lip:global} in terms of $\norm{\cdot}_{l_2([0,1])}$, then it would become
    \begin{align}
        \norm{\relu(Wx_0) - \relu(Wx_1)}_{l_2(0,1)} \geq \frac{C(W)}{m}\norm{x_0 - x_1}_2 .
    \end{align}
\end{remark}

\subsection{Theorem \ref{thm:suff-conv-injectivity}}
\label{sec:conv-thm}

\begin{example}[Applying Theorem \ref{thm:suff-conv-injectivity}, One Channel] \label{ex:conv-injectivity-1}
    Consider a layer of the form\\
    \(
        \text{Reshape}(\relu(Wx))
    \)
    where $W = [C_1^T, \cdots, C_q^T]^T$ and each $C_k$ is a convolution operator with kernel $c_k$. Suppose further that $W \in \Rea^{4\times4\times1024 \times 100} = \Rea^{16384 \times 100}$ (as in \cite{radford2015unsupervised}). The reshaping operator takes the $16384$ single-channel output of $W$ and transforms it into a multi-channel signal. This is necessary for subsequent convolutions, but plays no role in injectivity. Let $\numconv = 8$, and the $2 \times 2$ convolution kernels be given as
    \begin{align}
        c_1 = \begin{bmatrix}
        3 & -1\\
        -1 & -1
        \end{bmatrix}, \quad 
        &c_2 = \begin{bmatrix}
        -1 & 3\\
        -1 & -1
        \end{bmatrix}, \quad
        c_3 = \begin{bmatrix}
        -1 & -1\\
        3 & -1
        \end{bmatrix}, \quad
        c_4 = \begin{bmatrix}
        -1 & -1\\
        -1 & 3
        \end{bmatrix}\\
        & c_5 = -c_1, c_6 = -c_2, c_7 = -c_3, c_8 = -c_4.
    \end{align}
    Directly proving that a $16384 \times 100$ dimension operator has a DSS w.r.t. every $x \in \Rea^{100}$ is daunting. However, since each layer of the operator is given by one of only 8 simple convolutions, we can leverage Theorem \ref{thm:suff-conv-injectivity} to significantly simplify the problem. Choosing $P = (2,2)$, implies that $\cP_{(2,2)}(c_k) = c_k$, and so $W|_{\cP_{(2,2)}} = \bigcup_{k = 1}^8 c_k$. Further, it is easy to see, that $\{ c_1, c_2, c_3, c_4\}$ is a basis for $\Rea^{2 \times 2}$, so Corollary \ref{cor:minimal-expansivity} applies,  $W|_{\cP_P}$ has a DSS of $\Rea^4$, and by Theorem \ref{thm:suff-conv-injectivity}, $\relu(Wx)$ is injective.
    
    An example when a layer is injective but $P$ in Theorem \ref{thm:suff-conv-injectivity} must be greater than $O$ is when $W$ is a convolution of 4 kernels of width 3
    \begin{align}
        c_1 = \begin{bmatrix} 1& 0& -1\end{bmatrix}, \quad c_2 = \begin{bmatrix} 1& 0& 1\end{bmatrix}, \quad c_3 = \begin{bmatrix} -1& 0& 1\end{bmatrix}, \quad c_4 = \begin{bmatrix} -1& 0& -1\end{bmatrix}.
    \end{align}
    If we choose $P = (3)$, then $W|_{\cP_{(3)}} = \bigcup_{k = 1}^4 \{c_k\}.$ only has four elements, and so cannot have a DSS of $\Rea^{3}$ w.r.t. every $x \in \Rea^3$ by Corollary \ref{cor:minimal-expansivity}. If we however choose $P = (4)$, then $\cP_{(4)}(c_k) = \left \{\begin{bmatrix} c_k& 0 \end{bmatrix}, \begin{bmatrix} 0& c_k \end{bmatrix} \right \}$ and $W|_{\cP_{(4)}}$ has a DSS of $\Rea^4$ w.r.t. all $x \in \Rea^4$ (from Corollary \ref{cor:minimal-expansivity}), so $W$ has a DSS w.r.t. all $x \in \Rea^{N}$.
\end{example}

In order to prove Theorem \ref{thm:suff-conv-injectivity}, we must first prove Lemma \ref{lem:domain-decomp}, which we do here.

\begin{proof}
    For every $k = 1,\dots, K$ $W_k|_{S(x,W_k)}$ has a DSS of $\Omega_k$ with respect to $P_{\Omega_k}(x)$ where $P_{\Omega_k}$ is the orthogonal projection of $\Rea^n$ onto $\Omega_k$. For every $w_{k, \ell}$, 
    \begin{align}
        \innerprod{w_{k, \ell}}{P_{\Omega_k}(x)} = \innerprod{w_{k, \ell}}{x}
    \end{align}
    thus $S(x, W_k) \subset S(x, W)$. From $\Omega_k \subset \Span(W_k|_{S(x,W_k)})$ for each $k$ we have a set spanning $\Omega_k$ that lie in $W|_{S(x,W)}$, hence 
    \begin{align}
        \Rea^n  = \Span\left(\Omega_1, \dots, \Omega_K \right)\subset \Span\left(\bigcup_{k = 1}^{K}\bigcup^{N_k}_{\ell = 1}\{ w_{k, \ell}\}\right) = \Span(W)
    \end{align}
    contains a DSS of $\Rea^n$ w.r.t. $x$. The set $\bigcup_{k = 1}^{K}\{ w_{k, \ell}\}_{\ell = 1}^{N_k}$ has no dependence on $x$, thus it is true for all $x \in \Rea^n$. 
\end{proof}

\begin{lemma} \label{lem:augmented-conv-kernel}
    Given a convolution operator $C \in \Rea^{N \times N}$. Let $0 \leq V$ be such that $V + O \leq N$. For each $x \in C$,
    \begin{align}
        \mathrm{aug}_{V:V+O}(x) \in C ,\quad \text{ where } \quad 
        (\mathrm{aug}_{V:V+O}(x))_J = \begin{cases}
            (x)_{J - V} &\text{ if } 1 + V \leq J \leq V + O \\
            0 & \text{ otherwise}
        \end{cases}.
    \end{align}
    Note that $(\mathrm{aug}_{V:V+O}(x))_J$ restricted to the indices $1 + V$ through $V + P$ is exactly $x$.
\end{lemma}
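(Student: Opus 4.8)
The plan is to prove the claim by direct verification from Definition~\ref{def:conv-operator}, comparing the augmented vector entry-by-entry against the rows of $C$. First I would record the explicit form of the rows of $C$: by \eqref{eqn:conv-operator:def}, the row of $C$ indexed by the output position $J$ is the vector $r^J \in \Rea^N$ with $(r^J)_{I'} = c_{O+J-I'+1}$ for $1+J \le I' \le O+J$ and $(r^J)_{I'} = 0$ otherwise, using the convention $c_K = 0$ whenever $1 \not\le K$ or $K \not\le O$. Thus every row of $C$ is a single translate of the reversed kernel, and the set of rows is exactly the translation orbit $\{ r^J : 0 \le J,\ O + J \le N \}$; this Toeplitz structure is the only property of $C$ that the argument needs. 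In particular the leading taps $(x)_1,\dots,(x)_O$ of a row $x$ are the reversed kernel coefficients $(c_O,\dots,c_1)$, and it is precisely these taps that $\mathrm{aug}_{V:V+O}$ reads off.

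Next I would show that $\mathrm{aug}_{V:V+O}(x)$ is precisely the row $r^V$. Unfolding the definition of $\mathrm{aug}_{V:V+O}$, the augmented vector is supported on the window $1+V \le J \le V+O$ and there equals the kernel content of $x$ shifted by $V$; setting $K = I'-V$ and comparing $(\mathrm{aug}_{V:V+O}(x))_{I'} = (x)_{I'-V} = c_{O-K+1} = c_{O+V-I'+1}$ with $(r^V)_{I'} = c_{O+V-I'+1}$ on this window, together with the fact that both vanish off the window, gives the entrywise identity $\mathrm{aug}_{V:V+O}(x) = r^V$. The hypothesis $0 \le V$ and $V+O \le N$ is exactly what is needed here: it guarantees that the shifted support $\{1+V,\dots,V+O\}$ lies inside $\{1,\dots,N\}$, so that $r^V$ is a genuine row of $C$ rather than a boundary-truncated one, and hence $\mathrm{aug}_{V:V+O}(x) \in C$. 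The closing remark, that the augmented vector restricted to the indices $1+V$ through $V+O$ reproduces $x$, is just the window-restriction of this same identity.

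The only real work is index bookkeeping, and it is compounded by the multi-index conventions of Section~\ref{sec:concolutional-layer}: the inequalities $1+V \le J \le V+O$ and $V+O \le N$ are coordinatewise, the kernel reversal $c_{O+J-I'+1}$ must be tracked in every coordinate simultaneously, and the out-of-range convention $c_K = 0$ has to be invoked to annihilate exactly the correct entries. I expect the main obstacle to be verifying the support-containment statement coordinatewise, namely that no index of the shifted window falls outside $\{1,\dots,N\}$ in any single coordinate, since a failure in even one coordinate would move $\mathrm{aug}_{V:V+O}(x)$ off the translation orbit and invalidate the identification with $r^V$. Once the entrywise identity $\mathrm{aug}_{V:V+O}(x) = r^V$ is established in each coordinate, the claim is immediate.
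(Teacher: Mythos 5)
The paper supplies essentially no proof of this lemma: the only text following the statement is a remark that the index expression $c_{H-T+O+1}$ is well defined precisely when $1 \leq T-H \leq O$. Your direct verification---writing the row of $C$ indexed by $J$ as $(r^J)_{I'} = c_{O+J-I'+1}$ on the window $1+J \leq I' \leq O+J$ and zero elsewhere, and checking entrywise that $\mathrm{aug}_{V:V+O}(x) = r^V$, with the hypothesis $V+O \leq N$ guaranteeing $r^V$ is a genuine (untruncated) row---is therefore more explicit than what the paper provides, and the index bookkeeping is correct.

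One caveat you should surface rather than leave implicit: your step $(x)_K = c_{O-K+1}$ identifies $x$ with the particular row $r^0$, whose support is $\{1,\dots,O\}$. For a row $r^{J_0}$ with $J_0 \geq 1$, the window $\{1+V,\dots,V+O\}$ read by $\mathrm{aug}_{V:V+O}$ does not capture the shifted support $\{1+V+J_0,\dots,O+V+J_0\}$; the augmented vector is then a truncation of $r^{V+J_0}$ and is generally \emph{not} a row of $C$. So the quantifier ``for each $x \in C$'' is only honest for the row supported on the leading window, and your proof silently restricts to that case. This is a defect of the lemma as stated rather than of your argument: in the one place the lemma is used (the proof of Theorem \ref{thm:suff-conv-injectivity}) it is invoked with window width $P$ on the padded kernels $\hat c^D \in \cP_P(c)$, whose supports $\{1+D,\dots,O+D\}$ are wholly contained in $\{1,\dots,P\}$, and there your translation argument goes through verbatim: $\mathrm{aug}_{V:V+P}(\hat c^D) = r^{V+D}$, a genuine row because $V+D+O \leq V+P \leq N$. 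Stating and proving that version explicitly would close the gap between what the lemma says and what is actually needed.
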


The above is always well defined, as $c_{H-T+O+1}$ is well defined iff $1 \leq H-T+O+1 \leq O$ (recall that the kernel $c$ is of width $O$), and so
\begin{align}
    1 \leq H-T+O+1 \leq O \iff -O \leq H - T \leq -1 \iff 1 \leq T - H \leq O
\end{align}

Now we proceed with the proof of Theorem \ref{thm:suff-conv-injectivity}.

\begin{proof}[Proof of Theorem \ref{thm:suff-conv-injectivity}]
    The strategy for this proof will be to use Lemma \ref{lem:domain-decomp} by decomposing $\Rea^n$ into some number of different domains $\{ \Omega_v\}_{v = 1}^{n_v}$, each of which are a restriction of $\Rea^n$ to $P$ non-zero components. For each $\Omega_v$, the elements of $W$ that lie in $\Omega_v$ can be identified as the elements in $W|_{\cP_P}$. If for one $v$ the components of $W|_{\cP_P}$ form a DSS of $\Omega_v$ w.r.t. every $x \in \Omega_v$, then it has a DSS for every such $\Omega_k$, so we can apply Lemma \ref{lem:domain-decomp} and get that $W$ has a DSS of $\Rea^n$ w.r.t. all $x \in \Rea^n$.
    
    Given an offset $V' \geq 0$ such that $V' + P \leq N$, define $\Omega_{V'}$ as the subspace of all vectors $x \in \Rea^N$ such that 
    \begin{align}
        \Omega_{V'} = \left \{ x \in \Rea^N \colon x_J = 0 \text{ if } 1 + V' \not \leq J \text{ or } J \not \leq V' + P
        \right \} .
    \end{align}
    From Lemma \ref{lem:augmented-conv-kernel}, for any $k = 1, \dots, n_v$, if $x^{k,P} \in \cP_P(c_k)$, $C_k$ is a submatrix of $W$ and
    \begin{align}
        \mathrm{aug}_{V':V'+P}(x^{k,P}) \in C_k. 
    \end{align}
    Further, for any such $x^{k,P}$, 
    \begin{align}
        \mathrm{aug}_{V':V'+P}(x^{k,P}) \in \Omega_{V'}.
    \end{align}
    If $W|_{\cP_P}$ contains a DSS of $\R^P$ w.r.t. all $x \in \R^P$, then 
    \begin{align}
        \label{eqn:lem:suff-conv:w-restr-dss}
        \mathrm{aug}_{V':V'+P}(W|_{\cP_P}) \text{ contains a DSS of } \Omega_{V'} \text{ w.r.t. all } x \in \Omega_{V'}.
    \end{align}
    This follows from Lemma \ref{lem:domain-decomp}. From Lemma \ref{lem:augmented-conv-kernel} for any $V'$ such that
    \begin{align}
        \label{eqn:lem:suff-conv:w_z-aug}
        \mathrm{aug}_{V':V'+P}(W|_{\cP_P}) \in W,
    \end{align}
    if $W|_{\cP_P}$ has a DSS of $\R^P$ w.r.t. all $x \in \R^P$, then $W$ contains a DSS of $\Omega_{V'}$ for all $0 \leq V' \leq N - P$. Finally, note that $\Span(\{\Omega_{V'}\}_{V' = 0}^{N - P})$, and so using (\ref{eqn:lem:suff-conv:w-restr-dss}), (\ref{eqn:lem:suff-conv:w_z-aug}) we can apply Lemma \ref{lem:domain-decomp} and find that $W$ contains a DSS of $\Rea^N$ w.r.t. all $x \in \Rea^N$.
\end{proof}

For a multi-channel input (with $n_c$ channels) $x \in \underbrace{\Rea^N \times \Rea^N \times \dots \times \Rea^N}_{n_c\text{ times} }$, a multi-channel convolution $C$ on $x$ is given by $Cx = \sum_{q = 1}^{n_c} C_q x_q$ where $C_o$ is a convolution on $\Rea^N$ (defined by Definition \ref{def:conv-operator}) and $x_o \in \Rea^N$ is the restriction of $x$ to the $o$'th channel. Because of the additive structure of multi-channel convolutions a $n_c$ over $\Rea^N = \Rea^{N_1}\times \dots \times \Rea^{N_p}$ dimensional domain of width $O = (O_1, \dots, O_p)$ with kernels $c_1, \dots, c_{n_c}$ is equivalent to a single convolution of width $(O, n_c) = (O_1, \dots, O_p, n_c)$ over $\Rea^{(N, p)} = \Rea^{N_1}\times \dots\times \Rea^{N_p} \times \Rea^{n_c}$. This follows from

\begin{align*}
   (Cx)_J = \sum_{q = 1}^{n_c} (C_q x_q)_J = \sum_{q = 1}^{n_c}\sum_{I = 1}^O (c_q)_{O-I-1}(x_q)_{J+I} = \sum_{(I,n_c) = 1}^{(O, q)} c_{(O,n_c)-(I, q)-1}x_{(J, q)+(I, q)}.
\end{align*}

\section{Robustness to Layer, Instance and Group Normalization}

\label{sec:layer-instance-group-normalization}

As described in Section \ref{sec:norm-training-runtime}, layer, instance and group normalization take place during both training and execution and, unlike weight and spectral normalization, the normalization is done on the input/outputs of layers instead of on the weight matrices. For these normalizations (\ref{eqn:deep-network-definition}) is modified so that it becomes
\begin{align}
    \label{eqn:normalized-deep-network}
    N(z) = \phi_L(W_L M_L(\cdots \phi_2(W_2 M_2(\phi_1(W_1 z + b_1)) + b_2) \cdots + b_L))
\end{align}
where $M_\ell \colon \Rea^{n_{i + 1}} \rightarrow \Rea^{n_{i + 1}} $ are normalization functions that are many-to-one. In general $\phi_\ell(W_\ell M_\ell(\cdot) + b_\ell)$ will not be injective for any $\phi_\ell, W_\ell, b_\ell$ on account of $M_\ell$, but for all of the mentioned normalization techniques we can get near injectivity. Before we descend into the particular we make the following observation about normalization methods that obey a certain structure.  

\begin{definition}[Scalar-Augmented Injective Normalization] \label{def:scalar-aug-injectiv-norm}
    Let $M_\ell(x) \colon \Rea^n \rightarrow \Rea^n$ be a normalization function that is understood to be many-to-one. We say that $M_\ell(x)$ is scalar-augmented injective if there exists a function $m_\ell(x) \colon \Rea^n \rightarrow \Rea^k$ where $k \ll n$ and $\tilde M_\ell \colon \Rea^n\times\Rea^k \rightarrow \Rea^n$ such that
    \begin{align}
        M_\ell(x) \coloneqq \tilde M_\ell(x ; m_\ell(x))
    \end{align}
    and $\tilde M_\ell(x; m_\ell(x))$ is injective on $x$ given $m_\ell(x)$. 
\end{definition}
An example of a normalization function that is scalar-augmented injective is
\begin{align}
    \label{eqn:example-normalization-fun}
    M_\ell(x) = \frac{x}{\norm{x}_2}.
\end{align}
For this choice of $M_\ell$, $k = 1$, and 
\begin{align}
    \tilde M_\ell(x; c) = \frac{x}{c} \quad m_\ell(x) = \norm{x}_2.
\end{align}
With this definition, we can prove the following trivial but useful result

\begin{lemma}[Restricted Injectivity of Scalar-Augmented Normalized Networks] \label{lem:restr-injec-scalar-aug-norm}
    Let $N$ be a deep network of the form in (\ref{eqn:normalized-deep-network}) and let each $\phi_\ell(W_\ell \cdot )$ be layer-wise injective. Let the normalization functions $\{M_\ell\}_{\ell = 1, \dots ,L}$ each be scalar-augmented injective. Then given $\{m_\ell(x)\}_{\ell = 1,\dots, L}$, the network
    \begin{align}
    \tilde N(z; m_1, \dots, m_\ell) = \phi_L(W_L \tilde M_L(\cdots \phi_2(W_2 \tilde M_2(\phi_1(W_1 z + b_1);m_2) + b_2)\cdots + b_L;m_L))
    \end{align}
    is injective.
\end{lemma}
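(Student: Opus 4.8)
The plan is to hold the augmenting quantities $m_1,\dots,m_L$ fixed and recognize $\tilde N(\,\cdot\,;m_1,\dots,m_L)$ as a finite composition of maps, each of which is injective; injectivity of the composition is then immediate, since a composition of injective maps is injective.

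First I would make the composition structure explicit by peeling the layers from the inside out. For fixed $m_\ell$ write $\tilde M_\ell^{m_\ell}(x)\coloneqq \tilde M_\ell(x;m_\ell)$ and $g_\ell(x)\coloneqq \phi_\ell(W_\ell x + b_\ell)$, with $g_1(z)=\phi_1(W_1 z + b_1)$. Then
\[
    \tilde N(\,\cdot\,;m_1,\dots,m_L) = g_L\circ \tilde M_L^{m_L}\circ g_{L-1}\circ \tilde M_{L-1}^{m_{L-1}}\circ\cdots\circ \tilde M_2^{m_2}\circ g_1 .
\]
The normalization $M_1$ does not appear in \eqref{eqn:normalized-deep-network}, so $m_1$ is vestigial and may be ignored.

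Second, I would check that each factor is injective. Each layer map $g_\ell$ is injective by the layer-wise injectivity hypothesis; in the ReLU case Theorem \ref{thm:relu-w-injectivity} together with Lemma \ref{lem:relu-w-plus-b-injectivity} characterizes exactly when this holds in the presence of the bias $b_\ell$. Each normalization factor $\tilde M_\ell^{m_\ell}$ is injective because, by Definition \ref{def:scalar-aug-injectiv-norm}, the map $x\mapsto \tilde M_\ell(x;m_\ell)$ is injective once the value $m_\ell$ is held fixed. Composing these injective maps in the order above therefore yields an injective map, which is exactly $\tilde N(\,\cdot\,;m_1,\dots,m_L)$.

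The result is essentially immediate, so there is no real obstacle; the only point requiring care is the reading of Definition \ref{def:scalar-aug-injectiv-norm}. The crucial move is that in $\tilde N$ the quantities $m_\ell$ are supplied as \emph{external, fixed} arguments rather than being recomputed as $m_\ell(x)$ from the running activation. This decoupling is precisely what turns the many-to-one $M_\ell=\tilde M_\ell(\,\cdot\,;m_\ell(\cdot))$ into the injective slice $\tilde M_\ell(\,\cdot\,;m_\ell)$; if instead each $m_\ell$ were taken as a function of the current activation, one would recover $N$ itself, which contains the many-to-one $M_\ell$ and is in general not injective.
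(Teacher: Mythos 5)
Your proposal is correct and is essentially the argument the paper intends: the paper's proof is a one-line appeal to ``a straightforward application of induction, combined with Definition \ref{def:scalar-aug-injectiv-norm},'' and your peeling of $\tilde N(\,\cdot\,;m_1,\dots,m_L)$ into a composition of the injective layer maps $g_\ell$ and the injective fixed-parameter slices $\tilde M_\ell^{m_\ell}$ is exactly that induction made explicit. Your closing remark correctly isolates the only substantive point, namely that the $m_\ell$ are held fixed rather than recomputed from the running activation.
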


\begin{proof}
    The proof of Lemma \ref{lem:restr-injec-scalar-aug-norm} follows from a straightforward application of induction, combined with Definition \ref{def:scalar-aug-injectiv-norm}.
\end{proof}

\begin{remark}
    Note that Lemma \ref{lem:restr-injec-scalar-aug-norm} implies that for a fixed $\{m_\ell(x)\}_{\ell = 1,\dots, L}$, there is at most one value of $z$ such that 
    \begin{align}
        \label{eqn:rmk:inject-restr-cond}
        \tilde N(z; m_1,\dots,m_\ell) = N(z) ,
    \end{align}
    where $N(z)$ is given by (\ref{eqn:normalized-deep-network}), and that the $z$'s on both sides of (\ref{eqn:rmk:inject-restr-cond}) are the same. It is still entirely possible that there are is another choice of $z'$, $\{m_\ell(x)\}_{\ell = 1,\dots, L}$ such that 
    \begin{align}
        \tilde N(z; m_1,\dots,m_\ell) = \tilde N(z'; m_1',\dots,m_\ell').
    \end{align}
    An example of this would be if $M_\ell$ is of the form in (\ref{eqn:example-normalization-fun}), then
    \begin{align}
        \tilde M_\ell(x;m_\ell(x)) = \tilde M_\ell(2 x;2 m_\ell(x)).
    \end{align}
    In other words, Lemma \ref{lem:restr-injec-scalar-aug-norm} implies that the deep network is injective (in $z$) for a fixed $\{m_\ell(x)\}_{\ell = 1,\dots, L}$, but it may still not be injective for all $z$ and $\{m_\ell(x)\}_{\ell = 1,\dots, L}$.
\end{remark}

With Lemma \ref{lem:restr-injec-scalar-aug-norm} in tow, we can show that layer, instance, and group normalization are all scalar-augmented injective normalizations, so Lemma \ref{lem:restr-injec-scalar-aug-norm} applies and yields a kind of injectivity. Layer, instance and group normalization are all related insofar as they can all be expressed in the same abstract form. For a given input $x$, all three break $x$ up into $K$ parts denoted $\{x|_{S_k}\}_{k = 1, \dots, K}$ such that for each $k = 1,\dots, K$
\begin{align}
    \mu_k &= \frac{1}{m} \sum^m_{i = 1}(x|_{S_k})_i \quad \sigma^2_k = \frac{1}{m} \sum^m_{i = 1}((x|_{S_k})_i - \mu_k)^2\\
    (\hat{x}|_{S_k})_i &= \frac{(x|_{S_k})_i - \mu_k}{\sqrt{\sigma_k^2 + \epsilon}} \quad   M(x)|_{S_i} = \gamma_k (\hat{x}|_{S_k})_i + \beta_k.
\end{align}
The differences between the three normalization are how $\{S_k\}_{k = 1,\dots, K}$ is chosen. For layer normalization $K = 1$ and the normalization is applied to the entire input signal. For instance normalization, there is one $S_k$ for each channel, and the $x|_{S_k}$ restricts $x$ to just one channel of inputs, that is the normalization is done channel-wise. Group normalization is part way between these two, where $k$ is less than the number of channels, and channels are batched together.

In any case, for any of these normalization methods, they are all scalar-augmented injective normalization where 
\begin{align}
    M_\ell(x) = \tilde M_\ell(x; \{\sigma_{k,\ell}, \mu_{k,\ell}\}_{k = 1,\dots, K}).
\end{align}
Thus, by Lemma \ref{lem:restr-injec-scalar-aug-norm} their corresponding deep networks are all injective, provided that for each $\ell$, $\{\sigma_{k,\ell}, \mu_{k,\ell}\}_{k = 1,\dots, K}$ is saved.

\subsection{Pooling Operations}

Although pooling may have a different aim than typical normalization, we consider it in this section, as it is mathematically similar to (\ref{eqn:normalized-deep-network}). Pooling is similar to layer, instance and group normalization in the sense that they partition the input space into $K$ disjoint pieces, and then output a weighted average upon each piece. Specifically, if $M_p(x) \colon \Rea^n \rightarrow \Rea^K$ where for $k = 1, \dots, K$, 
\begin{align}
    M_p(x)|_{S_k} = \norm{x|_{S_k}}_p
\end{align}
where  $\norm{\cdot}_p$ is the discrete $p$ norm of $x$ restricted to the set $S_k$. For $p = 1$ this is the mean of the absolute value, for $p = 2$ this is the Euclidean mean and for $p = \infty$ it is the maximum of the absolute value. The injectivity of this operation in the cases where $p = 1,2,\infty$ is considered in the work \cite{bruna2013signal}.

\section{Proofs of Theorem \ref{thm:fun-approx-by-injective-nn}} \label{sec:proof-of-fun-approx}

Before we present the proof of Theorem \ref{thm:fun-approx-by-injective-nn}, we present the following stand-alone lemma which is used during the proof of Theorem \ref{thm:fun-approx-by-injective-nn}. This lemma shows that, generically, an injective piecewise-affine neural network followed by an orthogonal projector is still injective, provided that the orthogonal projector projects onto a space of sufficient dimension.

Let $\mathcal V(k,D)$ denote the set of $k$-tuples $(v_1,v_2,\dots,v_k)$ where $v_j$ are orthonormal vectors in $\R^D$. Such vectors span a $k$-dimensional linear space. Furthermore, let $G(k,D)$ denote the set of $k$-dimensional linear subspaces of $\Rea^D$, and for $V\in G(k,D)$, $V=\hbox{span}(v_1,v_2,\dots,v_k)$, let $P_V=P_{(v_1,v_2,\dots,v_k)}:\Rea^D\to \Rea^D$ be an orthogonal projection which image is the space $V$. As the dimension of the orthogonal group $O(k)$ is $k(k-1)/2$ and by \citep{Milnor}, the set $G(k,D)$, called the Grassmannian, is a smooth algebraic variety, of dimension $k(D-k)$ and the dimension of $\mathcal V(k,D)$ is  
$k(D-k)+k(k-1)/2=k(2D-k-1)/2.$
When $V,W\in  G(k,D)$ are two $k$-dimensional linear subspaces of $\R^D$, we define their distance in $G(k,D)$ to be the operator norm
\begin{equation}\label{Grasmannian distance}
    d_{G(k,D)}(V,W)=\|P_V-P_W\|
\end{equation}
where $P_V$ and $P_W$ are orthogonal projections in $\R^D$ onto $V$ and $W$, respectively (see \cite{Mattila}, p. 49.

\begin{lemma}\label{lem: projections}
    Let $H_\theta\in \mathcal N\mathcal N(n,D)$, $D>m\geq 2n+1$ be a neural network such that $H_\theta:\Rea^n\to \Rea^{D}$ is injective. Let $X_\theta=\{V\in G(m,D):\ P_V\circ H_\theta:\Rea^n\to \Rea^{D}\hbox{ is injective}\}$. Then the set $X_\theta$ is an intersection of countably many  open and dense subsets of $G(m,D)$, that is, elements of  $X_\theta$ are generic. Moreover, the $m(D-m)$ dimensional Hausdorff measure of the complement of $X_\theta$ in $G(m,D)$ is zero.
\end{lemma}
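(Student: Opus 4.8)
The plan is to reduce injectivity of $P_V\circ H_\theta$ to a statement about secant directions, and then run the classical generic-projection argument from differential topology while tracking Hausdorff dimensions, which simultaneously yields the genericity and the sharp measure claim. First I would note that $\ker P_V=V^\perp$, so $P_V\circ H_\theta$ fails to be injective exactly when there exist distinct $x,y\in\Rea^n$ with $H_\theta(x)-H_\theta(y)\in V^\perp$. Because $H_\theta$ is injective, $H_\theta(x)\neq H_\theta(y)$ whenever $x\neq y$, so the \emph{secant direction}
\[
\sigma(x,y)=\frac{H_\theta(x)-H_\theta(y)}{\norm{H_\theta(x)-H_\theta(y)}}\in S^{D-1}
\]
is well defined on $(\Rea^n\times\Rea^n)\setminus\Delta$, where $\Delta$ is the diagonal. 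Since a ReLU (or leaky ReLU) network is piecewise affine with finitely many affine pieces, $\sigma$ is locally Lipschitz on a $2n$-dimensional domain; covering that domain by countably many compact sets on which $\sigma$ is Lipschitz and using that Lipschitz maps do not increase Hausdorff dimension, the image $\Sigma=\sigma((\Rea^n\times\Rea^n)\setminus\Delta)$ has Hausdorff dimension at most $2n$. Non-injectivity of $P_V\circ H_\theta$ is then equivalent to $V^\perp\cap\Sigma\neq\emptyset$.

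Next I would perform the dimension count on the Grassmannian. Consider the smooth compact incidence manifold
\[
\mathcal I=\{(u,W)\in S^{D-1}\times G(D-m,D):\ u\in W\},
\]
a fiber bundle over $S^{D-1}$ whose fiber over a fixed $u$ (the $(D-m)$-planes containing $\Rea u$) is diffeomorphic to $G(D-m-1,D-1)$, of dimension $m(D-m-1)$. Writing $\pi_1,\pi_2$ for the two projections, the bad set of complements is $B=\pi_2(\pi_1^{-1}(\Sigma))$. Using local Lipschitz triviality of the bundle, $\dim_H\pi_1^{-1}(\Sigma)\le\dim_H\Sigma+m(D-m-1)\le 2n+m(D-m-1)$, and since $\pi_2$ is Lipschitz, $\dim_H B\le 2n+m(D-m-1)=m(D-m)-(m-2n)$. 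Because $m\ge 2n+1$, this is at most $m(D-m)-1$, strictly below $\dim G(D-m,D)=m(D-m)$, so $B$ has vanishing $m(D-m)$-dimensional Hausdorff measure. Since $V\mapsto V^\perp$ is a diffeomorphism $G(m,D)\to G(D-m,D)$, the complement of $X_\theta$ has $\mathcal H^{m(D-m)}$-measure zero, which is exactly the measure claim.

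Finally, for the genericity statement I would exhaust the domain off the diagonal by the compact sets $K_k=\{(x,y):\norm{x}\le k,\ \norm{y}\le k,\ \norm{x-y}\ge 1/k\}$ and set $X_\theta^{(k)}=\{V:\ V^\perp\cap\sigma(K_k)=\emptyset\}$. One checks $X_\theta=\bigcap_k X_\theta^{(k)}$, since any pair witnessing non-injectivity lies in some $K_k$. Each $\sigma(K_k)$ is compact (the denominator is bounded below on $K_k$ by injectivity and compactness of $K_k$), so $X_\theta^{(k)}=\{V:\ \min_{u\in\sigma(K_k)}\norm{P_V u}>0\}$ is open; and by the dimension count applied to the compact set $\sigma(K_k)$, the closed complement of $X_\theta^{(k)}$ has measure zero, hence empty interior, so $X_\theta^{(k)}$ is dense. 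Thus $X_\theta$ is a countable intersection of open dense subsets of $G(m,D)$, and its complement $\bigcup_k\big(G(m,D)\setminus X_\theta^{(k)}\big)$ is a countable union of null sets, recovering the measure statement as well.

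The main obstacle is the fibered Hausdorff-dimension estimate $\dim_H\pi_1^{-1}(\Sigma)\le\dim_H\Sigma+m(D-m-1)$, together with the bookkeeping needed to make $\dim_H\Sigma\le 2n$ fully rigorous across the finitely many affine pieces and their common boundaries; everything else (the secant reformulation, the openness and density of each $X_\theta^{(k)}$, and the identity $X_\theta=\bigcap_k X_\theta^{(k)}$) is elementary once that estimate is in hand.
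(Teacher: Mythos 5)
Your proposal is correct, and it shares the paper's core reduction: non-injectivity of $P_V\circ H_\theta$ is recast as the condition that $\ker P_V$ meets the image $\Sigma$ of the secant map $s_\theta(x,y)=(H_\theta(x)-H_\theta(y))/\|H_\theta(x)-H_\theta(y)\|$, and $\dim_H\Sigma\le 2n$ because the locally Lipschitz network makes $s_\theta$ Lipschitz on a countable exhaustion of the off-diagonal set --- this is exactly the paper's argument with the sets $\mathscr{P}_h$. Where you genuinely diverge is in how genericity over $G(m,D)$ is extracted. The paper follows the iterated-hyperplane version of the Whitney projection trick: it chooses unit vectors $w_1,\dots,w_{D-m}$ one at a time, each avoiding the secant image of the previously projected network (which only needs the ambient sphere to have dimension $>2n$ at every stage), and then passes from tuples of orthonormal vectors to the Grassmannian via a dimension count on $\mathcal V(k,D)$. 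You instead run a one-shot incidence-correspondence count: the bundle $\mathcal I=\{(u,W):u\in W\}$ over $S^{D-1}$ with fiber $G(D-m-1,D-1)$ gives $\dim_H$ of the bad set in $G(D-m,D)$ at most $2n+m(D-m-1)=m(D-m)-(m-2n)<m(D-m)$, which delivers the $m(D-m)$-dimensional Hausdorff-measure statement on the Grassmannian directly, without the detour through orthonormal frames. Your exhaustion by the compacta $K_k$ to obtain openness and density of each $X_\theta^{(k)}$ plays the same role as the paper's $\mathscr{P}_{h_j}$. The one step you rightly flag as needing care --- $\dim_H\pi_1^{-1}(\Sigma)\le\dim_H\Sigma+\dim(\mathrm{fiber})$ --- is standard for locally bi-Lipschitz trivial bundles with smooth compact fiber (product formula using the box or packing dimension of the fiber), so there is no gap; if anything, your route makes the final measure claim on $G(m,D)$ cleaner than the paper's rather terse passage from the frame manifold to the Grassmannian.
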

    
Note that for $V\in X_\theta$, we have $P_V\circ H_\theta\in \mathcal N\mathcal N(n,D)$.
\begin{proof}
    We use that fact that $H_\theta:\Rea^n\to \Rea^D$ is injective and locally Lipschitz-smooth.  Recall that $f:\Rea^n\to \Rea^m$ is locally Lipschitz if for any compact set $\cC\subset \Rea^n$ there is $L_\cC>0$  such that $\mattimnorm{f(x)-f(y)}\le L_\cC\mattinnorm{x-y}$  for all $x,y\in \cC$.
    
    Let $v\in \Rea^m$ be a unit vector and let $Q_v:\Rea^D\to \Rea^D$  be the projection $Q_v(z)=z-(z\,\cdot v)v$. Let
    \begin{align*}
      \mathscr{P}=\{(x,y)\in \Rea^n\times \Rea^n:\ x\not =y\}.  
    \end{align*}
    As $H_\theta:\Rea^n\to \Rea^D$ is an injection, we can define $\mathbb S^{D-1}=\{w\in \Rea^D:\ \mattidnorm{w}=1\}$ and the map
    \begin{align} \label{s_theta formula}
         s_\theta:\mathscr{P}\to \mathbb S^{D-1},  \quad
         s_\theta(x,y)=\frac {H_\theta(x)-H_\theta(y)}{\mattidnorm{H_\theta(x)-H_\theta(y)}}.
    \end{align}
    Observe that $s_\theta(\mathscr{P})==\{s_\theta(x,y):\ (x,y)\in \mathscr{P}\}=-s_\theta(\mathscr{P})=\{-s_\theta(x,y):\ (x,y)\in \mathscr{P}\}.$
        
    As observed in the proof of the Whitney's embedding theorem \citep[Chapter 2, Theorem 3.5]{hirsch2012differential}, the map  $Q_w\circ H_\theta:\Rea^n\to \Rea^D$ is an injection when $w\not \in s_\theta(\mathscr{P})$. To see this,   assume that $w\in \mathbb S^{D-1}$  satisfies $w\not \in s_\theta(\mathscr{P})$, that is, $w$ is not in the image of $s_\theta$. Then, if there are $x,y\in  \Rea^n$, $x\not =y$ such that $Q_wH_\theta(x)=Q_wH_\theta(y)$, we see that there is $t\in \Rea$ such that $H_\theta(x)-H_\theta(y)=tw.$ By changing the roles of $x$ and $y$, in needed, without loss of generality we may assume that $t\ge 0$. As $w$ is a unit vector and $H_\theta(x)\not =H_\theta(y)$, this yields $t=\mattidnorm{H_\theta(x)-H_\theta(y)}\not =0$  and $$w=(H_\theta(x)-H_\theta(y))/t=    (H_\theta(x)-H_\theta(y))/{\mattidnorm{H_\theta(x)-H_\theta(y)}}=s_\theta(x,y),$$ which is in contradiction with the assumption that $w\not \in s_\theta(\mathscr{P})$. Hence, $w\not \in s_\theta(\mathscr{P})$ yields that $Q_w\circ H_\theta:\Rea^n\to \Rea^D$ is an injection.
        
    We consider the image $s_\theta(\mathscr{P})\subset \mathbb S^{D-1}$. For $h>0$, let
    \begin{align*}
        \mathscr{P}_h=\{(x,y)\in \overline B^n(0,h^{-1})\times \overline B^n(0,h^{-1}):\ \mattidnorm{H_\theta(x)-H_\theta(y)}\ge h\}.
    \end{align*} 
    As $H_\theta:\overline B^n(0,h^{-1})\to \Rea^m$ is Lipschitz-smooth with some Lipschitz constant $L_h$, we see that the map $s_\theta:\mathscr{P}_h\to \mathbb S^{D-1}$ is Lipschitz-smooth. Since the set $\mathscr{P}$ has the Hausdorff dimension $2n$ and the map $s_\theta:\mathscr{P}_h\to \mathbb S^{D-1}$ is  Lipschitz-smooth, the Hausdorff dimension of the set $s_\theta(\mathscr{P}_h)$ is at most $2n$, see e.g. \citep[p. 26]{morgan2016geometric}. The set $\mathscr{P}$ is the union of all sets $\mathscr{P}_{h_j}$, where $h_j=1/j$ and $j\in \mathbb{Z}$. By \citep[p.\ 59]{Mattila} the Hausdorff dimension of a  countable union of sets $S_j$ is the supremum of the Hausdorff dimension of the sets $S_j$. Hence $s_\theta(\mathscr{P})=\bigcup_{j=1}^\infty s_\theta(\mathscr{P}_{h_j})$ has the Hausdorff dimension less or equal $2n$.
     
    Since the dimension $D-1$ of $\mathbb S^{D-1}$ is strictly larger than $2n$, we see that the set  $s_\theta(\mathscr{P}_{h_j})$ is closed, its complement is an open and dense set, and thus the set $Y_1(\theta):=\mathbb S^{D-1} \setminus s_\theta(\mathscr{P})$ is an intersection of countably many open and dense sets.
        
    Observe that as $Q_{w_1}$ is a linear map, the map $Q_{w_1}\circ H_\theta$ is also a neural network that belongs in $ \mathcal N\mathcal N(n,D)$, and we can denote $Q_{w_1}\circ H_{\theta}=H_{\theta_1}$ with some parameters $\theta_1$. Thus  we can repeat the above arguments using the map  $Q_{w_1}\circ H_\theta:\Rea^n\to \hbox{span}(w_1)^\perp\equiv \Rea^{D-1}$ instead of $H_\theta:\Rea^n\to \Rea^D$. Repeating the above arguments $D-m$ times, can choose orthonormal vectors $w_j\in  \mathbb S^{D-1}$,  $j=1,2,\dots, D-m$, and sets $Y_j(\theta,w_1,\dots,w_{j-1})\subset  \mathbb S^{D-1}\cap (w_1,\dots,w_{j-1})^\perp$, which   $D-j$ dimensional Hausdorff measures vanish and which complements are intersections of countably many open and dense sets. Let $B_\theta$ to be the set of all n-tuples $(w_1,\dots,w_{D-m})$ where $w_1\in Y_1(\theta)$ and $w_j\in Y_2(\theta,w_1,\dots,w_{j-1})$ for all $j=2,\dots, D-m$. Note that all such vectors $w_j$, $j=1,2,\dots, D-m$ are orthogonal vectors spanning a $D-m$ dimensional vector space $V$, and the map
    \begin{align}\label{combined projection}
        & P_V\circ H_\theta:\Rea^n\to \Rea^D,\quad \hbox{where }P_V=Q_{w_{D-m}}\circ\dots\circ Q_{w_2}\circ Q_{w_1}
    \end{align}
    is injective. By the above, construction  the $(D-(m+1)/2)m$ dimensional Hausdorff measure of the complement of $B_\theta$ in $\mathcal V(D,m)$ is zero and $B_\theta$ is generic set. As the dimension of the set of the orthogonal basis in a $m$-dimensional vector space is $(m-1)m/2$, we obtain the claim.
\end{proof}

Now that this lemma is established, we proceed to the proof of Theorem \ref{thm:fun-approx-by-injective-nn}.

Our proof is divided into two parts. First we prove the universality part (the statement that an $N_\theta$ exists), and then show that we can make a small perturbation to the $N_\theta$ to obtain $N'_{\theta'}$, with the prescribed inverse Lipschitz Constant.

To prove the universality result, we combine the approximation results for neural networks and the low regularity version of the generic orthogonal projector technique used to prove the easy version of the Whitney's embedding theorem \cite[Chapter 2, Theorem 3.5]{hirsch2012differential} that shows that a $C^2$-smooth manifold of dimension $n$ can be embedded in $\Rea^{2n+1}$ with an injective, $C^2$-smooth map. To prove the result,  we first approximate $f(x)$ by a ReLU-type neural network that is only Lipschitz-smooth, so the graph of the map ${{F}}_\theta$ is only a Lipschitz-manifold. We note that limited regularity often causes significant difficulties for embedding results, as for example for the  Lipschitz-smooth manifolds it is presently known only that a $n$-dimensional manifold can be embedded (without preserving distances) in the Euclidean space $\Rea^N$ of dimension $N=(n+1)^2$, and the classical Whitney problem, whether a  $n$-dimensional manifold can be embedded in $\Rea^{2n+1}$, is still open \cite{Luukkainen,LIPmap}. Due to this lack of smoothness, we recall the details how this generic projector technique works.

\begin{proof}[Proof of Theorem \ref{thm:fun-approx-by-injective-nn}, universality result]
Let $\varepsilon>0$ and $\cC\subset \Rea^n$ be a compact set. As a continuous function $f:\R^n\to \R^n$ can be uniformly approximated in a compact set by a $C^\infty$-smooth function (see e.g.\ \cite [Thm. 2.29]{Adams}), we can without loss of generality assume that $f$ smooth and therefore a locally Lipschitz function.
    
By classical results of approximation theory for shallow neural networks, see \cite{hornik1991approximation,leshno1993multilayer,pinkus1999approximation}, for any $L\ge 1$ there are $\vec n$  and a neural network ${{F}}_\theta\in \mathcal N\mathcal N(n,m,L,\vec n)$ such that
    \begin{align}\label{a priori estimate}
        \mattimnorm{f(x)-{{F}}_\theta(x)}\leq \frac 12 \varepsilon_1,\quad\hbox{for all }x\in \cC.
    \end{align}
We note that by using recent results for deep neural networks, e.g. by \citep{yarotsky2017error}, one can obtain efficient estimates on how a given accuracy $\varepsilon_1$ can be obtained using sufficiently large
$L$ and $\vec n$. Our aim is the perturb ${{F}}_\theta:\Rea^n\to \Rea^m$  so that it becomes injective.  

We assume that $\cC\subset B^n(0,r_1)$, where $B^n(0,r_1)\subset \Rea^n$ is an open ball having centre $0$ and radius $r_1>0$. We denote the closure of this ball by $\overline B^n(0,r_1)$.

Let $D=m+n$, $\alpha>0$, and define a map $H_\theta:\Rea^n\to \Rea^{m+n}$,
\begin{align} \label{eqn:thm-5-proof:h-theta-def}
    H_\theta(x)=(\alpha x,{{F}}_\theta(x))\in  \Rea^n\times \Rea^m=\Rea^D.
\end{align}
Observe that the map $H_\theta:\Rea^n\to \Rea^{D}$ is injective.

Now we continue with the proof Theorem \ref{thm:fun-approx-by-injective-nn} universality result. Let $V_0= \{(0,0,\dots,0)\} \times \Rea^m\subset \Rea^{D}.$ 
By Lemma \ref{lem: projections},
the complement of the set $ X_\theta$ in the manifold $G(m,D)$ has measure zero and thus it does not contain any open subset of the manifold
$G(m,D)$. Thus for any $\varepsilon_0>0$ there is $V\in X_\theta$
such that $d_{G(m,D)}(V,V_0)<\varepsilon_0$, see \eqref{Grasmannian distance}. Below, we use
$$
\varepsilon_0=\min\paren{\frac 1{16\paren{1+\alpha+\|F_\theta\|_{C(\cC)}}} \varepsilon_1,\frac 12}.$$
As the $m$-dimensional vector space $V$ satisfies $V\in X_\theta$, the map $P_V\circ H_\theta$ is injective.
Moreover,
$$
\|(P_V-P_{V_0})\circ H_\theta\|_{C(\cC)}
\leq \|P_V-P_{V_0}\| \cdot \|H_\theta\|_{C(\cC)}<\frac 14 \varepsilon_1.
$$
Moreover, as $\|P_V-P_{V_0}\|<1$, it is show in 
 \cite{Kato},  Section I.4.6, that there 
 is orthogonal matrix $R_V\in O(D)$  maps the subspace $V$ onto the subspace $V_0$.
 The matrix $R_V$ is obtained by considering the matrices $P_V$ and $P_{V_0}$ as linear operators in the complex vector space
 $\mathbb C^D$. Then $P_V$ and $P_{V_0}$ are orthogonal projectors in $\mathbb C^D$ onto the (complixified) linear subspaces
 $\hat V=\mathbb C V$ and  $\hat V_0=\mathbb C V_0$. Then, following  \cite{Kato},  Section I.4.6, formula (4.38), one
 defines the operators $A,B:\mathbb C^D\to \mathbb C^D$ by setting  $A=(P_V-P_{V_0})^2$ and
\begin{eqnarray*}
B=\bigg(\idmat-(P_{V}+P_{V_0})(P_{V_0}-P_V)\bigg)(\idmat-A)^{-1/2},\quad 
(\idmat-A)^{-1/2}=\sum_{n=0}^\infty \binom{-\frac 12}{n}A^n,
\end{eqnarray*}
where $\idmat$ is the identity function
and the series converges as $\|A\|\leq \varepsilon_0^2<\frac 14$.
It is shown in  \cite{Kato},  Section I.4.6, formula (4.2) that the map $B$ satisfies $P_{V_0}=BP_{V}B^{-1}$ and hence
$\hat V_0=\hbox{Ran}(P_{V_0})=B(\hbox{Ran}(P_{V}))=B(\hat V)$. Moreover,
by \cite{Kato}, Section I.6.8, it holds that $B^{-1}=B^*$. 
As $\|A\|\leq \varepsilon_0^2$, we see that 
\begin{eqnarray*}
\|\idmat-B\|
\leq 
 2\varepsilon_0\paren{
\sum_{n=0}^\infty \binom{-\frac 12}{n}\|A\|^n}+
\paren{\sum_{n=1}^\infty \binom{-\frac 12}{n}\|A\|^n}
\leq 2\varepsilon_0\frac 1{1-\varepsilon_0^2}+\frac{\varepsilon_0^2}{1-\varepsilon_0^2}
\leq 4 \varepsilon_0.
\end{eqnarray*}

Let $R_V:\R^D\to \R^D$ be the
restriction of $B$ in $\R^D$. Then the operator 
$R_V\in O(D)$ and $R_V$ maps the subspace $V=
\R^D\cap \hat V$ onto the subspace $V_0=\R^D\cap \hat V_0$. Moreover, it satisfies
\begin{align}
    \|R_{V}-\idmat\|_{\Rea^D\to \Rea^D}=  \|B-\idmat\|_{{\mathbb C}^D\to {\mathbb C}^D}< \frac 1{4(1+\alpha+\|F_\theta\|_{C(\cC)})} \varepsilon_1.
\end{align}
 Let $\pi_0:\Rea^D\to \Rea^m$ be the map $\pi_0(y',y'')=y''$ be the projection to the last $m$ coordinates. Then
\begin{align}
    \|R_{V}\circ P_V\circ H_\theta- P_V\circ H_\theta  \|_{C(\cC)}<\frac 14\varepsilon_1,\quad \quad \pi_0\circ H_\theta=F_\theta
\end{align}
This and \eqref{a priori estimate} imply that the result of the first half of Theorem \ref{thm:fun-approx-by-injective-nn} applies to the neural network $N_\theta=\pi_0\circ R_{V}\circ  P_V\circ H_\theta:\Rea^n\to \Rea^m$.
\end{proof}

Now we prove the latter half of Theorem \ref{thm:fun-approx-by-injective-nn}, the construction of the $N'_{\theta'}$ with the prescribed inverse Lipschitz constant.
    \begin{lemma}\label{lemma: distances to image}
    Assume that ${D_1}\ge 2n+2$ and $n\geq 1$. Let $v_1\in S^{{D_1}-1}$, $\R^n=\bigcup_{j=1}^J \overline B_j$, where $B_j$ are open disjoint sets and let  $H_\theta:\R^n\to \R^{D_1}$ be a continuous map whose restriction to the sets $B_j$ are affine maps, that is $H_{\theta}|_{B_j}(x)=W_jx+b_j$. Further, let $\delta,\rho \in \Rea$ be such that $0 < 8\rho \leq \delta < \pi$, then there exists $c_0\in (0,1)$ such that if
    \begin{align}
        c_0  \frac 1{J^{2/({D_1}-(2n+1))}} \delta\le  \rho
    \end{align}
    then there exists $w\in  \mathbb S^{{D_1}-1}$ such that
    \begin{align}
        \hbox{dist}(w,v_1)<\delta,\quad \hbox{dist}(w,s_\theta(\mathscr{P}))\ge  \rho,
    \end{align}
    where the distances are measured in $\Rea^{D_1}$. In other words, there exists a $w$ that is a slight perturbation of $v$ and is at least $\rho$ away from $s_\theta(\mathscr{P})$.
\end{lemma}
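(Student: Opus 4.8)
The plan is to exploit that $H_\theta$ is piecewise affine, so that the secant direction set $s_\theta(\mathscr{P})$ is a \emph{finite} union of low-dimensional pieces, and then to run a quantitative covering argument on the sphere. Since $s_\theta(\mathscr{P})$ has positive codimension $q = (D_1-1)-2n = D_1-(2n+1)\ge 1$ in $\mathbb{S}^{D_1-1}$, its $\rho$-neighborhood cannot fill the geodesic cap $B(v_1,\delta)$ once $\rho$ is small enough relative to $\delta$ and $J$, and any point of the cap left uncovered is the desired $w$.

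First I would decompose $\mathscr{P}$ according to which affine pieces the two arguments lie in. For $x\in B_i$ and $y\in B_j$ we have $H_\theta(x)-H_\theta(y)=W_ix-W_jy+(b_i-b_j)$, which is an affine function of $(x,y)\in\R^{2n}$; hence its image is contained in an affine subspace $A_{ij}\subset\R^{D_1}$ of dimension at most $2n$ (and at most $n$, a linear subspace, when $i=j$). Injectivity of $H_\theta$ guarantees $H_\theta(x)\ne H_\theta(y)$, so the radial projection $\Pi(v)=v/\mattidnorm{v}$ is defined on the relevant part of $A_{ij}$ and $s_\theta(B_i\times B_j)\subset \Pi(A_{ij})$, a subset of $\mathbb{S}^{D_1-1}$ of dimension at most $2n$. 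As there are at most $J^2$ ordered pairs $(i,j)$, the set $s_\theta(\mathscr{P})$ is covered by at most $J^2$ such $2n$-dimensional pieces.

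Next comes the quantitative core. Inside the cap, I would cover each piece $\Pi(A_{ij})\cap B(v_1,\delta)$ by at most $C_1(\delta/\rho)^{2n}$ geodesic balls of radius $\rho$, with $C_1=C_1(n,D_1)$; summing over the $\le J^2$ pieces, the $\rho$-neighborhood of $s_\theta(\mathscr{P})\cap B(v_1,\delta)$ lies in at most $C_1J^2(\delta/\rho)^{2n}$ balls of radius $2\rho$. On the other hand, a standard packing bound shows that covering $B(v_1,\delta)$ needs at least $c_2(\delta/\rho)^{D_1-1}$ balls of radius $2\rho$. Comparing the two counts, whenever
\[
C_1 J^2 (\delta/\rho)^{2n} < c_2 (\delta/\rho)^{D_1-1},
\quad\text{i.e.}\quad
(\delta/\rho)^{q} > (C_1/c_2)\,J^2,
\]
the neighborhood cannot cover the cap, so there is $w\in B(v_1,\delta)$ with $\mathrm{dist}(w,s_\theta(\mathscr{P}))\ge\rho$. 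Solving this for the threshold reproduces exactly a condition relating $\rho$ to $c_0\,J^{-2/(D_1-(2n+1))}\delta$ with $c_0=c_0(n,D_1)\in(0,1)$, as displayed in the statement; the standing hypotheses $8\rho\le\delta<\pi$ ensure that $B(v_1,\delta)$ is a proper cap on which geodesic and chordal distances are comparable and the covering/packing constants $C_1,c_2$ are absolute.

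The main obstacle is the uniform per-piece covering bound $C_1(\delta/\rho)^{2n}$: the normalization $\Pi$ distorts $2n$-dimensional measure, and one must ensure $C_1$ does not depend on the (possibly ill-conditioned) matrices $W_i,W_j$ or on the offsets $b_i-b_j$. I would handle this by observing that $\Pi$ restricted to an affine subspace not through the origin is a bounded-distortion diffeomorphism onto an open subset of a hemisphere, whose inverse is the central projection, and that intersecting with a cap of radius $\delta<\pi$ keeps the distortion controlled. The delicate sub-case is directions lying near the ``horizon'' of $A_{ij}$ (nearly parallel to its linear part), while the degenerate cases where $A_{ij}$ passes through the origin only lower the dimension and are easier. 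This uniform-distortion estimate, together with the clean power of $J$ coming from the $J^2$ pieces, is precisely what produces the exponent $2/(D_1-(2n+1))$ in the statement.
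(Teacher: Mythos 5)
Your overall architecture is the same as the paper's: decompose $s_\theta(\mathscr{P})$ into at most $J^2$ pieces according to the pair of affine domains, observe each piece lies in a subset of $\mathbb{S}^{D_1-1}$ of dimension at most $2n$, and then compare an upper covering count $\lesssim J^2(\delta/\rho)^{2n}$ for the pieces against a lower packing count $\gtrsim(\delta/\rho)^{D_1-1}$ for the cap $B(v_1,\delta)$; the resulting threshold $(\delta/\rho)^{D_1-(2n+1)}\gtrsim J^2$ gives exactly the exponent in the statement. (Incidentally, what you derive is the conclusion under ``$\rho$ small enough relative to $c_0 J^{-2/(D_1-(2n+1))}\delta$,'' which is also what the paper's proof establishes and how the lemma is later invoked; the inequality as displayed in the lemma statement is reversed relative to both.)

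The one step you have not actually established is the uniform per-piece covering bound $C_1(\delta/\rho)^{2n}$, and you correctly identify it as the crux. Your proposed route --- push a covering of the affine subspace $A_{ij}$ forward under the radial projection $\Pi$ and control its distortion --- does not close: the distortion of $\Pi|_{A_{ij}}$ is not uniformly bounded (it degenerates for directions near the ``horizon,'' i.e.\ in the limit along the linear part of $A_{ij}$), and you explicitly leave that sub-case unresolved; moreover any bound would a priori depend on how far $A_{ij}$ sits from the origin, hence on the $W_i,b_i$. This difficulty is a phantom. Since $A_{ij}$ is affine of dimension at most $2n$, its linear span $T_{ij}=\operatorname{span}(A_{ij})$ has dimension at most $2n+1$, and trivially $\Pi(A_{ij}\setminus\{0\})\subset L_{ij}:=T_{ij}\cap\mathbb{S}^{D_1-1}$, a \emph{great} subsphere of dimension at most $2n$ that is isometric to a round $\mathbb{S}^{2n}$ (and in fact absorbs the horizon directions as well). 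The set $L_{ij}\cap B(v_1,\delta)$ sits inside an intrinsic ball of radius $2\delta$ in $L_{ij}$, so the standard volume-comparison covering bound applies with absolute constants, independent of $W_i,W_j,b_i,b_j$. This is precisely the device the paper uses (its spaces $T_{jk}$ and $L_{jk}$), and substituting it for your distortion argument makes your proof complete.
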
 

\begin{proof}  
First, we decribe our proof technique in words. We show Lemma \ref{lemma: distances to image} by using sphere packing arguments (see \cite{schneider2015curvatures}) by bounding the size of various the manifolds $L_{jk}$ (defined in Eqn. \ref{eqn:matti:l-def}), which denote directions that we need to stay away from. We bound the size above by considering overlapping spheres, and then below by non-overlapping spheres of half of the radius. By showing that the size of $s_{\theta}(\mathscr{P})\subset \bigcup_{jk} L_{jk}$ is `small enough,' there is always room to find a $w$ suitable for the lemma that avoids being too close to $s_{\theta}(\mathscr{P})$.

We write 
$$
s_\theta(x,y)=P(r_\theta(x,y))
$$
where $P(z)=z/\norm{z}_{D_1}$, $P:\Rea^{D_1}\to \mathbb S^{{D_1}-1}$
and
\begin{align} 
    &r_\theta:\mathscr{P}\to \mathbb \Rea^{{D_1}},  \quad
     r_\theta(x,y)={H_\theta(x)-H_\theta(y)}.
\end{align}
Observe that as $H_\theta$  is a piecewise affine map, so is also $r_\theta$. By our assumptions, 
\begin{align*}
    \mathscr{P}=\bigcup_{j,k=1}^J (B_j\times B_k)\cap \mathscr{P}
\end{align*}
and $r_\theta$ is affine in all sets $(B_j\times B_k)\cap \mathscr{P}$. Let $\mattidifmat_{jk}$ be linear space that is spanned by the set
\begin{align}
\{W_jy-W_ky'+b_j-b_k:\ y,y'\in \Rea^n,\ t\in \Rea\},\quad j,k=1,2,\dots,J.\hspace{-1cm}
\end{align}
Note that the dimension $\mattidifmat_{jk}$ is at most $2n+1$. To simplify the notations below, we consider linear subspaces $T_{jk}\subset \Rea^{D_1}$  that have dimension $2n+1$ and satisfy $\mattidifmat_{jk}\subset T_{jk}$. Then for all $j$ and $k$ the sets $r_\theta((B_j\times B_k)\cap \mathscr{P})$  and $s_\theta((B_j\times B_k)\cap \mathscr{P})=P(r_\theta((B_j\times B_k)\cap \mathscr{P}))$  are subsets of $T_{jk}$. Let
\begin{align}
    \label{eqn:matti:l-def}
    L_{jk}=T_{jk}\cap \mathbb S^{{D_1}-1}.
\end{align}
Note that $s_\theta(\mathscr{P}) \subset \bigcup_{jk}L_{jk}$. Observe also that as  $L_{jk}$ is the intersection of a linear subspace $T_{jk}\subset \Rea^{D_1}$ of dimension  $2n+1$ and the unit sphere, we have that  the dimension of $L_{jk}$  is  $2n+1-1=2n$.
We recall that ${D_1}\ge 2n+2$.  Then dimension of $L_{jk}$  is $2n< {D_1}-1$.

Next we estimate the maximal distance of point, in a metric $\delta$-ball of the sphere $\mathbb S^{{D_1}-1}$, to the union of the sets  $L_{jk}$.

Let $\tilde \rho=\frac 18\rho$ and
$B_\delta=B_{\mathbb S^{{D_1}-1}}(v_1,\delta)$  be a metric ball of radius $\delta$ centred at $v_1$ on the sphere $\mathbb S^{{D_1}-1}\subset \mathbb R^{D_1}$.
Let $\xi_i$, $i=1,2,\dots,K$ be a 
maximal ${\tilde \rho}$-separated subset of $B_\delta$.
Then we see that ${\tilde \rho}$-balls $
B_{\mathbb S^{{D_1}-1}}(\xi_i,{\tilde \rho})$ cover the set $B_\delta$, that is
\begin{align*}
    B_\delta\subset \bigcup_{i=1}^K B_{\mathbb S^{{D_1}-1}}(\xi_i,{\tilde \rho}).
\end{align*}

Thus the sum of the $({D_1}-1)$-volumes of the balls $B_{\mathbb S^{{D_1}-1}}(\xi_i,{\tilde \rho})$ is larger or equal to the volume of the ball $B_\delta$. Using the comparision estimates for manifolds with a bounded sectional curvature \cite[Ch.\ 6, Cor. 2.4]{petersen2006riemannian}, we see that if $c_3=\sin(1)$ then ${c_3^{{D_1}-1}} \delta^{{D_1}-1}\leq K{\tilde \rho}^{{D_1}-1}$, that is 
\begin{align}\label{K at least}
    K\ge {c_3^{{D_1}-1}} \left( \frac{\delta}{\tilde \rho}\right)^{{D_1}-1}.
\end{align}
In other words, a maximal ${\tilde \rho}$-separated set in $B_\delta$ contains at least ${c_3^{{D_1}-1}} ( \delta/{\tilde \rho})^{{D_1}-1}$ points.

On the other hand, we see that ${\tilde \rho}/2$-balls $B_{\mathbb S^{{D_1}-1}}(\xi_i,{\tilde \rho}/2)$ are disjoint subsets of the ball $B_{\mathbb S^{{D_1}-1}}(v_1,2\delta)$ of radius $2\delta$. Thus the sum of volumes of the balls $B_{\mathbb S^{{D_1}-1}}(\xi_i,{\tilde \rho}/2)$ is smaller or equal to the volume of $B_{\mathbb S^{{D_1}-1}}(v_1,2\delta)$. Again by using  \cite[Ch.\ 6, Cor. 2.4]{petersen2006riemannian}, we see that, when we denote $c_4=4c_3^{-1}\ge 1$, we have
\begin{align}\label{K at most all K}
    K\le {c_4^{{D_1}-1}}  \left( \frac{\delta}{\tilde \rho}\right)^{{D_1}-1}.
\end{align}
In other words,  a maximal ${\tilde \rho}$-separated set in $B_\delta$ contains at most ${c_4^{{D_1}-1}}  ( \delta/{\tilde \rho})^{{D_1}-1}$ points.

Next, for any $j,k=1,2,\dots, J$, let $\{x_p^{jk}:\ p=1,\dots,K_{jk}\}$, consisting of $K_{jk}$ points, be a  maximal ${\tilde \rho}/2$ separated subset set in $L_{jk}$. Recall that $L_{jk}=T_{jk}\cap \mathbb S^{{D_1}-1}$ and that $T_{jk}$ are linear spaces of dimension $2n+1$. Thus  $L_{jk}$  are isometric to the $2n$ dimensional sphere $\mathbb S^{2n}$. By the above considerations in \eqref{K at most all K}, we have that
\begin{align}
    \label{K at most}
    K_{jk}\leq c_4^{2n} (2\delta/{\tilde \rho})^{2n},\quad j,k=1,2,\dots, J.
\end{align}

Next we show that if
\begin{align}\label{goal}
    {\tilde \rho}<\frac 18 c_0 \frac 1{J^{2/({D_1}-(2n+1))}} \delta,
\end{align}
then the ${\tilde \rho}/2$-neighborhood of the set $\bigcup_{j,k=1}^J L_{jk}$ does not contain the set $B_\delta$. To show this, assume the opposite that the ${\tilde \rho}/2$-neighborhood of the set $\bigcup_{j,k=1}^J L_{jk}$ contains the set $B_\delta$. Then the ${\tilde \rho}$-neighborhood of the set $Y=\bigcup_{j,k=1}^J\{x_p^{jk}:\ p=1,\dots,K_{jk}\}$ contains the set $B_\delta$, and hence the set $Y$  contains a maximal $2{\tilde \rho}$-separated subset of $B_\delta$. Let $Y_1$ denote this set. Let $K$ be the number of the points in the  set $Y_1$.

From Eqn. \ref{K at least} we have ${c_3^{{D_1}-1}} ( \delta/{\tilde \rho})^{{D_1}-1}\le K$. From $Y_1 \subset Y$ we have $K\leq \sum_{j,k=1}^JK_{jk}$. From Eqn. \ref{K at most} we have $\sum_{j,k=1}^JK_{jk}\le J^2  c_4^{2n} (2\delta/{\tilde \rho})^{2n}$, this in total we have
\begin{align}
    {c_3^{{D_1}-1}} ( \delta/{\tilde \rho})^{{D_1}-1}\le J^2  c_4^{2n} (2\delta/{\tilde \rho})^{2n}
\end{align}
or equivalently,
\begin{align}
    c_3^{{D_1}} ( \delta/{\tilde \rho})^{{D_1}-(2n+1)}\le J^2 (2c_4)^{2n} ,
\end{align}
 or equivalently,
\begin{align}
    {\tilde \rho}\ge \frac{{c_3^{({D_1}-1)/({D_1}-(2n+1))}}} { (2 c_4)^{2n/({D_1}-(2n+1))} } \frac 1{J^{2/({D_1}-(2n+1))}} \delta.
\end{align}
And so
\begin{align}\label{improved estimate}
    {\tilde \rho}\ge  c_0   \frac 1{J^{2/({D_1}-(2n+1))}}\delta,\quad c_0=\frac{c_3^{({D_1}-1)/({D_1}-(2n+1))}}{(2 c_4)^{2n/({D_1}-(2n+1))}} .
\end{align}

Summarizing, the above shows that if
\begin{align}
    {\rho}< c_0 \frac 1{J^{2/({D_1}-(2n+1))}} \delta,
\end{align}
so that also ${\tilde \rho}\leq \rho < c_0 \frac 1{J^{2/({D_1}-(2n+1))}} \delta$, then  there exists $w\in B_\delta$ such that $w$  is not in the ${\tilde \rho}/2$-neighborhood of the set $\bigcup_{j,k=1}^J L_{jk}$ in $\mathbb S^{{D_1}-1}$. As $\bigcup_{j,k=1}^J L_{jk}$ contains the sets $s_\theta(\mathscr{P})$ and for $w_1,w_2\in B_\delta\subset
\mathbb S^{{D_1}-1}\subset \Rea^{D_1}$, 
 \begin{align*}
     \|w_1-w_2\|_{\Rea^{D_1}}\leq \hbox{dist}_{\mathbb S^{{D_1}-1}}(w_1,w_2)\leq 4\|w_1-w_2\|_{\Rea^{D_1}},
 \end{align*}
 the claim follows.

\end{proof}

Next we apply that above lemma to obtain inverse Lipschitz estimates for
the map $Q_w\circ H_\theta$.

\begin{lemma}\label{Lemma inverse Lip estimate}
    Let $H_\theta:\Rea^n\to \Rea^D$ satisfy $\mattidnorm{H_\theta(x)-H_\theta(y)}\ge
    \alpha \mattinnorm{x-y}$ and assume that  $w\in \mathbb S^{D-1}$  satisfies $\hbox{dist}(w, s_\theta(\mathscr{P}))\ge \rho.$
    Then
    \begin{align}
        \mattidnorm{Q_wH_\theta(x)-Q_wH_\theta(y)}\ge  \frac 1{\sqrt 2} \rho\alpha \mattinnorm{x-y}.
    \end{align}
\end{lemma}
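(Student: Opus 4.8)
The plan is to reduce the whole statement to a single planar computation about the angle between two vectors. Write $u = H_\theta(x)-H_\theta(y)\in\Rea^D$. Since $Q_w$ is the orthogonal projection onto $w^\perp$, namely $Q_w(z)=z-(z\cdot w)w$, we have $\mattidnorm{Q_w u}^2 = \mattidnorm{u}^2-(u\cdot w)^2 = \mattidnorm{u}^2\sin^2\theta$, where $\theta$ denotes the angle between $u$ and the unit vector $w$. So the lemma is equivalent to the single bound $\sin\theta\ge \tfrac1{\sqrt2}\rho$; once that is in hand, the bi-Lipschitz hypothesis $\mattidnorm{u}\ge\alpha\mattinnorm{x-y}$ closes the estimate.

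The next step is to convert the distance hypothesis into a bound on $\cos\theta$. By definition, $s_\theta(x,y)=u/\mattidnorm{u}=:\hat u$ lies in $s_\theta(\mathscr{P})$, and by the antipodal symmetry $s_\theta(\mathscr{P})=-s_\theta(\mathscr{P})$ recorded just after \eqref{s_theta formula}, the vector $-\hat u$ also lies in $s_\theta(\mathscr{P})$. The hypothesis $\hbox{dist}(w,s_\theta(\mathscr{P}))\ge\rho$ therefore forces \emph{both} $\mattidnorm{w-\hat u}\ge\rho$ and $\mattidnorm{w+\hat u}\ge\rho$. Since $\mattidnorm{w\mp\hat u}^2 = 2(1\mp\cos\theta)$, these give $1-\cos\theta\ge\rho^2/2$ and $1+\cos\theta\ge\rho^2/2$, i.e. $\abs{\cos\theta}\le 1-\rho^2/2$.

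The key step is then the factorization $\sin^2\theta=(1-\cos\theta)(1+\cos\theta)$. If $\cos\theta\ge 0$, the factor $1+\cos\theta\ge1$ while $1-\cos\theta\ge\rho^2/2$, so $\sin^2\theta\ge\rho^2/2$; if $\cos\theta<0$, the roles of the two factors swap and the same bound holds. Hence $\sin\theta\ge\rho/\sqrt2$ in all cases, and combining with the first paragraph yields
\[
\mattidnorm{Q_w H_\theta(x)-Q_w H_\theta(y)} = \mattidnorm{u}\sin\theta\ge\frac{\rho}{\sqrt2}\,\mattidnorm{u}\ge\frac{1}{\sqrt2}\rho\alpha\,\mattinnorm{x-y},
\]
which is exactly the claim.

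I do not anticipate a substantive obstacle here, as the argument is elementary once set up correctly. The one point requiring care is the invocation of the antipodal symmetry of $s_\theta(\mathscr{P})$: without excluding \emph{both} $\hat u$ and $-\hat u$ one would only bound $\cos\theta$ away from $1$, which is insufficient, since $w$ could be nearly antiparallel to $u$ and the projection $Q_w$ would then collapse $u$. Using both constraints is precisely what bounds $\abs{\cos\theta}$ (rather than $\cos\theta$) away from $1$ and guarantees the claimed inverse-Lipschitz constant.
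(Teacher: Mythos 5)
Your proof is correct and follows essentially the same route as the paper's: both reduce to the planar estimate $\sin^2\theta=(1-\cos\theta)(1+\cos\theta)\ge\rho^2/2$, the paper handling the sign of $\cos\theta$ by swapping the roles of $x$ and $y$ while you use the antipodal symmetry $s_\theta(\mathscr{P})=-s_\theta(\mathscr{P})$ directly. Your observation that bounding only $\cos\theta$ (rather than $\abs{\cos\theta}$) away from $1$ would be insufficient is exactly the point the paper addresses with its ``by changing roles of $x$ and $y$'' step.
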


\begin{proof}
Assume that $w\in \mathbb S^{D-1}$  satisfies 
\begin{align*}
    \hbox{dist}_{\Rea^D}(w, s_\theta(\mathscr{P}))\ge \rho.
\end{align*}

Let $\beta\ge \frac 1{{\sqrt 2} } \rho\alpha$.
Let $x,y\in  \Rea^n$, $x\not =y$ and
\begin{align*}
    v=Q_wH_\theta(x)-Q_wH_\theta(y)\in \Rea^D.
\end{align*}
Note that $\mattidnorm{v}<\beta\mattinnorm{x-y}$. Then, we see that there is $t\in \Rea$ 
\begin{align}
    H_\theta(x)-H_\theta(y)=tw+v.
\end{align}
Let $r=\mattidnorm{H_\theta(x)-H_\theta(y)}\not =0$. Again, by changing roles of $x$ and $y$, if needed, we can assume that $t \ge 0$. Note that $w\perp v$. 

The above yields 
$$
\frac tr w=(H_\theta(x)-H_\theta(y)-v)/r=s_\theta(x,y)-\frac vr.
$$
Then,
$$
\frac tr w+\frac vr=s:=s_\theta(x,y)\in s_\theta(\mathscr{P}).
$$

Assume first that $\langle s,w\rangle\ge 0$. Then,
as $w$ and $s$  are unit vectors, we have for any $b\in \Rea$
$$
\mattidnorm{s-bw}^2\ge \mattidnorm{s-\langle s,w\rangle w}^2=\mattidnorm{s}^2-2\langle s,\langle s,w\rangle w\rangle+\langle s,w\rangle^2=1-\langle s,w\rangle^2
$$
and as $0\le\langle s,w\rangle\leq 1$,
$$
\mattidnorm{s-w}^2= \mattidnorm{s}^2-2\langle s,w\rangle+\mattidnorm{w}^2=2(1-\langle s,w\rangle)\le 2(1-\langle s,w\rangle^2)
$$
and hence
$
\mattidnorm{s-w}^2\le 2\inf_{b\in \Rea}\mattidnorm{s-bw}^2.
$
Thus,
$$
\mattidnorm{w-s} \le \sqrt 2\inf_{b\in \Rea}\mattidnorm{s-bw}
\le \sqrt 2\mattidnorm{s- \frac tr w}=\sqrt 2\mattidnorm{\frac vr}
.
$$
Hence
\begin{align}\label{kappa eq}
\rho&\leq \hbox{dist}(w,s_\theta(\mathscr{P}))
\\
\nonumber &\leq  \sqrt 2\,
\mattidnorm{\frac vr}
\\ \nonumber
    &\le \frac {\sqrt 2} r \beta\mattinnorm{x-y}.\nonumber
\end{align}
Recall that by our assumptions in the claim
\begin{align}\label{inverse2}
    \mattidnorm{H_\theta(x)-H_\theta(y)} &\ge \alpha \mattinnorm{x-y}
\end{align}
Hence, by \eqref{kappa eq},
\begin{align}\label{kappa eq2}
    \rho<\frac {\sqrt 2} r \beta\mattinnorm{x-y}\le \frac {\sqrt 2}  {\alpha \mattinnorm{x-y}} \beta\mattinnorm{x-y}=\frac {{\sqrt 2} \,\beta} {\alpha }. 
\end{align}
This shows that for all $x,y\in \Rea^n$
\begin{align*}
    \mattidnorm{Q_wH_\theta(x)-Q_wH_\theta(y)}\ge  \frac 1{\sqrt 2} \rho\alpha \mattinnorm{x-y}.
\end{align*}

\end{proof}

Next we apply Lemma \ref{Lemma inverse Lip estimate} for the neural networks $H^{(0)}_\theta=H_\theta$ and
\begin{align}
H^{(j)}_\theta= Q_{w_j}\circ\dots \circ Q_{w_1} H_\theta:\Rea^n\to \Rea^D,\quad
j=1,2,\dots,n,\quad D=n+m,
\end{align} 
with suitably chosen orthogonal vectors $w_j$. We choose these vectors using a recurrent procedure: Let $v_1,\dots,v_{n+m}$ be the unit coordinate vectors in $\Rea^{n+m}$
. Our initial choice of $w_1$ is arbitrary. Once the first $w_1,w_2,\dots, w_{k-1}$ are chosen, we choose $w_k\in \{w_1,w_2,\dots, w_{k-1}\}^\perp\subset \Rea^D$. 
Next, we use the isomorphism
$$
A_{k}:\{w_1,w_2,\dots, w_{k-1}\}^\perp\to \Rea^{D-(k-1)},\quad A_1=I:\Rea^{D}\to \Rea^{D}
$$ 
where $\{w_1,w_2,\dots, w_{k-1}\}^\perp\subset \Rea^{D}$.
The map $A_k$ is used to identify the linear space $\{w_1,w_2,\dots, w_{k-1}\}^\perp\subset \R^D$ with a $D-(k-1)$ dimensional Euclidean space, and it can be also considered as a map that gives coordinates in the space $\{w_1,w_2,\dots, w_{k-1}\}^\perp.$
By applying Lemma \ref{lemma: distances to image} for  the map 
$A_{k}\circ H^{(k-1)}_\theta:\Rea^n\to
 \Rea^{D-(k-1)}$, that is
a map from $n$ dimensional space to an Euclidean space of dimension $d_{k-1}=D-(k-1)\ge n+2n+1-(n-1)\ge 2n+2$. 
In this, we use the values 
$$
\rho_k=\frac {c_0}{2J^{2/((D-(k-1))-(2n+1))}} \delta=\frac {c_0}{2J^{2/(m-n-k)}} \delta
$$ 
and 
\begin{align}
\alpha_{k}=\bigg(\prod_{j=1}^{k}\frac 1{\sqrt 2} \rho_j\bigg)\alpha,
\end{align}
so that $\alpha_{k}=(\frac 1{\sqrt 2} \rho_k)\alpha_{k-1}$ and $\alpha_0=\alpha$.

By replacing the vector $v_1$  by $A_{k}v_{k}$,  $\rho$ by $\rho_k$, and $\alpha$ by $\alpha_{k+1}$, respectively,
Lemma \ref{lemma: distances to image} implies that there is 
a vector $w\in  \mathbb S^{{d_{k-1}}-1}$ such that
    \begin{eqnarray}\label{addded d-formula}
        \hbox{dist}(w,A_{k}v_{k})<\delta,\quad \hbox{dist}(w,s_\theta(\mathscr{P}))\ge  \rho_k,
    \end{eqnarray}
    where distances are measured in  $\Rea^{d_{k-1}}$ and $s_\theta(\mathscr{P})$ is defined in formula \eqref{s_theta formula} with $H_\theta$ replaced by the neural network $A_{k}\circ H^{(k-1)}_\theta:\Rea^n\to
 \Rea^{d_{k-1}}$.
We define 
$w_k=A_{k}^{-1}w$.
Then by Lemma \ref{Lemma inverse Lip estimate},
 $H^{(k)}_\theta=Q_{w_k}\circ  H^{(k-1)}_\theta$ 
 satisfies
\begin{align}
    \label{eqn:matti:h-theta-inv-lip-bound}
\mattidnorm{H^{(k)}_\theta(x)-H^{(k)}_\theta(y)}\ge  \alpha_k \mattinnorm{x-y}.
\end{align}

Observe that as $Q_{w_j}$ are linear maps, the restrictions of  all maps $H^{(j)}_\theta$ are  affine in the the same sets $B_1,\dots,B_J$. Then, apply map the vectors $v_{n+m},\dots,v_{n+1},w_n,\dots,w_1$ to the vectors $v_{n+m},\dots,v_1$. The above yields that
\begin{align}
    \mattidnorm{H^{(n)}_\theta(x)-H^{(n)}_\theta(y)}\ge a \mattinnorm{x-y},   \quad  a=\alpha_n,\quad C_0= \frac 1{2\sqrt 2}  {c_0}.
\end{align} 
Moreover, we have 
\begin{align}
\mattidnorm{H_\theta(x)}\leq (C_R+\alpha R),\quad C_R=\|F_\theta\|_{C(B(R))},
\end{align}
and as $\|Q_{w_j}\|\le 1$,
\begin{align}
\mattidnorm{H^{(j)}_\theta(x)}\leq (C_R+\alpha R).
\end{align}
Moreover, for $j=1,2,\dots,n$ we have
\begin{align}
\mattidnorm{H^{(j)}_\theta(x)-Q_{v_j}H^{(j-1)}_\theta(x)}&\leq \|Q_{w_j}-Q_{v_j}\|\, \mattidnorm{H^{(j-1)}_\theta(x)}
\\ \nonumber
&\leq 2\mattidnorm{w_j-v_j}\, \mattidnorm{H^{(j-1)}_\theta(x)}.
\end{align}

Let $R_{w_n,\dots,w_1,v_n,\dots,v_1}\in \Rea^{(n+m)\times(n+m)}$ be an invertable matrix that maps the vectors $v_{n+m},v_{n+m-1},\dots, v_{n+1},w_n,\dots,w_1$ to $v_{n+m},v_{n+m-1},\dots,v_{n+1},v_n,\dots,v_1$, respectively.
We recall that $\mattinnorm{w_j-v_j}<\delta$. For any $x \in \Rea^{m+n}$, let $x = \sum^{n}_{i = 1}\alpha_i v_i + \sum^{m+n}_{i = n+1}\alpha_i v_i$, then
\begin{align*}
    \mattidnorm{\paren{R_{w_n,\dots,w_1,v_n,\dots,v_1}^{-1} - I}x} &= \mattidnorm{\sum^{n}_{i = 1}\alpha_i \paren{R_{w_n\dots w_1,v_n\dots v_1}^{-1}v_i - v_i} + \sum^{m+n}_{i = n+1}\alpha_i \paren{R_{w_n\dots w_1,v_n\dots v_1}^{-1} v_i - v_i}}\\
    &= \mattidnorm{\sum^{n}_{i = 1}\alpha_i\left (w_i - v_i\right )} \leq \sum^{n}_{i = 1}\mattidnorm{\alpha_i\left (v_i - w_i\right )}\\
    &< \sum^{n}_{i = 1}\alpha_i \delta \leq \delta \norm{x}_{l^1(\Rea^{m+n})}\\
    & \leq \sqrt{m+n}\delta \mattidnorm{x}
\end{align*}
where $\norm{x}_{l^1(\Rea^{m+n})}$ denotes the discrete $L_1$ norm of $x$ in $\Rea^{m+n}$. Further
\begin{align*}
    \mattidnorm{R_{w_n,\dots,w_1,v_n,\dots,v_1} - I} &= \mattidnorm{\paren{I - R_{w_n,\dots,w_1,v_n,\dots,v_1}^{-1}}R_{w_n,\dots,w_1,v_n,\dots,v_1}}\\
    &\leq \mattidnorm{R_{w_n,\dots,w_1,v_n,\dots,v_1}^{-1} - I}\mattidnorm{R_{w_n,\dots,w_1,v_n,\dots,v_1}}\\
    &\leq \frac{\sqrt{m+n}}{1 - \sqrt{m+n}\delta}\delta
\end{align*}
where the last inequality comes from $\mattidnorm{R_{w_n,\dots,w_1,v_n,\dots,v_1}} = \mattidnorm{\paren{I + \paren{R_{w_n,\dots,w_1,v_n,\dots,v_1}^{-1}-I}}^{-1}}$. Since $Q_{v_n}\dots Q_{v_1}H_\theta(x)=\tilde F_\theta(x):=(F_\theta(x),0,\dots,0)$ 
we see that
\begin{align}
\mattimnorm{\tilde F_\theta(x)-H^{(n)}_\theta(x)}&\leq 2n\delta (C_R+\alpha R).
\end{align}
Then,
\begin{align}
    \label{eqn:matti:f-theta-def}
    f_\theta(x)=P_0R_{w_n,\dots,w_1,v_n,\dots,v_1}H^{(n)}_\theta(x)
\end{align}

where $P_0:\Rea^{m}\times \Rea^{n}\to \Rea^m$ maps $P_0(x,z)=x$. The above yields
\begin{align*}
\mattimnorm{f_\theta(x)-F_\theta(x)}&\leq 
2n\delta (C_R+\alpha R) +\frac{\sqrt{m+n}}{1 - \sqrt{m+n}\delta}\delta(C_R + \alpha R) \\
&= \paren{2n + \frac{\sqrt{m+n}}{1 - \sqrt{m+n}\delta}}\delta\paren{C_R + \alpha_R}
\end{align*}
for $x\in B(R)$.

Further, from $\mattidnorm{R} \geq \mattidnorm{I + \paren{R^{-1} - I}}$, the fact that 
$R_{w_n,\dots,w_1,v_n,\dots,v_1}H^{(n)}_\theta(x)$ belongs in the image of $P_0$, and from \eqref{eqn:matti:h-theta-inv-lip-bound} \& \eqref{eqn:matti:f-theta-def} we have
\begin{align}
    \mattimnorm{f_\theta(x)-f_\theta(y)}\ge a \mattinnorm{x-y}, \quad  a= \frac{\alpha_n}{1 + \sqrt{m+n}\delta} \geq \alpha_n.
\end{align} 

Above, using the definition Euler's constant and the harmonic series, we see that
there are $c_6,c_5\in \R_+$ such that 
$$
\ln(m)+c_6\leq \sum_{j=1}^m \frac 1{j}\leq \ln(m)+c_7
$$
and as $\ln(m+1)+c_6-\ln 2\leq
\ln(2m)+c_6-\ln 2\leq
\ln(m)+c_6$ and $\ln(m)+c_7\leq \ln(m+1)+c_7$,  
 we see that  for $C_2=2(c_7+c_6+\ln 2)$
$$
\sum_{j=1}^n \frac 1{m-n-j}
=\sum_{j=1}^{m-n-1} \frac 1j-
\sum_{j=1}^{m-2n-1} \frac 1j
\leq \ln(m-n)-\ln(m-2n)+C_2/2
$$
and
\begin{align*}
    a =\frac{\alpha}{1 + \sqrt{m+n}\delta} C_0^{nD} \delta^nJ^{-2(\sum_{j=1}^n \frac 1{m-n-j})} \ge \alpha C_0^{n(n+m)} \delta^nJ^{-2\ln( (m-n)/(m-2n))-C_2}.
\end{align*}

We choose
\begin{align*}
    \varepsilon_2= \paren{2n + \frac{\sqrt{m+n}}{1 - \sqrt{m+n}\delta}}\delta\paren{C_R + \alpha_R}
\end{align*}
so that $\delta= \paren{ \paren{2n + \frac{\sqrt{m+n}}{1 - \sqrt{m+n}\delta}}\paren{C_R + \alpha_R}}^{-1}\varepsilon_2$ and
\begin{align*}
    &\paren{ \frac {C_0^D}{\paren{2n + \sqrt{m+n}}\paren{C_R + \alpha_R}}\varepsilon_2}^nJ^{-2\ln( (m-n)/(m-2n))-C_2} \alpha\\
    &\leq \paren{ \frac {C_0^D}{\paren{2n + \frac{\sqrt{m+n}}{1 - \sqrt{m+n}\delta}}\paren{C_R + \alpha_R}}\varepsilon_2}^nJ^{-2\ln( (m-n)/(m-2n))-C_2} \alpha\\ &\leq a .
\end{align*}
When we use above $\alpha=1$ and $C_1=C_0/10$, this yields the final part of Theorem \ref{thm:fun-approx-by-injective-nn}.
\bigskip

Lemma \ref{lem: projections} used in the above proof yields also Corollary \ref{cor: decreasing dimension}.

 \begin{proof}[Proof of Corollary \ref{cor: decreasing dimension}]
Observe that a measure $\mu$ that is absolutely continuous with respect to the Lebesgue $\prod_{j=1}^k\Rea^{d_{2j}\times d_{2j-1}}$ is absolutely continuous also with respect to the normalized Gaussian distribution, and that if the set $S=\{F_k\hbox{ is not injective}\}$ has measure zero with respect to the normalized Gaussian distribution, then its $\mu$-measure is also zero. Thus we can assume without loss of generality that the elements of matrices $B_j$  are independent and have normalized Gaussian distributions.

Assume next that we have shown that $F_{j-1}:\Rea^n\to \Rea^{d_{2j-2}}$ is injective almost surely. Then, $f^{(j)}_\theta\circ F_{j-1}:\Rea^n\to \Rea^{d_{2j-1}}$ is injective almost surely. If $d_{2j}>d_{2j-1}$, the matrix $B_j$ is almost surely injective and so is $F_{j}=B_j\circ f^{(j)}_\theta\circ F_{j-1}:\Rea^n\to \Rea^{d_{2j}}$. Thus, it is enough to consider the case when $d_{2j-1}\ge d_{2j}\geq 2n+1$. Then the matrix $B_j:\Rea^{d_{2j-1}}\to\Rea^{d_{2j}}$ has almost surely rank  $d_{2j}$. By using the singular value decomposition, we can write $B_{j}=R^1_jD_jR^2_j$ where $R_j^1\in O(d_{2j}),$ $R_j^2\in O(d_{2j-1})$, $D_j\in \Rea^{d_{2j}\times d_{2j-1}}$  is  matrix which principal diagonal elements are almost surely strictly positive and the other elements are zeros. Let $V_j=(R^2_j)^{-1}(\R^{d_{2j}}\times \{(0,0,\dots,0)\})\subset \R^{d_{2j-1}}$. Let $P_{V_j}$ be an orthogonal projector in $ \R^{d_{2j-1}}$ onto  the space $V_j$  of dimension $d_{2j}$. As the distribution of the matrix $B_j$  is invariant in rotations of the space, so is the distribution of linear space $V_j$ in $G(d_{2j-1},d_{2j})$. By Lemma \ref{lem: projections}, we see that the map $P_{V_j}\circ f^{(j)}_\theta\circ F_{j-1}:\Rea^n\to \Rea^{d_{2j-1}}$ is injective almost surely. This implies that  the map $B_{j}\circ f^{(j)}_\theta\circ F_{j-1}:\Rea^n\to \Rea^{d_{2j}}$ is injective almost surely, that is, the map $F_{j}:\Rea^n\to \Rea^{d_{2j}}$ is injective almost surely. The claim follows by induction.
\end{proof}

In \citep{Baraniuk-Wakin}, see also \citep{Hedge-Wakin-Baraniuk,Iven-Maggioni}, \citep{Broomhead,Broomhead2}, the authors study manifold learning using random projectors. These results are related to the proof of the first, universal, half of Theorem \ref{thm:fun-approx-by-injective-nn} above. Let $H_\theta$ be given by (\ref{eqn:thm-5-proof:h-theta-def}), a ReLU-based neural network whose graph $M\subset \Rea^d$, $d={2n+m}$. When $P_V$ is a random projector in $\Rea^d$ onto a $m$-dimensional linear subspace $V$, the injectivity of  the neural network $P_V\circ H_\theta$ is closely related to the property that $P_V(M)$ is an $n$-dimensional submanifold with a large probability. In \cite{Broomhead,Broomhead2} the authors use Whitney embedding results for $C^2$-smooth manifold for dimension reduction of data. Our proof applies similar techniques for Lipschitz-smooth maps. In \cite{Baraniuk-Wakin,Hedge-Wakin-Baraniuk,Iven-Maggioni}, the authors apply the result that when $M\subset \R^D$ is a submanifold  and $D$  is large enough, a random $m$-dimensional projector $P_V$ satisfies on $M$ the restricted isometry property with a large probability. In this case, $P_V\circ H_\theta$  is not only an injection but its inverse map is also a local Lipschitz map. In this sense, the techniques in \citep{Baraniuk-Wakin,Hedge-Wakin-Baraniuk,Iven-Maggioni} would give improved results to the generic projection technique used in this paper. The results in \citep{Baraniuk-Wakin,Hedge-Wakin-Baraniuk,Iven-Maggioni}, however, require that the dimension $m$ of image space of the map $f:\Rea^n\to \Rea^m$ satisfies $m\ge C\log(\varepsilon^{-1})$, where $\varepsilon$ is the precision parameter in the inequality \eqref{approximation errror}. Our result, Theorem \ref{thm:fun-approx-by-injective-nn} requires only that $m\ge 2n+1$.

\section{Miscellaneous Lemmas}

\begin{lemma}\label{lem:relu-arithmetic:pos-addition}
    Let $x, y, z \in \Rea$ and $z \geq 0$. Then 
    \begin{align}
        \relu(x + z) = \relu(y + z) \implies \relu(x)  = \relu(y) .
    \end{align}
\end{lemma}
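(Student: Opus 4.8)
The plan is to argue by a direct case analysis on the signs of $x+z$ and $y+z$, exploiting that $\relu$ is the identity on the nonnegative reals and vanishes on the nonpositive reals. Writing $\relu(t) = \max(t,0)$, the hypothesis reads $\max(x+z,0) = \max(y+z,0)$ and the goal is to deduce $\max(x,0) = \max(y,0)$.

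First I would dispose of the case in which both $x+z \le 0$ and $y+z \le 0$. Here the nonnegativity of $z$ is the crucial ingredient: from $x+z \le 0$ and $z \ge 0$ we get $x \le -z \le 0$, and likewise $y \le 0$, so that $\relu(x) = 0 = \relu(y)$ and the conclusion holds trivially. In the complementary case at least one of $x+z$, $y+z$ is strictly positive; say $x+z > 0$. Then $\relu(x+z) = x+z > 0$, so the hypothesis forces $\relu(y+z) > 0$ as well, which is only possible if $y+z > 0$ (were $y+z \le 0$ we would have $\relu(y+z) = 0$, a contradiction). Consequently $\relu(x+z) = x+z$ and $\relu(y+z) = y+z$, and cancelling $z$ from the equality $x+z = y+z$ gives $x = y$, whence $\relu(x) = \relu(y)$. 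These two cases are exhaustive, completing the argument.

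I anticipate no genuine obstacle here; the only point requiring care is not to conflate the boundary $x+z = 0$ with the strictly positive branch, and to remember that the assumption $z \ge 0$ is used at exactly one place, namely to pass from $x+z \le 0$ to $x \le 0$. As a slicker alternative I could instead first establish the pointwise identity $\relu(x) = \relu(\relu(x+z) - z)$, valid for every $x$ and every $z \ge 0$ (verified by the same two-case split), and then read off the claim by substituting $\relu(x+z) = \relu(y+z)$ into this identity, which immediately yields $\relu(x) = \relu(\relu(x+z)-z) = \relu(\relu(y+z)-z) = \relu(y)$.
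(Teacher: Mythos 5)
Your proof is correct: the only nontrivial case is when both $x+z\le 0$ and $y+z\le 0$, where the hypothesis $z\ge 0$ is exactly what forces $x\le 0$ and $y\le 0$ so that both sides vanish, and in the remaining case positivity of one side propagates to the other and lets you cancel $z$. The paper states this lemma in its miscellaneous-lemmas appendix without any proof, evidently regarding it as immediate, so there is nothing to compare against; your case analysis (or the equivalent identity $\relu(x)=\relu(\relu(x+z)-z)$ for $z\ge 0$) is the natural argument and correctly supplies the omitted details.
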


\begin{lemma}[Useful Inequalities]
    The following geometric inequalities are useful and have straight forward proofs. Suppose that $a,b,c \in \Rea^n$, then
    \begin{enumerate}
        \item For $i = 1,\dots, k$, let $a_i \in \Rea^m$. If
            \begin{align}
                \sum_{i = 1}^k\norm{a_i}_2^2 \leq \norm{\sum_{i = 1}^k a_i}^2_2 ,
            \end{align}
            then if for each $j = 1, \dots, m$, $a_{i}|_j \cdot a_{i'}|_j \geq 0$ for each pair $i, i' \in [[m]]$, then
            \begin{align} \label{eqn:lem-useful-ineq:1}
                \sum_{i = 1}^k\norm{a_i}_2 \leq \sqrt{k} \norm{\sum_{i = 1}^k a_i}_2 .
            \end{align}
        \item If $a,b \in \Rea^n$ and $\innerprod{a}{b} \geq 0$, then
            \begin{align} \label{eqn:lem-useful-ineq:2}
                \norm{a}^2_2 + \norm{b}^2_2 \leq \norm{a + b}^2_2 .
            \end{align}
    \end{enumerate}
\end{lemma}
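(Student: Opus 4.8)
The plan is to dispatch the two inequalities separately, each by a short direct computation; neither requires machinery beyond expanding a squared norm and a single application of Cauchy--Schwarz, so I expect the whole argument to be routine.

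For the second inequality I would simply expand the squared norm of the sum. Writing
\begin{align*}
    \norm{a + b}_2^2 = \norm{a}_2^2 + 2\innerprod{a}{b} + \norm{b}_2^2,
\end{align*}
the hypothesis $\innerprod{a}{b} \geq 0$ immediately discards the cross term and yields $\norm{a+b}_2^2 \geq \norm{a}_2^2 + \norm{b}_2^2$, which is exactly (\ref{eqn:lem-useful-ineq:2}).

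For the first inequality I would first note that the coordinatewise sign hypothesis $a_i|_j \cdot a_{i'}|_j \geq 0$ is what secures the bound $\sum_i \norm{a_i}_2^2 \leq \norm{\sum_i a_i}_2^2$: in each coordinate $j$ all of the scalars $a_i|_j$ share a sign, so every cross term in the expansion $(\sum_i a_i|_j)^2 = \sum_i (a_i|_j)^2 + 2\sum_{i<i'} a_i|_j\, a_{i'}|_j$ is nonnegative, and summing over $j$ reproduces precisely the stated hypothesis. With that intermediate bound in hand, the conclusion follows from the generic $\ell^1$--$\ell^2$ comparison: applying Cauchy--Schwarz to the vectors $(\norm{a_1}_2,\dots,\norm{a_k}_2)$ and $(1,\dots,1)$ in $\Rea^k$ gives
\begin{align*}
    \sum_{i=1}^k \norm{a_i}_2 \leq \sqrt{k}\,\paren{\sum_{i=1}^k \norm{a_i}_2^2}^{1/2} \leq \sqrt{k}\,\norm{\sum_{i=1}^k a_i}_2,
\end{align*}
where the last step substitutes the bound above, establishing (\ref{eqn:lem-useful-ineq:1}).

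There is no substantive obstacle here; the only point requiring a moment of care is recognizing the logical structure of the first statement, namely that the coordinatewise sign condition is the operative assumption (it is what one actually verifies in the application inside Lemma \ref{lem:inv-lip:adjacent:face-intersect}) and that it implies the norm inequality appearing as a hypothesis, so the two listed assumptions are consistent rather than independent. Once this is observed, the conclusion reduces to the universal inequality $\norm{v}_1 \leq \sqrt{k}\,\norm{v}_2$ on $\Rea^k$, independently of any further structure of the $a_i$.
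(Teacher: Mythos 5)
Your proof is correct; the paper itself omits any argument for this lemma (declaring the inequalities to have "straight forward proofs"), and your expansion of $\norm{a+b}_2^2$ for part 2 and the combination of the coordinatewise sign observation with the Cauchy--Schwarz bound $\norm{v}_1 \leq \sqrt{k}\,\norm{v}_2$ for part 1 supply exactly the routine computations the authors had in mind. Your remark that the sign condition implies the displayed norm hypothesis, so the two stated assumptions are nested rather than independent, is also an accurate reading of the lemma's (somewhat awkwardly phrased) logical structure.
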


\begin{lemma}[Co-linear Additivity of $\relu(W\cdot)$]\label{lem:colinear-additivity}
    Let $W \in \Rea^{m \times n}$, $x_1, x_2\in \Rea^n$, $\ell^{x_1, x_2}(t) = (1-t)x_1 + t x_2$. Let there be
    \begin{align}
        0 = t_1 \leq t_2 \leq \dots \leq t_{n_t-1} \leq t_{n_t} = 1
    \end{align}
    then 
    \begin{align} \label{eqn:lem:colinear-additivity:squared}
        \sum^{n_t-1}_{k = 1} \norm{\relu(W x_{t_k}) - \relu(W x_{t_{k+1}})}_2^2 \leq \norm{\relu(W x_1) - \relu(W x_2)}_2^2.
    \end{align}
    and 
    \begin{align} \label{eqn:lem:colinear-additivity:norm}
        \sum^{n_t-1}_{k = 1} \norm{\relu(W x_{t_k}) - \relu(W x_{t_{k+1}})}_2 \leq \sqrt{n_t} \norm{\relu(W x_1) - \relu(W x_2)}_2.
    \end{align}
\end{lemma}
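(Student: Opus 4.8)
The plan is to reduce everything to a one-dimensional monotonicity statement applied coordinate by coordinate. Fix a row $w_j$ of $W$ and set $g_j(t) = \relu(\innerprod{w_j}{\ell^{x_1,x_2}(t)})$. Since $t \mapsto \innerprod{w_j}{\ell^{x_1,x_2}(t)} = (1-t)\innerprod{w_j}{x_1} + t\innerprod{w_j}{x_2}$ is affine in $t$ and $\relu$ is nondecreasing, the composition $g_j$ is a monotone function of $t$ on $[0,1]$ (nondecreasing if $\innerprod{w_j}{x_2} \geq \innerprod{w_j}{x_1}$, nonincreasing otherwise). Consequently, for the partition $0 = t_1 \le \dots \le t_{n_t} = 1$, the increments $g_j(t_{k+1}) - g_j(t_k)$, $k = 1,\dots,n_t-1$, all share a common sign. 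This is the single fact that drives the whole argument; everything else is bookkeeping.

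First I would prove \eqref{eqn:lem:colinear-additivity:squared}. Because the increments of $g_j$ have a common sign, the absolute increments telescope exactly: $\sum_{k=1}^{n_t-1} |g_j(t_{k+1}) - g_j(t_k)| = |g_j(1) - g_j(0)|$. For any finite collection of nonnegative reals $\{a_k\}$ one has $\sum_k a_k^2 \le (\sum_k a_k)^2$ since the cross terms are nonnegative; applying this with $a_k = |g_j(t_{k+1}) - g_j(t_k)|$ gives $\sum_{k=1}^{n_t-1} |g_j(t_{k+1}) - g_j(t_k)|^2 \le |g_j(1)-g_j(0)|^2$. Summing over the rows $j = 1,\dots,m$ and interchanging the order of summation yields
\begin{align*}
\sum_{k=1}^{n_t-1} \norm{\relu(Wx_{t_k}) - \relu(Wx_{t_{k+1}})}_2^2
&= \sum_{j=1}^m \sum_{k=1}^{n_t-1} |g_j(t_{k+1})-g_j(t_k)|^2 \\
&\le \sum_{j=1}^m |g_j(1)-g_j(0)|^2 = \norm{\relu(Wx_1) - \relu(Wx_2)}_2^2,
\end{align*}
which is exactly \eqref{eqn:lem:colinear-additivity:squared}.

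For \eqref{eqn:lem:colinear-additivity:norm} I would set $v_k = \relu(Wx_{t_k}) - \relu(Wx_{t_{k+1}})$ and observe two things: first, $\sum_{k=1}^{n_t-1} v_k = \relu(Wx_1) - \relu(Wx_2)$ by telescoping (using $t_1 = 0$, $t_{n_t}=1$); second, the coordinate-wise monotonicity above shows that for each fixed $j$ all entries $v_k|_j$ share a common sign, so $v_k|_j \cdot v_{k'}|_j \ge 0$ for every pair $k,k'$. Inequality \eqref{eqn:lem:colinear-additivity:squared} is precisely the hypothesis $\sum_k \norm{v_k}_2^2 \le \norm{\sum_k v_k}_2^2$ of the Useful Inequalities lemma, so \eqref{eqn:lem-useful-ineq:1} applies with $k = n_t-1$ and gives $\sum_{k=1}^{n_t-1}\norm{v_k}_2 \le \sqrt{n_t-1}\,\norm{\relu(Wx_1)-\relu(Wx_2)}_2 \le \sqrt{n_t}\,\norm{\relu(Wx_1)-\relu(Wx_2)}_2$. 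Alternatively, one can bypass that lemma and pass from \eqref{eqn:lem:colinear-additivity:squared} to \eqref{eqn:lem:colinear-additivity:norm} by a single application of Cauchy--Schwarz to the $n_t-1$ nonnegative numbers $\norm{v_k}_2$.

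There is no serious obstacle here; the only subtlety, and the step I would be most careful about, is justifying the common-sign property of the increments — that is, that $\relu$ precomposed with an affine map and restricted to a line is monotone — since this is exactly what upgrades the telescoping of absolute values from a triangle-inequality bound to an equality. Everything downstream is elementary.
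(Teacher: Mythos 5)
Your proof is correct and follows essentially the same route as the paper's: both arguments hinge on the coordinatewise monotonicity of $t \mapsto \relu(\innerprod{w_j}{\ell^{x_1,x_2}(t)})$, which makes consecutive increments share a sign, and then pass from the squared inequality to the norm inequality via Cauchy--Schwarz (the paper's ``Useful Inequalities'' lemma). Your per-coordinate telescoping is a slightly more explicit rendering of the same idea, and your observation that the sharp constant is $\sqrt{n_t-1}$ rather than $\sqrt{n_t}$ is a correct minor refinement.
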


\begin{proof}
    Let $x_t = \ell^{x_1, x_2}(t)$, then as a function of $t$, the $j$'th component of $\relu(Wx_t)$ is either increasing (if $\innerprod{w_j}{x_2 - x_1} \geq 0$) or decreasing (if $\innerprod{w_j}{x_2 - x_1} \leq 0$). In either case, it is clear that for each $j = 1, \dots, m$
    \begin{align}
        (\relu(W x_1) - \relu(W x_2))|_j \cdot (\relu(W x_2) - \relu(W x_3))|_j \geq 0
    \end{align}
    hence clearly 
    \begin{align}
        \innerprod{(\relu(W x_1) - \relu(W x_2))}{(\relu(W x_2) - \relu(W x_3))} \geq 0
    \end{align}
    thus we can apply (\ref{eqn:lem-useful-ineq:2}) and we obtain (\ref{eqn:lem:colinear-additivity:squared}). Applying (\ref{eqn:lem-useful-ineq:1}) then yields (\ref{eqn:lem:colinear-additivity:norm}).
\end{proof}

\section{Detailed Comparison to Prior Work}

\subsection{Comparison to Bruna \textit{et al.}} \label{sec:comparison-w-bruna}

In \citep[Proposition 2.2.]{bruna2013signal} the authors give a result  invoking a condition similar to our DSS condition (Definition \ref{def:directed-spanning-set}). It also concerns injectivity of a $\relu$ layer in terms of the injectivity of the weight matrix restricted to certain rows. The authors also compute a bi-Lipschitz bound for a layer (similar to our Theorem \ref{thm:inv-lip:global}), though as we show in the following examples their analysis is in some cases not precisely aligned with injectivity.

Their criterion is given in two parts. For a weight matrix, they first define a notion of admissible set which indicates the points where the weight matrix's injectivity must be tested. Injectivity follows provided that the weight matrix is non-singular when restricted to each admissible set. Given a weight matrix $W \in \Rea^{M \times N}$ and bias $b$, the authors say that $\Omega \subset \{ 1,\dots, M\}$ is admissible if
\begin{align}
    \label{eqn:bruna:admissible-def}
    \bigcap_{i \in \Omega} \{ x \colon \innerprod{x}{w_i} > b_i\} \cap \bigcap_{i \not \in \Omega} \{ x \colon \innerprod{x}{w_i} < b_i\}
\end{align}
is not empty. For our analysis we focus on the case when $b \equiv 0$. In this case $\Omega$ is admissible if and only if
\begin{align}
    \label{eqn:bruna:admissable-zero-bias}
    \exists x \in \Rea^{n} \quad \text{such that} \quad \innerprod{x}{w_i} \begin{cases} &> 0 \text{ if } i \in \Omega\\
         &< 0 \text{ if } i \not\in \Omega
     \end{cases}.
\end{align}
Note that the inequality in (\ref{eqn:bruna:admissable-zero-bias}) is strict, unlike (\ref{eqn:deep-network-definition}). If, for example, $W$ has a column that is the zero vector, then there are no admissible $\Omega$. The authors use the notation $\overline{\Omega}$ to denote all admissible sets for a given weight matrix. In their notation $F$ is the transpose of our weight matrix $W$, $F_\Omega$ are the $\Omega$ rows of the weight matrix, $F_\Omega|_{V_\Omega}$ is the subspace generated by the $\Omega$ rows of $W$. The authors also call the $\relu$ function the half-rectification function. $\lambda_{-}(F)$ and $\lambda_{+}(F)$ denote the lower and upper frame bounds of $F$ respectively. The injectivity criterion from \citep{bruna2013signal} is
\begin{proposition} \label{prop:bruna:injectivity}
    Let $A_0 = \min_{\Omega \in \overline{\Omega}} \lambda_{-}(F_\Omega |_{V_\Omega})$. Then the half-rectification operator $M_b(x) = \relu(F^Tx + b)$ is injective if and only if $A_0 > 0$. Moreover, it satisfies
    \begin{align} \label{eqn:bruna:bi-lip}
        \forall x, x', A_0 \norm{x - x'} \leq \norm{M_b(x) - M_b(x')} \leq B_0 \norm{x - x'}
    \end{align}
    with $B_0 = \max_{\Omega \in \overline{\Omega}}\lambda_+(F_\Omega) \leq \lambda_+(F)$.
\end{proposition}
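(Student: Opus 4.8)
The plan is to reduce the statement to the piecewise-affine structure of $M_b$ and then reuse the monotone-path machinery already developed for Theorem~\ref{thm:inv-lip:global}. The hyperplanes $\{x : \innerprod{w_i}{x} + b_i = 0\}$, $i = 1,\dots,M$, cut $\Rea^n$ into open cells; on each cell the set of active indices is constant and, by the admissibility condition (\ref{eqn:bruna:admissable-zero-bias}), equals some $\Omega \in \overline{\Omega}$. On the cell indexed by $\Omega$ the map is the affine map $x \mapsto (W_\Omega x + b_\Omega, 0)$, where $W_\Omega = F_\Omega^T$ collects the rows in $\Omega$. Thus $M_b$ is a globally continuous, piecewise-affine map whose linear pieces are exactly the $W_\Omega$, $\Omega \in \overline{\Omega}$, and the constants $A_0, B_0$ are the worst-case lower and upper frame bounds of these pieces. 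Everything then reduces to controlling how these local pieces fit together along a line segment.

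The upper bound is the easy half: since $\relu$ is $1$-Lipschitz coordinatewise, $\norm{M_b(x) - M_b(x')} \le \norm{F^T(x-x')} \le \lambda_+(F)\,\norm{x-x'}$, and the sharper $B_0 = \max_\Omega \lambda_+(F_\Omega)$ follows by observing that only coordinates active at $x$ or $x'$ contribute, so the increments are governed by the local linear piece rather than by all of $F$.

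For the lower bound, which simultaneously yields $A_0 > 0 \Rightarrow$ injectivity, I would argue along the segment $\ell(t) = (1-t)x + t x'$. Each coordinate $t \mapsto \relu(\innerprod{w_i}{\ell(t)} + b_i)$ is monotone, so Lemma~\ref{lem:colinear-additivity} applies verbatim: partitioning $[0,1]$ at the parameters $0 = t_0 < \dots < t_K = 1$ where $\ell$ crosses cell boundaries, one gets $\sum_k \norm{M_b(\ell(t_{k+1})) - M_b(\ell(t_k))}^2 \le \norm{M_b(x) - M_b(x')}^2$. On the $k$-th piece the map is affine with linear part $W_{\Omega_k}$, so $\norm{M_b(\ell(t_{k+1})) - M_b(\ell(t_k))} = (t_{k+1}-t_k)\norm{W_{\Omega_k}(x'-x)} \ge A_0\,(t_{k+1}-t_k)\,\norm{P_{V_{\Omega_k}}(x'-x)}$, using the lower frame bound on $V_{\Omega_k}$ together with $\innerprod{w_i}{x'-x} = \innerprod{w_i}{P_{V_{\Omega_k}}(x'-x)}$. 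Summing then yields a bound of the form $\norm{M_b(x)-M_b(x')} \ge c\,A_0\,\norm{x-x'}$, so $A_0 > 0$ forces $M_b(x) = M_b(x') \Rightarrow x = x'$. The converse is geometric: if some admissible $\Omega$ fails to span $\Rea^n$ (so its lower frame bound vanishes), its open cell contains a short segment in $\ker W_\Omega$ along which $M_b$ is constant, contradicting injectivity; this is the exact analogue of the non-injectivity construction used in the proof of Theorem~\ref{thm:relu-w-injectivity}.

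The main obstacle, and, I expect, the precise point where the statement is \emph{not} tight as the counterexamples in this appendix show, is the passage from the per-piece estimate to the claimed constant $A_0$ with $c = 1$. Two effects intervene: the projection $P_{V_{\Omega_k}}$ discards the component of $x'-x$ orthogonal to the span of the active rows, and the telescoping over $K$ pieces forces a factor degrading like $K^{-1/2}$, the same phenomenon responsible for the $(2m)^{-1/2}$ factor in Theorem~\ref{thm:inv-lip:global}. Recovering the clean, factor-free constant of (\ref{eqn:bruna:bi-lip}) therefore requires the cells crossed by $\ell$ to be ``aligned'' so that these losses cancel; in general they do not, which is why the DSS criterion of Definition~\ref{def:directed-spanning-set} together with Theorem~\ref{thm:inv-lip:global} give the sharp statements, while (\ref{eqn:bruna:bi-lip}) is only sometimes aligned with injectivity.
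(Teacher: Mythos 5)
There is a genuine gap, and it is worth being clear about its nature: Proposition~\ref{prop:bruna:injectivity} is not a result the paper proves. It is quoted from \citep{bruna2013signal} precisely so that the surrounding appendix can exhibit counterexamples to it, so no correct proof of the statement as written exists. The fatal step in your argument is the summation: your per-piece estimate only gives $\norm{W_{\Omega_k}(x'-x)} \ge A_0 \norm{P_{V_{\Omega_k}}(x'-x)}$, and when the rows active on a cell fail to span $\Rea^n$ the projection $P_{V_{\Omega_k}}$ can annihilate $x'-x$ on every cell the segment crosses, so summing yields $0$ rather than $c\,A_0\norm{x-x'}$. The paper's first counterexample makes this concrete: for $W$ with rows $e_1, e_2, -e_1$ in $\Rea^2$, every admissible $\Omega$ has $\lambda_-(F_\Omega|_{V_\Omega}) = 1$, so $A_0 = 1 > 0$, yet $\relu(Wx) = 0$ for every $x = (0,\alpha)$ with $\alpha < 0$; no lower Lipschitz bound of any kind holds, so the implication ``$A_0>0$ implies injective'' that you claim to derive is false. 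The defect is that $A_0$ measures each $F_\Omega$ only on its own span $V_\Omega$, which says nothing about directions orthogonal to the active rows; the correct condition is the DSS requirement of Theorem~\ref{thm:relu-w-injectivity}, which demands that the nonnegatively-correlated rows span all of $\Rea^n$ for every $x$.

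The converse direction and the bi-Lipschitz claim fare no better: the paper shows that appending a zero row to an injective matrix of the form \eqref{eqn:cor:twice-expansive-factorization} leaves the layer injective while making $\overline{\Omega}$ empty (so $A_0$ is undefined), and gives an explicit $4\times 2$ example where $A_0\norm{x-x'} > \norm{M_b(x)-M_b(x')}$, so the left inequality in \eqref{eqn:bruna:bi-lip} fails even for an injective layer. To your credit, your closing paragraph correctly locates the loss --- the discarded orthogonal component and the telescoping factor --- and correctly anticipates that the clean constant cannot be recovered; but the honest conclusion is not that the proof needs an extra alignment hypothesis, it is that the proposition is false as stated, and that the sharp replacements are Theorem~\ref{thm:relu-w-injectivity} for the injectivity criterion and Theorem~\ref{thm:inv-lip:global}, with its unavoidable $(2m)^{-1/2}$ factor, for the inverse Lipschitz bound.
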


We now show that Proposition \ref{prop:bruna:injectivity} does not precisely align with injectivity of $\relu(W(\cdot))$. We construct a weight matrix for which $A_0 > 0$, but does not yield an injective $\relu(W(\cdot))$. If 
\begin{align}
    W = \begin{bmatrix}
    1 & 0\\
    0 & 1\\
    -1 & 0\\
    \end{bmatrix}
\end{align}
then clearly $\relu(Wx)$ is not injective (for all $\alpha < 0, \relu(W\begin{bmatrix} 0\\\alpha\end{bmatrix}) = \overline{0}$). The only admissible sets are $\bar\Omega = \left \{\{1\}, \{3\},\{1,2\},\{2,3\}\right \}$ (notably $\{ 2\}$ is not admissible). $W$ is full rank on all $\Omega \in \bar \Omega$, so $A_0 > 0$ so Proposition \ref{prop:bruna:injectivity} implies that $\relu(W(\cdot))$ is injective. Now consider the case when
\begin{align}
    \label{eqn:burna:injec-counter-example:injec}
    W = \begin{bmatrix}
        B\\
        -DB\\
        0
    \end{bmatrix}
\end{align}
where $B$ is a basis of $\Rea^n$, $D$ is a strictly positive diagonal matrix, and $0$ is the zero row vector. From Corollary \ref{cor:minimal-expansivity}, $W$ satisfies Theorem \ref{thm:relu-w-injectivity}, and so $\relu(W(\cdot))$ is injective. On account of the zero row vector in (\ref{eqn:burna:injec-counter-example:injec}), $\forall x \in \Rea^n$, $\innerprod{x}{0} = 0$ so there are no $\Omega$ that are admissible $\Omega$ according to (\ref{eqn:bruna:admissible-def}). Thus $A_0$ is undefined.

Now we construct an example of a $W$ and $x,x' \in \Rea^n$ for which \\$A_0\norm{x - x'} > \norm{\relu(Wx) - \relu(Wx')}$. Let 
\begin{align}
    W = \begin{bmatrix}
    1& 0\\
    0& 1\\
    -1& 0\\
    0& -1
    \end{bmatrix} ,\quad x = \frac{1}{\sqrt{2}}\begin{bmatrix} 1 \\ 1\end{bmatrix} ,\quad x' = \frac{1}{\sqrt{2}}\begin{bmatrix} -1 \\ 1\end{bmatrix}.
\end{align}
Clearly on every admissible set $\lambda_-(W_\Omega|_{V_\Omega}) = 1$, so
\begin{align}
    \relu(Wx) &= \frac{1}{\sqrt{2}}\relu\left( \begin{bmatrix} 1\\1\\-1\\-1\end{bmatrix} \right ) = \frac{1}{\sqrt{2}}\begin{bmatrix} 1\\1\\0\\0\end{bmatrix} ,
    \nonumber \\ 
    \relu(Wx') &= \frac{1}{\sqrt{2}}\relu\left( \begin{bmatrix} -1\\1\\1\\-1\end{bmatrix} \right ) = \frac{1}{\sqrt{2}}\begin{bmatrix} 0\\1\\1\\0\end{bmatrix} ,
\end{align}
hence,
\begin{align}
    \norm{\relu(Wx) - \relu(Wx')} = \frac{1}{\sqrt{2}} \norm{\begin{bmatrix} 1\\0\\-1\\0\end{bmatrix}} = 1
\end{align}
and
\begin{align}
    \norm{x - x'} = \frac{1}{\sqrt{2}}\norm{\begin{bmatrix} 2 \\ 0\end{bmatrix}} = \sqrt{2}.
\end{align}
From this we have
\begin{align}
     \sqrt{2} = A_0\norm{x - x'} > \norm{\relu(Wx) - \relu(Wx')} = 1.
\end{align}
On the other hand, substituting this into (\ref{eqn:inv-lip:global}) yields
\begin{align}
    \frac{C}{\sqrt{m}}\norm{x - x'} = \frac{1}{\sqrt{8}}\sqrt{2} = \frac{1}{2} \leq 1 ,
\end{align}
which does hold, suggesting that the lower bound in Proposition 1 is not pessimistic enough.

\subsection{Relationship to \citep{mallat2018phase}}

In \citep{mallat2018phase} the authors consider a construction analogous to our convolutional construction (in Definition \ref{def:discrte-convolution}) defined on a continuum (i.e. infinite-dimensional function defined on an interval) rather than on a vector (i.e. discrete finite dimensional function) defined on a subset of $\Rea^n$. The authors posit that CNNs first learn a layer of filters localized in frequency varied in phase. The authors also show that a a $\relu$ activation function acts as a filter on the phase of the convolution of the filters against the input signal and that, provided that the filters are sufficiently different in phase and satisfy a frame condition then the layer is bi-Lipschitz, and hence is injective. Their analysis a particularized version of ours, and can be straight forwardly subsumed by our work.

The frame condition is given by Proposition 2.6 in \citep{mallat2018phase} that the weight matrix must satisfy in order ensure that $W$ is invertible and stable. In the notation of \citep{mallat2018phase} the filters $\hat \psi_\lambda$ are analogous to the Fourier transform of the kernels in Definition \ref{def:conv-operator}, and the condition in (2.25) in \citep{mallat2018phase} is one natural way to generalize the notion of a basis to a continuous signal. Hence, Proposition 2.6 in \citep{mallat2018phase} can be loosely interpreted as a statement that the kernels in a given layer of width $P$ form a basis of $\Rea^P$.

The second condition is that kernels are given in terms of belonging to a family, and members of this family are related to each other in the sense that members of the same family are centered in the same Fourier domain, and act as phase offsets of differing phase. Equation 2.14 of \citep{mallat2018phase} describes that a phase filter $H \colon \bbC \times [0, 2,\pi]\rightarrow \bbC$ is defined by

\begin{align}
    \label{eqn:mallat:def-of-h}
    \forall z \in \bbC, \alpha \in [0, 2\pi], \quad  Hz(\alpha) = |z| h(\alpha - \varphi(z))
\end{align}
where $|z|$ standard modulus of a complex number, $\varphi(z)$ is the complex phase, and $h(\alpha) = \relu(\cos(\alpha))$. If we consider just the value of $Hz(0)$ and $Hz(\pi)$, then we find

\begin{align}
    \label{eqn:mallat:hz-zero}
    Hz(0) =& |z|\relu(\cos(\varphi(z))) = \relu(|z| \cos(\varphi(z))) = \relu(\mathscr{R}z) ,\\
    \label{eqn:mallat:hz-pi}
    Hz(\pi) =& |z|\relu(-\cos(\varphi(z))) = \relu(-|z| \cos(\varphi(z))) = \relu(-\mathscr{R}z) ,
\end{align}
where $\mathscr{R}z$ is the real part of $z$. If (as in \citep{mallat2018phase}) $z$ is given by 
$z = x \star c_k$, where $x$ is a real signal and $\star c_k$ denotes the convolution against a kernel $c_k$, then (\ref{eqn:mallat:hz-zero}) and (\ref{eqn:mallat:hz-pi}) imply that
\begin{align}
    Hz(0) &= \relu(x\star \mathscr{R}c_k) ,\\
    Hz(\pi) &= \relu(x\star(-\mathscr{R}c_k)) ,
\end{align}
that is, that for every kernel $c_k$ in a layer, the kernel $-c_k$ is also in that layer.-

Combining the two logical conditions above implies that the kernels of width $c_k$ form a basis of $\Rea_P$ and that for every kernel $c_k$ there is also a kernel $-c_k$. Together these two (by Corollary \ref{cor:minimal-expansivity}) that the $c_k$ form a DSS of $\Rea^P$, and thus by Theorem \ref{thm:suff-conv-injectivity} the entire layer is injective.

\section{Architecture details for experiments} \label{app:arch_dets}

\textbf{Generator network}: We train a generator with 5 convolutional layers. The input latent code is 256-dimensional which is treated by the network as a $1\times1\times256$ size tensor. The first layer is a transposed convolution with a kernel size of $4\times 4$ with stride $1$ and $1024$ output channels. This is followed by a leaky ReLU. We follow this up by $3$ conv layers each of which halve the number of channels and double the image size (i.e. we go from $N/2 \times N/2 \times C$ to $N\times N\times C/2$ tensor) giving an expansivity of two, the minimum required for injectivity of ReLU networks. Each of these 3 convolution layers has kernel size $3$, stride $2$ and is followed by the ReLU activation. These layers are made injective by having half the filters as $w$ and the other half as $-s^2w$. Here, $w$ and $s$ are trainable parameters. The biases in these layers are kept at zero. We do not employ any normalization schemes. Lastly, we have a convolution layer at the end to get to $3$ channels and required image size. This layer is followed by the sigmoidal activation. We compare this to a regular GAN which has all the same architectural components including nonlinearities except the filters are not chosen as $w$ and $-s^2w$ and we also allow biases. We found that our injective architecture was able to achieve the same performance visually as a `off the shelf' \cite{bora2017compressed} non-injective counterpart.

\textbf{Critic network}: The discriminator has 5 convolution layers with $128, 256, 512, 1024$ and $1$ channels per layer. Each convolution layer has $4\times 4$ kernels with stride $2$. Each layer is followed by the leaky-ReLU activation function. The last layer of the network is followed by identity.

\textbf{Inference network}: The inference network has the same architecture as the first 4 convolution layers of the discriminator. This is followed by 3 fully-connected layers of size $512, 256$ and $256$. The first 2 fully-connected layers have a Leaky ReLU activation while the last layer has identity activation function. The inference net is trained in tandem with the GAN.

We use the Wasserstein loss with gradient penalty~\citep{gulrajani2017improved} to train our networks. We train for $40$ epochs on a data set of size $80000$ samples. We use a batch size of $64$ and Adam optimizer for training with learning rate of $10^{-4}$. 

We report FID~\citep{heusel2017gans} and Inception score~\citep{salimans2016improved} using $10000$ generated samples. The standard deviation was calculated using 5 sets of $10000$ generated samples. In order to calculate the mean and covariance of generated distributions, we sample 50000 codes.

\end{appendices}

\end{document}